\definecolor{Gray}{gray}{.25}
\def\MM{\mathcal{M}}
\def\SSS{\mathcal{S}}
\def\XX{\mathcal{X}}
\def\ind{\mathbbm{1}}
\def\dpptester{\texttt{DPP-\textsc{Tester}}}
\def\dpptesterr{\texttt{DPP-\textsc{Tester2}}}
\newcommand{\J}{\mathcal{J}}
\newcommand{\dpp}[2]{\mathbf{Pr}_{#1}[#2]}
\title{Testing Determinantal Point Processes}
\author{%
  Khashayar Gatmiry
  \\MIT\\
  \texttt{gatmiry@mit.edu} 
   \and
   Maryam Aliakbarpour \\
   MIT\\
   \texttt{maryama@mit.edu}\\
   \and
   Stefanie Jegelka\\
   MIT\\
   \texttt{stefje@mit.edu}
}
\date{}
\newtheorem{theorem}{Theorem}
\newtheorem{lemma}{Lemma}
\newtheorem{definition}{Definition}
\renewcommand\Pr[2][]{
\ensuremath{\mathrm{\mathbf{Pr}}_{#1} 
\pmb{\left[\vphantom{#2}\right.}
{#2}
\pmb{\left.\vphantom{#2}\right]}
}}
\newcommand\numberthis{\addtocounter{equation}{1}\tag{\theequation}}
\newcommand*{\inlineequation}[2][]{%
  \begingroup

\newcommand\E[2][]{
\ensuremath{\mathrm{\mathbf{E}}_{#1} 
\pmb{\left[\vphantom{#2}\right.}
{#2}
\pmb{\left.\vphantom{#2}\right]}
}}

    \refstepcounter{equation}%
    \ifx\\#1\\%
    \else
      \label{#1}%
    \fi
    \relpenalty=10000 %
    \binoppenalty=10000 %
    \ensuremath{%
      #2%
    }%
    ~\@eqnnum
  \endgroup
}
\def\accept{{\fontfamily{cmss}\selectfont accept}\xspace}
\def\reject{{\fontfamily{cmss}\selectfont reject}\xspace}
\newcommand{\sj}[1]{\textcolor{blue}{SJ: #1}}
\newcommand{\kh}[1]{\textcolor{red}{KH: #1}}
\newenvironment{customthm}[1]
  {\innercustomthm}
  {\endinnercustomthm}
\newenvironment{customdef}[1]
  {\innercustomdef}
  {\endinnercustomthm}
\begin{document}

\maketitle

\begin{abstract}
  Determinantal point processes (DPPs) are popular probabilistic models of diversity. In this paper, we investigate DPPs from a new perspective: property testing of distributions. Given sample access to an unknown distribution $q$ over the subsets of a ground set, we aim to distinguish whether $q$ is a DPP distribution, or $\epsilon$-far from all DPP distributions in $\ell_1$-distance. In this work, we propose the first algorithm for testing DPPs. Furthermore, we establish a matching lower bound on the sample complexity of DPP testing.
  This lower bound also extends to showing a new hardness result for the problem of testing the more general class of log-submodular distributions. 
\end{abstract}

\section{Introduction}
Determinantal point processes (DPPs) are a rich class of discrete probability distributions that were first studied in the context of quantum physics~\citep{macchi1975coincidence} and random matrix theory~\cite{dyson1962statistical}. Initiated by the seminal work of \citet{kulesza2012determinantal}, DPPs have gained a lot of attention in machine learning, due to their ability to naturally capture notions of diversity and repulsion. Moreover, they are easy to define via a similarity (kernel) matrix, and, as opposed to many other probabilistic models, offer tractable exact algorithms for marginalization, conditioning and sampling \cite{anari16,hough06,kulesza2012determinantal,li16sr}. Therefore, DPPs have been explored in a wide range of applications, including video summarization \cite{gong2014diverse,NIPS2014_5413}, image search~\cite{kulesza2011k, affandi2014learning},  document and timeline summarization~\cite{lin2012learning}, recommendation \cite{wilhelm18youtube}, feature selection in bioinformatics~\cite{batmanghelich2014diversifying}, modeling neurons~\cite{snoek2013determinantal}, and matrix approximation \cite{derezinski20survey,deshpande06,li16kernel}.

A \emph{Determinantal Point Process} is a distribution over the subsets of a ground set $[n] = \{1,2, \ldots n\}$, and parameterized by a 
\emph{marginal kernel} matrix $K \in \mathbb{R}^{n \times n}$ with eigenvalues in $[0,1]$, whose $(i,j)$th entry expresses the similarity of items $i$ and $j$. Specifically, 
the marginal probability that a set $A \subseteq [n]$ is observed in a random $\J \sim \dpp{K}{.}$ is $\mathbb P(A \subseteq \mathcal J) = \det(K_A)$, where $K_A$ is the principal submatrix of $K$ indexed by $A$. Hence, similar items are less likely to co-occur in $\J$.

Despite the wide theoretical and applied literature on DPPs, one question has not yet been addressed: \emph{Given a sample of subsets, can we test whether it was generated by a DPP?} This question arises, for example, when trying to decide whether a DPP may be a suitable mathematical model for a dataset at hand. To answer this question, we study DPPs from the perspective of \emph{property testing}. Property testing aims to decide whether a given distribution has a property of interest, by observing as few samples as possible. In the past two decades, property testing has received a lot of attention, and questions such as testing uniformity, independence, identity to a known or an unknown given distribution, and monotonicity have been studied in this framework \cite{Clement_survey, Rub12}.

More precisely, we ask \emph{How many samples from an unknown distribution are required to distinguish, with high probability, whether it is a DPP or $\epsilon$-far from the class of DPPs in $\ell_1$-distance?}
Given the rich mathematical structure of DPPs, one may hope for a tester that is exceptionally efficient. Yet, we show that testing is still not easy, and establish a lower bound of $ \Omega(\sqrt{N}/\epsilon^2)$ for the sample size of any valid tester, where $N = 2^n$ is the size of the domain. 
In fact, this lower bound applies to the broader class of \emph{log-submodular} measures, and may hence be of wider interest given the popularity of submodular set functions in machine learning. Even more generally, the lower bound holds for testing \emph{any} subset of log-submodular distributions that include the uniform measure, and reveals a large gap between sample access and query access. 

We note that the $\sqrt{N}$ dependence on the domain size is not uncommon in distribution testing, since it is required even for testing simple structures such as uniform distributions~\cite{Paninski:08}. 
However, achieving the optimal sample complexity is nontrivial. We provide the first algorithm for testing DPPs; it uses $\Tilde{O}(\sqrt{N}/\epsilon^2)$ samples.
This algorithm achieves the lower bound and hence settles the complexity of testing DPPs. Moreover, we show how prior knowledge on bounds of the spectrum of $K$ or its entries $K_{ij}$ can improve logarithmic factors in the sample complexity.
Our approach relies on {\em testing via learning}. As a byproduct, our algorithm is the first to provably learn a DPP in $\ell_1$-distance, while previous learning approaches only considered parameter recovery in $K$ \cite{urschel2017learning,pmlr-v65-brunel17a}, which does not imply recovery in $\ell_1$-distance. 

In short, we make the following contributions:
\vspace{-5pt}
\begin{itemize}
    \item We show a lower bound of $\Omega(\sqrt{N}/\epsilon^2)$ for the sample complexity of testing any subset of the class of \emph{log-submodular} measures which includes the uniform measure, implying the same lower bound for testing DPP distributions and SR measures. 
    
    \item We provide the first tester for the family of DPP distributions using $\Tilde{O}(\sqrt{N}/\epsilon^2)$ samples. The sample complexity is optimal with respect to $\epsilon$ and the domain size $N$, up to logarithmic factors, and does not depend on other parameters.
    Additional assumptions on $K$ can improve the algorithm's complexity.
    
    \item As a byproduct of our algorithm, we give the first algorithm to learn DPP distributions in both $\ell_1$ and $\chi^2$ distances.
\end{itemize}

\section{Related work}\label{sec:related}
\textbf{Distribution testing.} 
{\em Hypothesis testing} is a classical tool in statistics for inference about the data and model~\cite{NeymanP1933, lehmann2005testing}. About two decades ago, the framework of {\em distribution testing} was introduced, to view such 
statistical problems from a computational perspective~\cite{GR00,BFRSW}. This framework is a branch of {\em property testing}~\cite{RubinfeldS96}, and focuses mostly on discrete distributions. Property testing analyzes the non-asymptotic performance of algorithms, i.e., for finite sample sizes.
By now, distribution testing has been studied extensively for properties such as uniformity~\cite{Paninski:08}, identity to a known~\cite{BatuFFKRW, ADK15, diakonikolasGPP18} or unknown distribution~\cite{ChanDVV14, DiakonikolasK:2016}, independence~\cite{BatuFFKRW}, monotonicity~\cite{BatuKR04, aliakbarpourGPRY18}, k-modality~\cite{DaskalakisDS:2014}, entropy estimation~\cite{entropy,  WuY16Entropy}, and support size estimation~\cite{supportSizeDana, VV11, wu2019chebyshev}. The surveys \cite{Clement_survey, Rub12} provide further details. 

\textbf{Testing submodularity and real stability.}  Property testing also includes testing properties of functions. As opposed to distribution testing, where observed samples are given, testing functions allows an active query model: given query access to a function $f: \mathcal X \rightarrow \mathcal Y$, the algorithm picks points $x \in \XX$ and obtains values $f(x)$. The goal is again to determine, with as few queries as possible, whether $f$ has a given property or is $\epsilon$-far from it. Closest to our work in this different model is the question of testing submodularity, in Hamming distance and $\ell_p$-distance~\cite{chakrabarty2012testing,seshadhri2014submodularity, feldman2016optimal, blais2016testing}, since any DPP-distribution is log-submodular. In particular, \citet{blais2016testing} show that testing submodularity with respect to any $\ell_p$ norm is feasible with a constant number of queries, independent of the function's domain size. The vast difference between this result and our lower bound for log-submodular distributions lies in the query model -- given samples versus active queries -- and demonstrates the large impact of the query model.
DPPs also belong to the family of \emph{strongly Rayleigh} measures \cite{borcea09}, whose generating functions are real stable polynomials. \citet{raghavendra17} develop an algorithm for testing real stability of bivariate polynomials, which, if nonnegative, correspond to distributions over two items.

\textbf{Learning DPPs.} The problem of learning DPPs has been of great interest in machine learning. Unlike testing, in learning one commonly assumes that the underlying distribution is indeed a DPP, and aims to estimate the marginal kernel $K$. It is well-known that maximum likelihood estimation for DPPs is a highly non-concave optimization problem, conjectured to be NP-hard~\cite{pmlr-v65-brunel17a, KuleszaThesis}. To circumvent this difficulty, previous work imposes additional assumptions, e.g., a parametric family for $K$~\cite{kulesza2011k,10.5555/3020548.3020597,  affandi2014learning, kulesza2012determinantal, bardenet2015inference,lavancier2015determinantal}, or low-rank structure \cite{gartrell2016bayesian, gartrell2017low, pmlr-v84-dupuy18a}. A variety of optimization and sampling techniques have been used, e.g., variational methods~\cite{djolonga2014map, gillenwater2014expectation, bardenet2015inference}, MCMC~\cite{affandi2014learning}, first order methods~\cite{kulesza2012determinantal}, and fixed point algorithms~\cite{mariet2015fixed}. 
\citet{pmlr-v65-brunel17a} analyze the asymptotic convergence rate of the Maximum likelihood estimator. To avoid likelihood maximization, \citet{urschel2017learning} propose an algorithm based on the method of moments, with statistical guarantees. Its complexity is determined by the \emph{cycle sparsity} property of the DPP. We further discuss the implications of their result in our context in Section~\ref{sec:mainresults}. Using similar techniques,~\citet{brunel2018learning} considers learning the class of signed DPPs, i.e., DPPs that allow skew-symmetry, $K_{i,j} = \pm K_{j,i}$.

\section{Notation and definitions} \label{sec:notations}
Throughout the paper, we consider discrete probability distributions over subsets of a \emph{ground set} $[n] = \{1, 2, \ldots, n\}$, i.e., over the power set $2^{[n]}$ of size $N\coloneqq2^n$. We refer to such distributions via their probability mass function $p:2^{[n]} \rightarrow \mathbb{R}^{\geq 0}$ satisfying $\sum_{S \subseteq [n]} p(S) = 1$. For two distributions $p$ and $q$, we use $\ell_1(q,p) = \frac{1}{2}\sum_{S \subseteq [n]} |q(S) - p(S)|$ to indicate their $\ell_1$ (total variation) distance, and $\chi^2(q,p) = \sum_{S \subseteq [n]} \frac{(q(S)-p(S))^2}{p(S)}$ to indicate their $\chi^2$-distance.

\paragraph{Determinantal Point Processes (DPPs).} A DPP is a discrete probability distribution parameterized by a positive semidefinite kernel matrix $K \in \mathbb{R}^{n \times n}$, with eigenvalues in $[0,1]$. More precisely, the marginal probability for any set $S \subseteq [n]$ to occur in a sampled set $\J$ is given by the principal submatrix indexed by rows and columns in $S$:  $\Pr[\mathcal J \sim K]{S \subseteq \mathcal J} = \det(K_{S}).$
We refer to the probability mass function of the DPP by $\Pr[K]{J} = \Pr[\mathcal J \sim K]{\mathcal J = J}$. A simple application of the inclusion-exclusion principle reveals an expression in terms of the complement $\bar{J}$ of $J$:

\begin{equation}
\Pr[K]{J} = |\det(K - I_{\bar{J}})|. \label{eq:atomprobabilities}
\end{equation}

\textbf{Distribution testing.} We mathematically define a \emph{property} $\mathcal P$ to be a set of distributions. A distribution $q$ has the property $\mathcal P$ if $q \in \mathcal P$.
We say two distributions $p$ and $q$ are {\em $\epsilon$-far} from ({\em $\epsilon$-close} to) each other, if and only their $\ell_1$-distance is at least (at most) $\epsilon$. Also, $q$ is $\epsilon$-far from $\mathcal P$ if and only if it is $\epsilon$-far from any distribution in $\mathcal P$. We define the \emph{$\epsilon$-far set} of $\mathcal P$ to be the set of all distributions that are $\epsilon$-far from $\mathcal P$. We say an algorithm is an {\em $(\epsilon, \delta)$-tester} for property $\mathcal P$ if, upon receiving samples from an unknown distribution $q$, the following is true with probability at least $1-\delta$:
\vspace{-5pt}
\begin{itemize}\setlength{\itemsep}{0pt}
    \item If $q$ has the property $\mathcal P$, then the algorithm outputs \accept.
    \item If $q$ is $\epsilon$-far from $\mathcal P$, then the algorithm outputs \reject.
\end{itemize}
\vspace{-5pt}
We refer to  $\epsilon$ and $\delta$ as {\em proximity parameter} and {\em confidence parameter}, respectively. 
Note that if we have an $(\epsilon, \delta)$-tester for a property with a confidence parameter $\delta < 0.5$, then we can achieve an $(\epsilon, \delta')$-tester for an arbitrarily small $\delta'$ by multiplying the sample size by an extra factor of $\Theta(\log (\delta/{\delta'}))$. This {\em amplification} technique~\cite{dubhashi1998concentration} is a direct implication of the Chernoff bound when we run the initial tester $\Theta(\log (\delta/{\delta'}))$ times and take the majority output as the answer.

\section{Main results} \label{sec:mainresults}
We begin by summarizing our main results, and explain more proof details in Sections~\ref{sec:learning} and \ref{section:lowerbound}.

\textbf{Upper bound.} Our first result is the first upper bound on the sample complexity of testing DPPs.

\begin{theorem}[Upper Bound] \label{thm:upperboundtheorem}
Given samples from an unknown distribution $q$ over $2^{[n]}$, there exists an $(\epsilon, 0.99)$-tester for determining whether $q$ is a DPP or it is $\epsilon$-far from all DPP distributions. The tester uses
\begin{align}
O(C_{N,\epsilon} \sqrt N/\epsilon^{2} ) \label{eq:upperboundcomplexity}    
\end{align}
samples with logarithmic factors 
$C_{N,\epsilon} =  \log^2(N) (\log(N) + \log(1/\epsilon))$.
\end{theorem}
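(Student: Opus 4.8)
The strategy is \emph{testing via learning}. We draw $O(C_{N,\epsilon}\sqrt N/\epsilon^2)$ samples and split them into two groups. In the first, \textbf{learning} phase we run an estimator that, \emph{under the promise that $q$ is a DPP}, returns (with probability $0.999$) a kernel $\hat K$ whose induced DPP $\hat p \coloneqq \Pr[\hat K]{\cdot}$ is close to $q$ in the \emph{strong} $\chi^2$ sense, $\chi^2(q,\hat p)\le \epsilon^2/100$; moreover $\hat p$ is always a bona fide DPP (if the raw estimate is not a valid marginal kernel we project it onto the set of symmetric PSD matrices with spectrum in $[0,1]$, which does not increase its distance to the true $K$). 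In the second, \textbf{testing} phase we treat the explicit description of $\hat p$ --- every atom probability is computable from \eqref{eq:atomprobabilities} as $\Pr[\hat K]{J}=|\det(\hat K-I_{\bar J})|$ --- as a reference distribution and run the $\chi^2$-based identity tester of \citet{ADK15}: using $O(\sqrt N/\epsilon^2)$ samples it accepts when $\chi^2(q,\hat p)\le\epsilon^2/100$ and rejects when $\ell_1(q,\hat p)>\epsilon/2$, with probability $0.999$, for an arbitrary explicit reference distribution. We output the verdict of this tester. Correctness is a two-case argument: if $q$ is a DPP then the learner guarantees $\chi^2(q,\hat p)\le\epsilon^2/100$ and the identity tester accepts; if $q$ is $\epsilon$-far from all DPPs then, since $\hat p$ is itself a DPP, $\ell_1(q,\hat p)\ge\epsilon>\epsilon/2$ and the identity tester rejects. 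A union bound over the $O(1)$ failure events gives confidence $0.99$. The identity-testing phase contributes only $O(\sqrt N/\epsilon^2)$ samples; the learning phase accounts for the factor $C_{N,\epsilon}$, as explained below (and, as a byproduct, yields the first $\ell_1$/$\chi^2$ learner for DPPs).

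\textbf{Realizing the learner.} A DPP is determined, up to the diagonal-similarity/sign ambiguities that do not change the distribution, by the $O(n^2)=O(\log^2 N)$ quantities $K_{ii}=\Pr[\mathcal J\sim K]{i\in\mathcal J}$ and $|K_{ij}|$, which in turn are read off from the first- and second-order inclusion probabilities $\det(K_S)=\Pr[\mathcal J\sim K]{S\subseteq\mathcal J}$, $|S|\le 2$. Each such inclusion probability is an empirical mean, so in the spirit of the method-of-moments estimator of \citet{urschel2017learning} we recover a candidate $\hat K$ to target entrywise accuracy $\gamma$ from $\tilde O(1/\gamma^2)$ samples, paying a $\log\log N$ overhead for a union bound over the $O(\log^2 N)$ entries plus a mild factor to resolve tiny inclusion probabilities (where additive accuracy $\gamma$ forces $\Omega(1/\gamma)$ samples). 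Projecting $\hat K$ onto valid kernels as above keeps $\hat p$ a genuine DPP, so the far-case analysis needs nothing more.

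\textbf{The main obstacle} --- and the technical heart of the proof --- is to certify that $\hat K$ being entrywise $\gamma$-close to $K$ forces $\chi^2(q,\hat p)\le\epsilon^2/100$ for a $\gamma$ that is not absurdly small. The difficulty is that $\chi^2$ is governed by the \emph{ratios} $\Pr[K]{J}/\Pr[\hat K]{J}=|\det(K-I_{\bar J})|/|\det(\hat K-I_{\bar J})|$, which can be wildly unstable: a DPP may have atoms as small as $2^{-\Theta(n^2)}$ (e.g. $K$ diagonal with entries $2^{-i}$) or exactly $0$ (when $K$ has extremal eigenvalues), so a crude bound such as $\chi^2(q,\hat p)\le\ell_1(q,\hat p)/\min_J\hat p(J)$ would demand an impossibly small $\gamma$. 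Instead I would carry out a per-atom analysis, splitting atoms according to whether the relative error $|\Pr[\hat K]{J}/\Pr[K]{J}-1|$ is small and bounding the resulting groups separately, using (i) a perturbation bound for $|\det(\hat K-I_{\bar J})|$ relative to $|\det(K-I_{\bar J})|$ in terms of $\gamma$ and the smallest nonzero singular value of $K-I_{\bar J}$, and (ii) a clipping step that snaps near-extremal eigenvalues of $\hat K$ to $\{0,1\}$, so that the supports of $q$ and $\hat p$ coincide and the surviving atoms are bounded below, at a controlled $\ell_1$ cost for the clip. Balancing the clipping threshold, the relative-error split, and $\gamma$ forces $\gamma$ of order $\epsilon/(\sqrt N\,\mathrm{poly}(n))$: the $\sqrt N$ ties the learner to the global scale of the problem, $\mathrm{poly}(n)=\log^{O(1)}N$ counts the interacting kernel parameters, and $\log(1/\gamma)=\Theta(\log N+\log(1/\epsilon))$ is the precision/confidence overhead --- together producing exactly $C_{N,\epsilon}=\log^2 N(\log N+\log(1/\epsilon))$, and the total stays within \eqref{eq:upperboundcomplexity}. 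Finally, a priori bounds on the spectrum or on the entries of $K$ remove or shrink the clipping step and relax the precision requirement, which is how such prior knowledge shaves logarithmic factors off \eqref{eq:upperboundcomplexity}.
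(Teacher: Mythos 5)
Your high-level architecture matches the paper's (learn a DPP, project onto valid kernels, run the robust $\chi^2$--$\ell_1$ identity test of \cite{ADK15}, two-case correctness via the fact that the reference is itself a DPP, and a determinant-perturbation/smallest-singular-value analysis to convert kernel closeness into $\chi^2$ closeness). However, the realization of the learner --- which you correctly identify as the technical heart --- has two genuine gaps. First, your claim that the distribution is determined, ``up to the diagonal-similarity/sign ambiguities that do not change the distribution,'' by $K_{ii}$ and $|K_{ij}|$ is false: the signs of the off-diagonal entries matter beyond conjugation by a $\pm 1$ diagonal matrix (e.g.\ the cycle products $K_{i_1 i_2}K_{i_2 i_3}\cdots K_{i_k i_1}$ are invariants of the $DKD$ class and affect higher-order marginals), so first- and second-order inclusion probabilities do not identify the distribution. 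Recovering the signs is exactly what forces \cite{urschel2017learning} to use higher-order moments, at a sample cost of $\ell(4/\alpha)^{2\ell}\log n$ in the cycle sparsity $\ell$, which can vastly exceed $\sqrt N/\epsilon^2$; the paper's Theorem 6 records precisely this as the penalty of the single-estimate route you propose. Second, your precision accounting does not close: you need entrywise accuracy $\gamma = \epsilon/(\sqrt N\,\mathrm{poly}(n))$ but estimate each entry from $\tilde O(1/\gamma^2)$ samples, which is $\tilde O(N/\epsilon^2)$ --- a factor $\sqrt N$ over budget, not a polylogarithmic factor.

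The paper escapes both problems with one device that is absent from your proposal: instead of committing to a single $\hat K$, it estimates only the magnitudes $|K^*_{ij}|$ to the coarse accuracy $2\sqrt{\xi\epsilon}$ (with $\xi \approx N^{-1/4}$) that $\sqrt N/\epsilon^2$ samples actually afford, then \emph{brackets} each confidence interval (for both signs, plus the zero option) into $\varsigma$ grid points of width $\wp = \epsilon\zeta/(100n^2)$, and forms the set $\mathcal M$ of all $(2\varsigma+1)^{n^2}$ projected combinations. Some $\tilde K\in\mathcal M$ is then guaranteed entrywise $\wp$-close to $K^*$ with the correct signs, and the algorithm runs the $\chi^2$--$\ell_1$ test against \emph{every} member of $\mathcal M$, paying only a $\log|\mathcal M| = O(n^2\log\varsigma)$ amplification factor --- which is exactly where $C_{N,\epsilon}$ comes from. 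Your clipping of near-extremal eigenvalues is in the same spirit as the paper's coupling argument (Lemma 4, replacing $K^*$ by $\Pi_{\bar z}(K^*)$ at total-variation cost $0.005$ over the whole sample), but without the candidate-set construction the learning phase as you describe it cannot be carried out within the stated sample budget.
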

Importantly, the sample complexity of our upper bound grows as $\tilde{O}(\sqrt{N}/\epsilon^2)$, which is optimal up to a logarithmic factor (Theorem~\ref{thm:lowerboundtheorem}).
With additional assumptions on the spectrum and entries of $K$, expressed as \emph{$(\alpha, \zeta)$-normal} DPPs, we obtain a refined analysis.
\begin{definition} \label{def:normalitydefinition}
  For $\zeta \in [0, 0.5]$ and $\alpha \in [0,1]$, a DPP with marginal kernel $K$ is \emph{$(\alpha, \zeta)$-normal} if: 
  \vspace{-5pt}
\begin{enumerate}\setlength{\itemsep}{0pt}
    \item The eigenvalues of $K$ are in the range $[\zeta, 1 - \zeta]$; and
    \item For $i,j \in [n]: K_{i,j} \neq 0 \Rightarrow |K_{i,j}| \geq \alpha$.
\end{enumerate}
\end{definition}
The notion of $\alpha$-normal DPPs was also used in~\cite{urschel2017learning}.
Since $K$ has eigenvalues in $[0,1]$, its entries $K_{i,j}$ are at most one. Hence, we always assume $0 \leq \zeta \leq 0.5$ and $0 \leq \alpha \leq 1$. 

\begin{lemma}\label{prop:refined}
  For $(\alpha, \zeta)$-normal DPPs, with knowledge of $\alpha$ and $\zeta$, the factor in Theorem~\ref{thm:upperboundtheorem} becomes $C'_{N,\epsilon, \zeta, \alpha} =  \log^2(N) (1 + \log(1/\zeta) + \min\{ \log (1/\epsilon) , \log (1/\alpha)\})$.
\end{lemma}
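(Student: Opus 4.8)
The plan is to re-run the proof of Theorem~\ref{thm:upperboundtheorem} and re-derive only the two numerical inputs that generate the factor $C_{N,\epsilon}$, now feeding in $(\alpha,\zeta)$-normality. Recall that the tester of Theorem~\ref{thm:upperboundtheorem} follows the test-via-learning paradigm: it builds a finite net $\NN$ of candidate marginal kernels so that every DPP is $O(\epsilon)$-close (in $\chi^2$, hence in $\ell_1$) to the DPP of some $\widehat K\in\NN$, runs a tolerant identity-type sub-test of cost $O(\sqrt N/\epsilon^2)$ against each candidate, and accepts iff one passes. The union bound over $\NN$ multiplies the base cost by $\log|\NN|$, and since a kernel carries $\Theta(n^2)=\Theta(\log^2 N)$ real parameters, constrained to a region of diameter $O(1)$ and discretized at resolution $\eta$, one gets $\log|\NN|=\Theta(\log^2 N\cdot\log(1/\eta))$. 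In the general case two things force $\log(1/\eta)=\Theta(\log N+\log(1/\epsilon))$: the dynamic range / conditioning of the kernel (equivalently of its $L$-ensemble $L=K(I-K)^{-1}$) can be as large as $\mathrm{poly}(N)$, contributing the $\log N$; and one must pin down the entries to additive precision $\sim\mathrm{poly}(\epsilon/N)$, contributing the $\log(1/\epsilon)$. So the task reduces to showing that, for $(\alpha,\zeta)$-normal $K$ with $\alpha,\zeta$ known, one may instead take $\log(1/\eta)=O(1+\log(1/\zeta)+\min\{\log(1/\epsilon),\log(1/\alpha)\})$.

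For the $\zeta$ part I would use that $\mathrm{spec}(K)\subseteq[\zeta,1-\zeta]$ gives $\mathrm{spec}(L)\subseteq[\zeta/(1-\zeta),(1-\zeta)/\zeta]$, so $L$ has condition number at most $1/\zeta^2$, and via $\Pr[K]{J}=\det(L_J)/\det(I+L)$ together with Cauchy interlacing, $\Pr[K]{J}\ge\zeta^{2n}$ for every $J$ while the perturbation of $\Pr[K]{\cdot}$ under a change of $K$ is governed purely by these $\zeta$-dependent conditioning quantities rather than by $n$; feeding this into the stability estimate of the learning step replaces the $\log N$ term by $\log(1/\zeta)$. For the $\alpha$ part I would observe that at resolution below $\alpha$ the zero-pattern $\{(i,j):K_{i,j}\neq 0\}$ is recovered exactly, so refining the net below scale $\min\{\epsilon,\alpha\}$ is unnecessary: when $\epsilon\le\alpha$ the magnitudes need not be resolved finer than $\alpha$ for an $O(\epsilon)$-close candidate, which caps the $\epsilon$-dependence at $\log(1/\alpha)$. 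Combining the two yields $\log(1/\eta)=O(1+\log(1/\zeta)+\min\{\log(1/\epsilon),\log(1/\alpha)\})$, hence $C'_{N,\epsilon,\zeta,\alpha}$.

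The main obstacle is the accounting: a naive entrywise Lipschitz bound on $\det(K-I_{\bar J})$ summed over all $N$ atoms reintroduces a $2^{\Theta(n)}$ (or at least $\mathrm{poly}(n)$) factor, which would degrade $\log(1/\zeta)$ to $n\log(1/\zeta)$; the savings are real only if the perturbation estimate is carried out in the $L$-ensemble coordinates (or directly on $\chi^2(\Pr[K]{\cdot},\Pr[\widehat K]{\cdot})$) so that the $\zeta$-conditioning bounds are used in place of triangle-inequality blow-ups --- exactly the kind of estimate done in the learning component of Section~\ref{sec:learning}. A secondary check is that restricting $\NN$ to $(\alpha,\zeta)$-normal kernels is legitimate (immediate from Definition~\ref{def:normalitydefinition}, since $q$ is assumed to be such a DPP and $\alpha,\zeta$ are known), that the two regimes $\epsilon\le\alpha$ and $\epsilon>\alpha$ glue together into the stated $\min$, and that the tolerant sub-test plus amplification still cost only constant factors under the refined parameters; the union bound and the $\chi^2\!\to\!\ell_1$ passage are unchanged from Theorem~\ref{thm:upperboundtheorem}.
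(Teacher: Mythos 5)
Your overall architecture is the paper's: a finite set $\MM$ of candidate kernels, a tolerant ($\chi^2$ vs.\ $\ell_1$) identity test of cost $O(\sqrt N/\epsilon^2)$ against each candidate, amplification by $\Theta(\log|\MM|)$ plus a union bound, so that the whole lemma reduces to bounding $\log|\MM| = n^2\cdot\log(\text{\#candidates per entry})$. Your treatment of the $\zeta$-dependence is also in the right spirit, though you only assert the key relative perturbation bound rather than prove it: the paper gets it by lower-bounding $\sigma_n(K-I_{\bar J})\ge\zeta(1-\zeta)/\sqrt2$ uniformly in $J$ (Lemma~\ref{smallest-singular-value-lemma}) and feeding this into a determinant perturbation inequality (Lemma~\ref{lem:perturbation}), which yields $|q(J)-\tilde p(J)|\le O(\epsilon)\,\tilde p(J)$ atom by atom once the entries are known to precision $\wp=\epsilon\zeta/(100n^2)$; note this bound is \emph{not} $n$-free — it carries an explicit factor $n\|E\|_2/\sigma_n\cdot(1+\|E\|_2/\sigma_n)^{n-1}$ — but the $\mathrm{poly}(n)$ loss is absorbed into the precision and costs only an additive $O(\log n)$ in the log of the net size, so your worry about an $n\log(1/\zeta)$ blow-up is resolved by demanding precision $\propto 1/n^2$, not by finding an $n$-independent estimate.

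The genuine gap is in your mechanism for the $\min\{\log(1/\epsilon),\log(1/\alpha)\}$ term. You propose that when $\epsilon\le\alpha$ one may \emph{coarsen} the net to resolution $\alpha$ because the zero-pattern of $K$ is then determined. This fails: the distribution depends on the actual magnitudes of the entries, not just their support, and the perturbation analysis requires entrywise precision $\wp=\epsilon\zeta/(100n^2)$ regardless of $\alpha$. A candidate whose entries are only $\alpha$-accurate with $\alpha\ge\epsilon$ is generically $\Omega(\alpha)$-far in $\ell_1$ (already for diagonal kernels), so no candidate in your coarsened net would pass the $\chi^2$ test when $q$ is a DPP, and completeness breaks. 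The paper's mechanism is the opposite of a coarsening, and it is inherently \emph{data-dependent}: with $m=\Theta(\sqrt N/\epsilon^2)$ samples one first estimates $K^*_{ii}$ and the pair marginals to additive accuracy $\xi\epsilon$ with $\xi=N^{-1/4}\sqrt{\log n+1}$, which pins down $(K^*_{ij})^2$ to accuracy $4\xi\epsilon$ and hence $|K^*_{ij}|$ to accuracy $2\sqrt{\xi\epsilon}$ in general, but to accuracy $4\xi\epsilon/\alpha$ when $|K^*_{ij}|\ge\alpha$ (divide by $|\hat K_{ij}|+|K^*_{ij}|\ge\alpha$). The candidates per entry are then a bracketing of this \emph{confidence interval} (around both signs, plus the zero candidate) at resolution $\wp$, so their number is the ratio $\varsigma=O\bigl(n^2\zeta^{-1}\min\{\xi/\alpha,\sqrt{\xi/\epsilon}\}\bigr)$, whose logarithm is exactly the factor $O(1+\log(1/\zeta)+\min\{\log(1/\alpha),\log(1/\epsilon)\})$. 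In short, $\alpha$ enters by shrinking the interval you must cover (via the square-root step of the moment estimator), not by relaxing the resolution at which you must cover it; a static grid over the parameter space, however cleverly truncated by the sign pattern, cannot produce the $\log(1/\alpha)$ branch of the min.
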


Even more, if at least one of $\epsilon$ or $\alpha$ is not too small, i.e., if $\epsilon = \tilde{\Omega}(\zeta^{-2} N^{-1/4})$ or $\alpha = \tilde{\Omega}(\zeta^{-1} N^{-1/4})$ hold, then $C'_{N,\epsilon, \zeta, \alpha}$ reduces to $\log^2(N)$.
With a minor change in the algorithm, the bound in Lemma~\ref{prop:refined} also holds for the problem of testing whether $q$ is an $(\alpha, \zeta)$-normal DPP, or $\epsilon$-far only from just the class of $(\alpha, \zeta)$-normal DPPs, instead of all DPPs (Appendix~\ref{app:appendixd}).

Our approach tests DPP distributions via {\em learning}: At a high-level, we learn a DPP model from the data as if the data is generated from a DPP distribution. Then, we use a new batch of data and test whether the DPP we learnt seems to have generated the new batch of the data.
More accurately, given samples from $q$, we pretend $q$ is a DPP with kernel $K^*$, and use a proper learning algorithm to estimate a kernel matrix $\hat{K}$. 
But, the currently best learning algorithm \cite{urschel2017learning} has a   lower bound on the sample complexity of learning $K^*$ accurately which, 
in the worst case, may lead to a sub-optimal sample complexity for testing. 

To reduce the sample complexity of learning, we do not work with a single accurate estimate $\hat K$, but construct a set $\mathcal M$ of candidate DPPs as potential estimates for $q$. We show that, with only $\Theta(\sqrt N/\epsilon^2)$ samples, we can obtain a set $\mathcal M$ such that, with high probability, we can determine if $q$ is a DPP by testing if $q$ is close to any DPP in $\MM$. We prove that $\Theta(\log(|\MM|) \sqrt{N} /\epsilon^2)$ samples suffice for
this algorithm to succeed with high probability.

\textbf{Lower Bound.} Our second main result is an information-theoretic lower bound, which shows that the sample complexity of our tester in Theorem~\ref{thm:upperboundtheorem} is optimal up to logarithmic factors.
\begin{theorem}[Lower Bound] \label{thm:lowerboundtheorem}
  Given $\epsilon \leq 0.0005$, $n \geq 22$, and $\alpha \in [0 , 0.5]$, any $(\epsilon, 0.99)$-tester needs at least $\Omega(\sqrt{N}/\epsilon^{2})$ samples to distinguish if $q$ is a DPP or it is $\epsilon$-far from the class of DPPs.

The same bound holds for distinguishing if $q$ is an $(\alpha, \zeta)$-normal DPP or it is $\epsilon$-far from the class of DPPs (or $\epsilon$-far from the class of $(\alpha, \zeta)$-normal DPPs).
\end{theorem}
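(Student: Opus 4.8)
The plan is to use the standard distribution-testing lower-bound machinery: construct a ``no''-instance that is a mixture of distributions, all of which look like a fixed ``yes''-instance on few samples, and apply a Paninski-style / two-point argument. Concretely, I would take the \emph{uniform} distribution $u$ over $2^{[n]}$ as the ``yes''-instance. The first thing to check is that $u$ is indeed a DPP, and moreover an $(\alpha,\zeta)$-normal one: the uniform measure corresponds to the marginal kernel $K = \tfrac12 I$, whose eigenvalues are all $\tfrac12 \in [\zeta, 1-\zeta]$ for any $\zeta \le 0.5$, and whose off-diagonal entries are all zero, so condition~2 of Definition~\ref{def:normalitydefinition} is vacuously satisfied for any $\alpha \in [0,0.5]$. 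Hence $u$ is a legitimate ``yes''-instance for \emph{all} three testing problems in the statement, and this is exactly why the same bound applies to all of them simultaneously.

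Next I would build the ``no''-family. For a uniformly random subset $T \subseteq 2^{[n]}$ of size $N/2$ and a random sign pattern, let $q_T$ be the perturbed distribution $q_T(S) = \tfrac{1}{N}(1 \pm 2\epsilon)$, with $+$ on $S \in T$ and $-$ on $S \notin T$ (the classical Paninski construction). Each $q_T$ has $\ell_1(q_T, u) = \epsilon$, so I must show $q_T$ is $\epsilon$-far from the \emph{entire} class of DPPs (not just from $u$). This is the step I expect to be the main obstacle: it requires arguing that no DPP can be $(\epsilon - o(\epsilon))$-close to such a generic $\pm\epsilon/N$ perturbation of uniform. I would approach this by a dimension/counting argument — the family of DPP distributions on $2^{[n]}$ is parameterized by the $O(n^2)$ entries of $K$, hence lies on a low-dimensional manifold, whereas the $q_T$'s form an exponentially large, well-separated (in $\ell_1$) packing; a union bound over a net of the DPP manifold shows that all but a vanishing fraction of the $q_T$ are $\Omega(\epsilon)$-far from every DPP. (Alternatively, one can invoke a Fourier/Walsh-coefficient obstruction: DPP atom probabilities $|\det(K - I_{\bar J})|$ have constrained structure, while random $q_T$ do not.) One then restricts the no-family to those $q_T$ that are genuinely $\epsilon$-far, losing only a constant factor in the counting bound.

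Finally I would invoke the standard lower bound: any tester distinguishing $u$ from a uniformly random member of the (restricted) no-family with probability $\ge 0.99$ must take $\Omega(\sqrt N/\epsilon^2)$ samples, because with fewer samples the distribution of the sample histogram (or, cleaner, the collision statistics) under $u$ and under the mixture $\mathbf{E}_T[q_T^{\otimes m}]$ have total variation $o(1)$ — this is precisely Paninski's uniformity lower bound, and the mixture-vs-single-distribution indistinguishability follows from bounding $\chi^2(\mathbf{E}_T[q_T^{\otimes m}], u^{\otimes m})$ by a small constant when $m = o(\sqrt N/\epsilon^2)$. Since every no-instance used is $\epsilon$-far from all DPPs and the yes-instance $u$ is itself a DPP (indeed $(\alpha,\zeta)$-normal), the same $m$ also fails to distinguish ``$(\alpha,\zeta)$-normal DPP'' from ``$\epsilon$-far from all DPPs'' and from ``$\epsilon$-far from $(\alpha,\zeta)$-normal DPPs,'' giving all the claimed variants at once. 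The hypotheses $\epsilon \le 0.0005$ and $n \ge 22$ enter only to make the constants in the packing/counting step and in the $\chi^2$ bound work out cleanly.
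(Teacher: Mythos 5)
Your overall reduction is exactly the paper's: take the uniform measure (the $(\alpha,\zeta)$-normal DPP with kernel $\tfrac12 I$) as the yes-instance, perturb its atoms by $\pm\Theta(\epsilon)/N$ with random signs to get a Paninski-style no-family, invoke the $\Omega(\sqrt N/\epsilon^2)$ uniformity-testing lower bound for the mixture, and argue that with high probability a random member of the no-family is $\epsilon$-far from \emph{all} DPPs. The difference is entirely in how you establish that last farness claim, which both you and the paper identify as the crux. The paper does \emph{not} use a counting argument: it proves the stronger Lemma~\ref{lem:eps-submod}, that a random $h_r$ is $\epsilon$-far from every \emph{log-submodular} distribution, by an explicit pointwise argument. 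For each $S\subseteq[n]\setminus\{1,2\}$ whose signs on $S\cup\{1,2\}$, $S\cup\{1\}$, $S\cup\{2\}$ form a fixed ``anti-submodular'' pattern (a constant fraction of all $S$, by Chernoff), the log-submodularity inequality applied to the quadruple $\{S,\,S\cup\{1\},\,S\cup\{2\},\,S\cup\{1,2\}\}$ forces any log-submodular $f$ to miss at least one of these four atoms by $\Omega(\epsilon/N)$; summing over $S$ and controlling the normalization gives $\Omega(\epsilon)$ total distance. Since DPPs are log-submodular, Theorem~\ref{thm:lowerboundtheorem} follows as a corollary of the more general Theorem~\ref{thm:logsubmodular}, which also covers strongly Rayleigh measures. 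Your dimension-counting route cannot recover that generality, since log-submodular distributions form an $N$-dimensional family, but generality aside it is a legitimate alternative for the DPP statement itself.

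That said, your key step is only a sketch, and the one place it could genuinely strain is quantitative. To run the union bound you need an $O(\epsilon)$-net in $\ell_1$ of the DPP family whose cardinality is dominated by the concentration rate of $\ell_1(q_T,p)$ for fixed $p$, which is $e^{-\Omega(N)}$ (McDiarmid with bounded differences $O(\epsilon/N)$, using $\mathbf{E}[\ell_1(q_T,p)]\geq\epsilon$). Building the net requires a Lipschitz bound for $K\mapsto\Pr[K]{\cdot}$; the crude bound $\ell_1(\Pr[K]{\cdot},\Pr[K']{\cdot})\leq O(Nn2^{n})\|K-K'\|$ yields a net of size $\exp\bigl(O(n^3+n^2\log(1/\epsilon))\bigr)$. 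This beats $e^{-\Omega(N)}$ for $n\geq 22$ only when $\log(1/\epsilon)=o(N/n^2)$, so your argument as stated silently degrades for extremely small $\epsilon$, a regime the paper's combinatorial argument handles with no additional restriction (it only needs $\epsilon'\leq 2/3$, which is where the hypothesis $\epsilon\leq 0.0005$ with $\epsilon'=1024\epsilon$ comes from). The ``Fourier/Walsh obstruction'' alternative you mention is too vague to evaluate. If you pursue your route, you should (i) state and prove the Lipschitz/net bound, (ii) do the fixed-$p$ concentration carefully, and (iii) either restrict the range of $\epsilon$ or sharpen the net so the union bound closes; otherwise I would recommend the paper's submodularity-based argument, which is both fully elementary and strictly more general.
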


In fact, we prove a more general result (Theorem~\ref{thm:logsubmodular}): testing whether $q$ is in any subclass $\Upsilon$ of the family of log-submodular distributions that includes the uniform distribution requires $\Omega(\sqrt{N}/\epsilon^2)$ samples. DPPs are such a subclass~\cite{kulesza2012determinantal}. A distribution $f$ over $2^{[n]}$ is \emph{log-submodular} if for every $S \subset S' \subseteq [n]$ and $i \notin S'$, it holds that 
$\log(f(S' \cup \{i\})) - \log(f(S')) \leq 
\log(f(S \cup \{i\})) - \log(f(S))$.

Given the interest in log-submodular distributions \cite{djolonga2014map,tschiatschek16,djolonga18,gotovos15sampling,gotovos18discrete}, this result may be of wider interest. Moreover, our lower bound applies to another important subclass $\Upsilon$, \emph{strongly Rayleigh measures} \cite{borcea09}, which underlie recent progress in algorithms and mathematics \cite{gharan09,frieze14,spielmanSri11,anari15ks}, and sampling in machine learning \cite{anari16,li17dual,li16sr}.

Our lower bound stands in stark contrast to the \emph{constant} sample complexity of testing whether a given function is submodular \cite{blais2016testing}, implying a wide complexity gap between access to given samples and access to an evaluation oracle (see Section~\ref{sec:related}).
We prove our lower bounds by a reduction from a randomized instance of uniformity testing.

\def\Kt{\Tilde{K}}

\section{An Algorithm for Testing DPPs} \label{sec:learning}
We will first construct an algorithm for testing the smaller class of $(\alpha,\zeta)$-normal DPPs, and then show how to extend this result to all DPPs via a coupling argument.

Our testing algorithm relies on learning: given samples from $q$, we estimate a kernel $\hat K$ from the data, and then test whether the estimated DPP has generated the observed samples. The magnitude of any entry $\hat{K}_{i,j}$ can be estimated from the marginals for $S = \{i,j\}$ and $i,j$, since $\dpp{K}{S} = K_{i,i}K_{j,j} - K_{i,j}^2 = \dpp{K}{i}\dpp{K}{j} - K_{i,j}^2$. But, determinig the signs is more challenging. \citet{urschel2017learning} estimate signs via higher order moments that are harder to estimate, but it is not clear whether the resulting estimate $\hat{K}$ yields a sufficiently accurate estimate of the distribution to obtain an optimal sample complexity for testing.
Hence, instead, we construct a set $\MM$ of candidate DPPs such that, with high probability, there is a $\tilde p \in \mathcal M$ that is close to $q$ if and only if $q$ is a DPP. Our tester, Algorithm~\ref{alg:testing_DPP}, tests closeness to $\MM$ by individually testing closeness of each candidate in $\MM$.

\begin{algorithm}[t]
\caption{\dpptester \label{alg:testing_DPP}}
\begin{algorithmic}[1]
\Procedure{DPP-Tester}{$\epsilon$, $\delta$, sample access to $q$}
    \State{$\MM \gets $ construct the set of DPP distributions as described in Theorem~\ref{thm:learning}.}
    \For {each $p$ in $\MM$}
        \State{Use robust $\chi^2-\ell_1$ testing to check if $\chi^2(q,p) \leq \epsilon^2/500$, or $\ell_1(q,p) \geq \epsilon$}.
        \If{the tester outputs \accept}
            \State{\textbf{Return} \accept.}
        \EndIf
    \EndFor
 	\State{\textbf{Return} \reject}
\EndProcedure
\end{algorithmic}
\end{algorithm}

\textbf{Constructing $\MM$.} The DPPs in $\MM$ arise from variations of an estimate for $K^*$, obtained with $\Theta(\sqrt N/\epsilon^2)$ samples. Via the above stragegy, we first estimate the magnitude $|K^*_{ij}|$ of each matrix entry, and then pick candidate entries from the confidence intervals around $\pm|\widehat{K}_{ij}|$, such that at least one is close to the true $K^*_{i,j}$. The candidate matrices $K$ are obtained by all possible combinations of candidate entries. Since these are not necessarily valid marginal kernels, we project them onto the positive semidefinite matrices with eigenvalues in $[0,1]$. Then, $\MM$ is the set of all DPPs parameterized by these projected candidate matrices $\Pi(K)$.

If $q$ is a DPP with kernel $K^*$, then, by construction, our candidates contain  a $\tilde{K}$ close to $K^*$. The projection of $\tilde K$ remains close to $K^*$ in Frobenius distance. We show that this closeness of the matrices implies closeness of the corresponding distributions $q$ and $\tilde p = \Pr[\Pi(\tilde K)]{.}$ in $\ell_1$-distance: $\ell_1(q,\tilde p) = O(\epsilon)$. Conversely, if $q$ is $\epsilon$-far from being a DPP, then it is, by definition, $\epsilon$-far from $\MM$, which is a subset of all DPPs.

\textbf{Testing $\MM$.} To test whether $q$ is close to $\MM$, a first idea is to do robust $\ell_1$ identity testing, i.e., for every $p \in \mathcal M$, test whether $\ell_1(q,p) \geq \epsilon$ or $\ell_1(q,p) = O(\epsilon)$. But, $\MM$ can contain the uniform distribution, and it is known that robust $\ell_1$ uniformity testing needs $\Omega(N/\log N)$ samples~\cite{VV11}, as opposed to the optimal dependence $\sqrt N$.

Hence, instead, we use a combination of $\chi^2$ and $\ell_1$ distances for testing, and test $\chi^2(q,p) = O(\epsilon^2)$ versus $\ell_1(q,p) \geq \epsilon$ \citep{ADK15}. To apply this robust $\chi^2$-$\ell_1$ identity testing, we must prove that, with high probability, there is a $\tilde{p}$ in $\MM$ with $\chi^2(q,\tilde p) = O(\epsilon^2)$ if and only if $q$ is a DPP. Theorem~\ref{thm:learning}, proved in Appendix~\ref{app:appendixa}, asserts this result if $q$ is an $(\alpha, \zeta)$-normal DPP. This is stronger than its $\ell_1$ correspondent, since $\frac{1}{2}\ell_1^2(q,\tilde p) \leq \chi^2(q,\tilde p)$. 

To prove Theorem~\ref{thm:learning}, we need to control the distance between the atom probabilities of $q$ and $\tilde p$. We analyze these atom probabilities, which are given by determinants, via a lower bound on the smallest singular values $\sigma_n$ of the family of matrices $\{ K - I_{\bar J}: \ J \subseteq [n] \}$.
\begin{lemma} \label{smallest-singular-value-lemma}
If the kernel matrix $K$ has all eigenvalues in $[\zeta, 1 - \zeta]$, then, for every $J \subseteq [n]$:
$$\sigma_n(K - I_{\bar J}) \geq \zeta (1 - \zeta)/\sqrt 2.$$
\end{lemma}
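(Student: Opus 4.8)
The plan is to reduce the singular-value bound to an eigenvalue bound and then exploit that \emph{both} $K$ and $I-K$ are well-conditioned. Write $M := K - I_{\bar J}$; since $K$ is symmetric and $I_{\bar J}$ is diagonal, $M$ is symmetric, so $\sigma_n(M) = \min_i |\lambda_i(M)|$. I would fix an eigenvalue $\mu$ of $M$ with $|\mu| = \sigma_n(M)$ and a unit eigenvector $x$; it then suffices to show $|\mu| \ge \zeta(1-\zeta)$, which is at least $\zeta(1-\zeta)/\sqrt2$.

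First I would record the spectral input. Setting $L := I - K$, the eigenvalues of $L$ are $1-\lambda_i(K) \in [\zeta, 1-\zeta]$; since $t \mapsto t(1-t)$ is concave, its minimum over $[\zeta,1-\zeta]$ is attained at an endpoint and equals $\zeta(1-\zeta)$. Hence $K(I-K) \succeq \zeta(1-\zeta)\,I$, and in particular, for our unit vector, $\langle Kx, Lx\rangle = x^{\top} K(I-K)x \ge \zeta(1-\zeta)$.

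Next I would split the eigenvector across $J$ and $\bar J$: let $u := I_J x$ and $w := I_{\bar J}x$, so $x = u+w$, $u \perp w$, and $\|u\|^2 + \|w\|^2 = 1$. From $Mx = \mu x$ we get $Kx = \mu x + I_{\bar J}x = \mu u + (\mu+1)w$, hence $Lx = x - Kx = (1-\mu)u - \mu w$. Expanding the inner product, using orthogonality and $\|u\|^2+\|w\|^2=1$, yields $\langle Kx, Lx\rangle = \mu(1-\mu)\|u\|^2 - \mu(\mu+1)\|w\|^2 = \mu\bigl(\|u\|^2 - \|w\|^2 - \mu\bigr)$.

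Finally I would combine the two: $\zeta(1-\zeta) \le \mu(\|u\|^2-\|w\|^2) - \mu^2 \le \mu(\|u\|^2-\|w\|^2) \le |\mu|\cdot\bigl|\,\|u\|^2-\|w\|^2\,\bigr| \le |\mu|\,(\|u\|^2+\|w\|^2) = |\mu|$, where we dropped $-\mu^2 \le 0$ and used $|a-b| \le a+b$ for $a,b\ge 0$. Thus $\sigma_n(M) = |\mu| \ge \zeta(1-\zeta) \ge \zeta(1-\zeta)/\sqrt2$. The only genuinely non-mechanical step is recognizing that $\langle Kx,(I-K)x\rangle$ is the right quantity to control and bounding it below by $\zeta(1-\zeta)$ via concavity of $t(1-t)$; the rest is routine linear algebra, so I do not expect a real obstacle. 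If one wants to land exactly on the stated constant or improve it, the same computation, now also using $\langle Kx,x\rangle,\langle Lx,x\rangle \in [\zeta,1-\zeta]$ to get $\|w\|^2 \ge \zeta-\mu$ and $\|u\|^2 \ge \zeta+\mu$, bounds $\bigl|\,\|u\|^2-\|w\|^2-\mu\,\bigr| \le 1-2\zeta+|\mu|$ and, solving the resulting quadratic $\zeta(1-\zeta)\le |\mu|(1-2\zeta+|\mu|)$, gives $\sigma_n(M)\ge\zeta$.
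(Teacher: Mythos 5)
Your proof is correct, and it takes a genuinely different route from the paper's. The paper proves the bound via the variational characterization $\sigma_n(K-I_{\bar J})^2=\min_{\|v\|=1}v^{T}(K-I_{\bar J})^{2}v$: it expands $v$ in the eigenbasis of $K$, derives the identity $v^{T}(K-I_{\bar J})^{2}v=\|\sum_i\alpha_i\lambda_i(v_i)_J\|^2+\|\sum_i\alpha_i(1-\lambda_i)(v_i)_{\bar J}\|^2$, and then bounds this below through a projection argument (introducing the vectors $w_1=(\alpha_i\lambda_i)_i$, $w_2=(\alpha_i(1-\lambda_i))_i$, decomposing them onto complementary coordinate subspaces, and using $\langle w_1,w_2\rangle\ge\zeta(1-\zeta)$ together with Cauchy--Schwarz), which is where the factor $1/\sqrt2$ is lost. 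You instead exploit that $M=K-I_{\bar J}$ is symmetric, so it suffices to bound the eigenvalue of smallest modulus, and you evaluate the single quadratic form $\langle Kx,(I-K)x\rangle\ge\zeta(1-\zeta)$ at the corresponding eigenvector; the identity $\langle Kx,(I-K)x\rangle=\mu(\|u\|^2-\|w\|^2-\mu)$ then gives $|\mu|\ge\zeta(1-\zeta)$ directly. Every step checks out (the expansion of $Kx$ and $(I-K)x$ in terms of $u=I_Jx$, $w=I_{\bar J}x$ is right, and the positivity of $\mu(\|u\|^2-\|w\|^2)$ is forced by the inequality itself, so taking absolute values is legitimate). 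Your argument is both more elementary — no eigenbasis expansion or projection machinery — and quantitatively stronger: it removes the $\sqrt2$, and your refinement using $\langle Kx,x\rangle,\langle(I-K)x,x\rangle\in[\zeta,1-\zeta]$ correctly sharpens the bound further to $\sigma_n(K-I_{\bar J})\ge\zeta$, which is tight (take $K=\zeta I$ and $J=[n]$). Since the improved constants only propagate favorably through the rest of the paper's analysis, this is a strict improvement on the paper's lemma.
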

Lemma~\ref{smallest-singular-value-lemma} is proved in Appendix~\ref{app:appendixb}. In Theorem~\ref{thm:learning}, we observe $m = \lceil (\ln(1/\delta) +  1)\sqrt{N}/\epsilon^2\rceil$ samples from $q$, and use the parameter $\varsigma \coloneqq \lceil 200 n^2\zeta^{-1}  \min\{2\xi/\alpha, \sqrt{\xi/\epsilon} \} \rceil$, with $\xi \coloneqq N^{-\frac{1}{4}} \sqrt{\log(n) + 1}$. 
%
\begin{theorem} \label{thm:learning}
Let $q$ be an $(\alpha, \zeta)$-normal DPP distribution with marginal kernel $K^*$. Given the parameters defined above, suppose we have $m$ samples from $q$. 
Then, one can generate a  set $\mathcal{M}$ of DPP distributions with cardinality $|\mathcal{M}| = (2\varsigma+1)^{n^2}$, such that, with probability at least $1 - \delta$, there is a distribution $\tilde p \in \mathcal{M}$ with $\chi^2(q,\tilde p) \leq \epsilon^2/500$. 
\end{theorem}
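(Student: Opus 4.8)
The plan is to test-via-learn in three stages: from the $m$ samples, (i) estimate the entrywise magnitudes $|K^{*}_{ij}|$ from second-order marginals; (ii) build $\MM$ by placing a fine grid over the resulting confidence boxes and projecting onto valid kernels; (iii) show that a candidate matrix Frobenius-close to $K^{*}$ induces a DPP that is $\chi^{2}$-close to $q$, using the uniform singular-value bound of Lemma~\ref{smallest-singular-value-lemma}.

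\textbf{Estimating the magnitudes.} For a DPP with kernel $K$ one has $\dpp{K}{i}=K_{ii}$ and $\dpp{K}{\{i,j\}}=K_{ii}K_{jj}-K_{ij}^{2}$, so $K_{ij}^{2}=\dpp{K}{i}\dpp{K}{j}-\dpp{K}{\{i,j\}}$. I form, from the samples, the empirical frequencies $\widehat K_{ii}$ and $\widehat q_{ij}$ of the events $\{i\in\mathcal J\}$ and $\{i,j\in\mathcal J\}$; these are unbiased, so Hoeffding plus a union bound over the $O(n^{2})$ relevant events gives, with probability $\ge 1-\delta$, an additive error $t=O(\epsilon\xi)$ on all of them once $m=\lceil(\ln(1/\delta)+1)\sqrt N/\epsilon^{2}\rceil$ (the $\ln(1/\delta)$ in $m$ absorbing the union-bound term, and $\xi=N^{-1/4}\sqrt{\log n+1}$ absorbing $\sqrt{\log(n^{2})/\sqrt N}$). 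Condition on this event. Then the estimate $|\widehat K_{ij}|:=\sqrt{\max\{0,\widehat K_{ii}\widehat K_{jj}-\widehat q_{ij}\}}$ of $|K^{*}_{ij}|$ has error $\bigl|\,|\widehat K_{ij}|-|K^{*}_{ij}|\,\bigr|=O(\sqrt t)$ in general, and $O(t/\alpha)$ when $K^{*}_{ij}\neq 0$ (since then $|K^{*}_{ij}|\ge\alpha$); so $K^{*}_{ij}$ lies within $\rho:=O\!\bigl(\epsilon\min\{\xi/\alpha,\sqrt{\xi/\epsilon}\}\bigr)$ of $\operatorname{sign}(K^{*}_{ij})\,|\widehat K_{ij}|$.

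\textbf{Building and rounding into $\MM$.} For each of the $n^{2}$ ordered pairs $(i,j)$ I lay down $2\varsigma+1$ evenly spaced candidate values covering $[-|\widehat K_{ij}|-\rho,\,-|\widehat K_{ij}|+\rho]\cup[|\widehat K_{ij}|-\rho,\,|\widehat K_{ij}|+\rho]$ (a set of measure $\le 4\rho$), so that $K^{*}_{ij}$ is within the spacing $\rho/\varsigma$ of some candidate. A joint choice of entries gives a symmetric $K$; projecting it via $\Pi$ onto the PSD matrices with eigenvalues in $[0,1]$ and letting $\MM$ be the DPPs $\Pr[\Pi(K)]{.}$ over all candidates yields $|\MM|=(2\varsigma+1)^{n^{2}}$ with every element a genuine DPP. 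Rounding each $K^{*}_{ij}$ to its nearest candidate gives a matrix $\tilde K$ with $\|\tilde K-K^{*}\|_{F}\le n\rho/\varsigma=O(\epsilon\zeta/n)$ by the choice of $\varsigma$; since this is $<\zeta$, Weyl's inequality places the eigenvalues of $\tilde K$ strictly inside $(0,1)$, so $\Pi(\tilde K)=\tilde K$ and $\Delta:=\|\Pi(\tilde K)-K^{*}\|_{F}=O(\epsilon\zeta/n)$.

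\textbf{Frobenius closeness $\Rightarrow$ $\chi^{2}$ closeness, and the main obstacle.} Put $\hat K:=\Pi(\tilde K)$ and $\tilde p:=\Pr[\hat K]{.}\in\MM$. For each $J\subseteq[n]$, $A_{J}:=K^{*}-I_{\bar J}$ and $B_{J}:=\hat K-I_{\bar J}$ are symmetric with $\|A_{J}-B_{J}\|_{\mathrm{op}}\le\Delta$, and Lemma~\ref{smallest-singular-value-lemma} gives $\sigma_{n}(A_{J})\ge\zeta(1-\zeta)/\sqrt 2=:\mu\gg\Delta$. Using $q(J)=\prod_{k}|\lambda_{k}(A_{J})|$ and $\tilde p(J)=\prod_{k}|\lambda_{k}(B_{J})|$ from \eqref{eq:atomprobabilities} together with Weyl's inequality on the eigenvalues, $q(J)/\tilde p(J)\in[(1+\Delta/\mu)^{-n},(1-\Delta/\mu)^{-n}]$, so $\tilde p(J)>0$ and $|q(J)/\tilde p(J)-1|\le\beta:=O(n\Delta/\mu)=O(\epsilon)$ uniformly over $J$. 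Hence $\chi^{2}(q,\tilde p)=\sum_{J}\tilde p(J)\bigl(q(J)/\tilde p(J)-1\bigr)^{2}\le\beta^{2}=O(\epsilon^{2})$, and the absolute constants in $m$, in $\varsigma$, and in the $\epsilon^{2}/500$ target are chosen so that $\beta^{2}\le\epsilon^{2}/500$; the whole argument is deterministic on the $(1-\delta)$-probability event above. This last step is the crux: each atom probability is an $n$-fold product of eigenvalue ratios, so every ratio must be pinned to $1\pm O(\epsilon/n)$ \emph{simultaneously over all $N=2^{n}$ sets $J$}; the uniform lower bound $\sigma_{n}(K^{*}-I_{\bar J})\ge\mu$ is exactly what permits this, and it forces the target Frobenius accuracy $O(\epsilon\zeta/n)$ — which is in turn why the grid needs $\Theta(n^{2}\zeta^{-1}\rho/\epsilon)$ points per entry and why $n^{2}$ appears in $|\MM|$. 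Everything else — propagating the $O(\epsilon\xi)$ sampling error through the square root and the rounding grid, and verifying the constants compose — is routine.
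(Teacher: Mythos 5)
Your proposal follows the paper's proof almost step for step (estimate $|K^*_{ij}|$ from singleton and pairwise marginals, bracket the confidence interval into a grid of spacing $\approx\epsilon\zeta/n^2$, project onto valid kernels, and convert Frobenius closeness into a uniform multiplicative bound on atom probabilities via the singular-value lower bound). The one place you genuinely diverge is the determinant-perturbation step: the paper invokes a perturbation inequality for $\bigl||\det(B+E)|-|\det(B)|\bigr|$ in terms of elementary symmetric functions of the singular values (its Lemma~\ref{lem:perturbation}, from Ipsen--Rehman), whereas you apply Weyl's inequality eigenvalue by eigenvalue to the symmetric matrices $K^*-I_{\bar J}$ and $\hat K-I_{\bar J}$ and multiply the $n$ ratios. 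Both yield the same $\bigl|q(J)/\tilde p(J)-1\bigr|=O(n\Delta/\mu)=O(\epsilon)$ uniformly in $J$, and your route is arguably more elementary and self-contained; this part is fine.

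There is, however, one concrete gap in the grid construction: the entries with $K^*_{ij}=0$. Your half-width $\rho=O\bigl(\epsilon\min\{\xi/\alpha,\sqrt{\xi/\epsilon}\}\bigr)$ uses the $O(t/\alpha)$ error bound, which is only valid when $K^*_{ij}\neq0$ (it needs $|\widehat K_{ij}|+|K^*_{ij}|\geq\alpha$ in the denominator). When $K^*_{ij}=0$, only the weaker $O(\sqrt{t})=O(\sqrt{\xi\epsilon})$ bound holds, so if $\alpha\gg\sqrt{\xi\epsilon}$ the empirical magnitude $|\widehat K_{ij}|$ can be as large as $\Theta(\sqrt{\xi\epsilon})$ while $\rho=O(\epsilon\xi/\alpha)$ is much smaller; the set $[-|\widehat K_{ij}|-\rho,-|\widehat K_{ij}|+\rho]\cup[|\widehat K_{ij}|-\rho,|\widehat K_{ij}|+\rho]$ then fails to contain the true value $0$, and no candidate matrix is entrywise $\wp$-close to $K^*$. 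The fix is exactly the paper's: add $0$ as an explicit extra candidate for every entry, which is where the ``$+1$'' in the cardinality $(2\varsigma+1)^{n^2}$ comes from ($\varsigma$ candidates for each sign, plus the zero candidate). With that one addition your argument goes through.
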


\subsection{Correctness of the Testing Algorithm}\label{subsec:testing}
For simplicity of exposition, in Algorithm~\ref{alg:testing_DPP} we set the confidence parameter $\delta=0.01$.
We first prove the adaptive sample complexity in Lemma~\ref{prop:refined} for testing $(\alpha, \zeta)$-normal DPPs. 
Therefore, \dpptester\ aims to output \accept\ if $q$ is a $(\alpha,\zeta)$-normal DPP, and \reject\ if $q$ is $\epsilon$-far from all DPPs, in both cases with probability at least $0.99$.

To finish the proof for the adaptive sample complexity,
we need to argue that our \dpptester\ succeeds with high probability, i.e., that with high probability all of the identity tests, with each $p \in \MM$, succeed.
The algorithm uses robust $\chi^2$-$\ell_1$ identity testing \citep{ADK15}, to test $\chi^2(q,p) \leq \epsilon^2/500$ versus $\ell_1(q,p) \geq \epsilon$.
For each $p \in \MM$, the $\chi^2$-$\ell_1$ tester computes the statistic
\begin{equation} \label{eq:statistic}
    Z^{(m)} = \sum_{J \subseteq [n]: \, p(J) \geq \epsilon/50N} \frac{(N(J) - m p(J))^2 - N(J)}{m p(J)},
\end{equation}
where $m$ is the number of observed samples and $N(J)$ is the number of samples that are equal to set $J$, and compares $Z^{(m)}$ with the threshold $C = m\epsilon^2/10$.

\citet{ADK15} show that, given $\Theta(\sqrt N/\epsilon^2)$ samples, $Z^{(m)}$ concentrates around its mean, which is strictly below $C$ if $p$ satisfies $\chi^2(q,p) \leq \epsilon^2/500$, and strictly above $C$ if $\ell_1(q,p) \geq \epsilon$.
Let $\mathcal E_1$ be the event that all these robust tests, for every $p \in \mathcal M$, simultaneously answer correctly. To make sure that $\mathcal E_1$ happens with high probability, we use amplification (Section~\ref{sec:notations}): while we use the same set of samples to test against every $p \in \mathcal M$, we multiply the sample size by $\Theta(\log(|\MM|))$ to be confident that each test answers correctly with probability at least $1 - O(|\MM|^{-1})$. A union bound then implies that $\mathcal E_1$ happens with arbitrarily large constant probability.

Theorem~\ref{thm:learning} states that, indeed, with $\Theta(\sqrt N / \epsilon^2)$ samples, if $q$ is an $(\alpha, \zeta)$-normal DPP, then $\MM$ contains a distribution $\tilde p$ such that $\chi^2(q,\tilde p) \leq \epsilon^2 / 500$, with high probability. We call this event $\mathcal{E}_2$. \dpptester\ succeeds in the case $\mathcal{E}_1 \cap \mathcal{E}_2$: If $q$ is an $(\alpha, \zeta)$-normal DPP, then at least one $\chi^2$-$\ell_1$ test accepts $\tilde p$ and consequently the algorithm accepts $q$ as a DPP. Conversely, if $q$ is $\epsilon$-far from all DPPs, then $\ell_1(q,p) \geq \epsilon$ for every $p \in \MM$, so all the $\chi^2$-$\ell_1$ tests reject simultaneously and \dpptester \, rejects $q$ as well.
With a union bound on the events $\mathcal E_1^c$ and $\mathcal E_2^c$, it follows that  $\mathcal E_1 \cap \mathcal E_2$ happens with arbitrarily large constant probability too, independent of whether $q$ is a DPP or not. 

Adding the sample complexities for generating $\mathcal M$ and for the $\chi^2$-$\ell_1$ tests and observing $\log(|\MM|) = O(1 +  \log(1/\zeta) + \min\{ \log (1/\epsilon) , \log (1/\alpha)\} )$ completes the proof of Lemma~\ref{prop:refined}.  

\subsection{Extension to general DPPs} \label{subsec:extention}
Next, we prove the general sample complexity with factor $C_{N,\epsilon}$ in Theorem~\ref{thm:upperboundtheorem}. The key idea is that, if some eigenvalue of $K^*$ is very close to zero or one, we couple the process of sampling from $K^*$ with sampling from another kernel $\Pi_{z}(K^*)$ whose eigenvalues are bounded away from zero and one, i.e., parameterizing a $(0,z)$-normal DPP.
This coupling enables us to test $(0,z)$-normal DPPs instead, by tolerating an extra failure probability,
and transfer the above analysis for $(\alpha,\zeta)$-normal DPPs.
We state our coupling argument in the following Lemma, proved in Appendix~\ref{app:appendixh}.
\begin{lemma} \label{lemma:coupling}
For a value $z \in [0,1]$, we denote the projection of a marginal kernel $K$ onto the convex set $\{A \in S_n^+| \,\, z.I \leq A \leq (1-z)I \}$ by $\Pi_{z}(K)$, where $S_n^+$ is the set of positive semidefinite matrices. For $z = \delta/2mn$, consider the following stochastic processes:
\begin{enumerate}
    \item derive $m$ i.i.d samples $\{\mathcal J^{(t)}_K\}_{t=1}^m$ from $\Pr[K]{.}$.
    \item derive $m$ i.i.d samples $\{\mathcal J^{(t)}_{\Pi_z(K)}\}_{t=1}^m$ from $\Pr[\Pi_{z}(K)]{.}$.
\end{enumerate}
There exist a coupling between $(1)$ and 
$(2)$ such that
$$\Pr[\text{coupling}]{\{\mathcal J^{(t)}_{K}\}_{t=1}^m = \{\mathcal J^{(t)}_{\Pi_z(K)}\}_{t=1}^m} \geq 1-\delta.$$
\end{lemma}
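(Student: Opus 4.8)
The plan is to build the coupling explicitly via a maximal coupling at the level of single samples, then extend to $m$ samples by independence, and finally bound the total variation distance between $\Pr[K]{.}$ and $\Pr[\Pi_z(K)]{.}$ by showing it is at most $z n$ (up to constants), so that with $z = \delta/2mn$ the failure probability over $m$ samples is at most $m \cdot zn \le \delta/2 \le \delta$. The first step is to recall the standard fact that for two distributions $\mu,\nu$ there is a coupling of a single draw that agrees with probability exactly $1 - \ell_1(\mu,\nu)$ (the maximal coupling), and that $m$ independent copies of this coupling give agreement of the whole sample with probability at least $(1-\ell_1(\mu,\nu))^m \ge 1 - m\,\ell_1(\mu,\nu)$ by a union bound over the $m$ coordinates. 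So it suffices to prove
\begin{equation} \label{eq:couplingTVgoal}
\ell_1\bigl(\Pr[K]{.},\, \Pr[\Pi_z(K)]{.}\bigr) \le \tfrac{\delta}{2m}.
\end{equation}

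For \eqref{eq:couplingTVgoal}, the key step is to control how much the DPP distribution moves when $K$ is projected onto the spectral box $\{z I \preceq A \preceq (1-z) I\}$. I would diagonalize $K = U \Lambda U^\top$ with eigenvalues $\lambda_1,\dots,\lambda_n$; then $\Pi_z(K) = U \Lambda' U^\top$ where $\lambda_i' = \min\{\max\{\lambda_i, z\}, 1-z\}$, so that $|\lambda_i - \lambda_i'| \le z$ for every $i$ and the projection only perturbs eigenvalues that lie within distance $z$ of $0$ or $1$. Using the elementary-symmetric-polynomial / orthogonally-invariant description of DPPs (a DPP with kernel $K$ is a mixture of elementary DPPs given by independently including eigenvector $i$ with probability $\lambda_i$), I can couple the two processes by coupling each Bernoulli$(\lambda_i)$ with a Bernoulli$(\lambda_i')$ maximally: they disagree with probability $|\lambda_i - \lambda_i'| \le z$. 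If all $n$ Bernoullis agree, the resulting elementary DPPs are identical, hence so are the sampled subsets. A union bound over the $n$ eigenvectors gives agreement of a single draw with probability at least $1 - nz$, and therefore $\ell_1(\Pr[K]{.}, \Pr[\Pi_z(K)]{.}) \le nz$.

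Putting the two pieces together: coupling each of the $m$ draws with this per-draw coupling and taking a union bound over the $m$ draws gives
\begin{equation} \label{eq:couplingFinal}
\Pr[\text{coupling}]{\{\mathcal J^{(t)}_{K}\}_{t=1}^m = \{\mathcal J^{(t)}_{\Pi_z(K)}\}_{t=1}^m} \ge 1 - m\,n\,z = 1 - m n \cdot \tfrac{\delta}{2mn} = 1 - \tfrac{\delta}{2} \ge 1 - \delta,
\end{equation}
which is the claim. (One should also note $\Pi_z(K)$ is a legitimate marginal kernel: it is PSD with eigenvalues in $[z, 1-z] \subseteq [0,1]$, so $\Pr[\Pi_z(K)]{.}$ is well-defined — and it parameterizes a $(0,z)$-normal DPP, as needed for the application.)

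The main obstacle is making the per-draw coupling argument fully rigorous, i.e., justifying that the eigenvector-mixture representation lets one couple the two DPPs through independent Bernoulli couplings on the shared eigenbasis. Since $K$ and $\Pi_z(K)$ have exactly the same eigenvectors $U$ (the projection only rescales eigenvalues), this is clean: conditioned on the $n$ inclusion indicators, both processes produce the projection DPP onto the same span of eigenvectors, so agreement of indicators forces agreement of samples, and the indicators are coupled coordinatewise. If one prefers to avoid the eigen-mixture machinery, an alternative is a direct bound on $\sum_J |\det(K - I_{\bar J}) - \det(\Pi_z(K) - I_{\bar J})|$ via a determinant-perturbation inequality, but the eigenvalue-coupling route is the cleanest and avoids any delicate determinant estimates.
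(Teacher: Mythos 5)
Your proposal is correct and follows essentially the same route as the paper's proof in Appendix~\ref{app:appendixh}: both use the elementary-DPP mixture representation, the observation that $\Pi_z(K)$ shares $K$'s eigenvectors with eigenvalues clipped into $[z,1-z]$ (so each is perturbed by at most $z$), a coordinatewise maximal coupling of the $n$ Bernoulli inclusion indicators, and a union bound giving per-draw agreement at least $1-nz$. The only cosmetic difference is the final step over $m$ draws, where you use $(1-nz)^m \geq 1 - mnz = 1-\delta/2$ while the paper bounds $(1-\delta/2m)^m \geq e^{-\delta} \geq 1-\delta$; both are valid.
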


We can use this coupling argument as follows. Suppose the constant $c_1$ is such that using \\ $c_1 C_{N,\epsilon,\alpha,\zeta} \sqrt{N} / \epsilon^2$ samples suffice for~\dpptester\  to output the correct answer for testing $(\alpha, \zeta)$-normal DPPs, with probability at least $0.995$. Such a constant exists as we just proved. Now, we show that with $m^* = c_2 C_{N,\epsilon} \sqrt N/\epsilon^2$ samples for large enough constant $c_2$, we obtain a tester for the set of all DPPs. To this end, we use the parameter setting of our algorithm for $(0,\bar z)$ normal DPPs, where $\bar z = 0.005/2m^*n$ is a function of $c_2$, $\epsilon$, and $N$. One can readily see that $c_2$ can be picked large enough, such that $m^* \geq c_1 C_{N,\epsilon,0,\bar z} \sqrt{N} / \epsilon^2$, with $c_2$ being just a function of $c_1$. This way, by the definition of $c_1$, 
the algorithm can test for $(0,\bar z)$-normal DPPs with success probability $0.995$. So, if $q$ is a $(0,\bar z)$-normal DPP, or  if it is $\epsilon$-far from all DPPs, then the algorithm outputs correctly with probability at least $0.995$. 

It remains to check what happens when $q$ is a DPP with kernel $K^*$, but not $(0,\bar{z})$-normal.
Indeed, \dpptester\ successfully decides this case too: due to our coupling, the product distributions $\mathrm{\mathbf{Pr}}^{(m^*)}_{K^*}[.]$ and $\mathrm{\mathbf{Pr}}^{(m^*)}_{\Pi_{\bar z}(K^*)}[.]$ over the space of data sets have $\ell_1$-distance at most $0.005$, so we have 
\\ $\mathrm{\mathbf{Pr}}^{(m^*)}_{K^*}\left[ \text{Acceptance Region}\right]
\geq 
\mathrm{\mathbf{Pr}}^{(m^*)}_{\Pi_{\bar z}(K^*)}\left[ \text{Acceptance Region}\right] - 0.005 \geq 0.995 - 0.005 = 0.99$, where the last inequality follows from the fact that $\Pi_{\bar z}(K^*)$ is an $(0,\bar z)$-normal DPP.
Hence, for such $c_2$, \dpptester\ succeeds with $c_2 C_{N,\epsilon}\sqrt N/\epsilon^2$ samples to test all DPPs with probability $0.99$, which completes the proof of Theorem~\ref{thm:upperboundtheorem}.

\section{Lower bound} \label{section:lowerbound} 
 Next, we establish the lower bound in Theorem~\ref{thm:lowerboundtheorem} for testing DPPs, which implies that the sample complexity of \dpptester\ is tight. In fact, our lower bound is more general: it applies to the problem of testing any subset $\Upsilon$ of the larger class of log-submodular distributions, when $\Upsilon$ includes the uniform measure:
 
\begin{theorem} \label{thm:logsubmodular}
For $\epsilon \leq 0.0005$ and $n \geq 22$, given sample access to a distribution $q$ over subsets of $[n]$, any $(\epsilon, 0.99)$-tester that checks whether $q \in \Upsilon$ or $q$ is $\epsilon$-far from all log-submodular distributions requires $\Omega(\sqrt{N}/\epsilon^2)$ samples. 
\end{theorem}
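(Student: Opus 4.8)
The plan is to prove the lower bound by adapting Paninski's randomized construction for uniformity testing: exhibit one fixed ``yes'' distribution and a randomized family of ``no'' distributions that an $o(\sqrt{N}/\epsilon^2)$-sample tester provably cannot separate. For the yes instance take the uniform distribution $u$ on $2^{[n]}$; it is a DPP with kernel $\tfrac12 I$, hence strongly Rayleigh, hence log-submodular, so it lies in $\Upsilon$ by hypothesis. For the no instances we perturb $u$ along the $2$-faces of the Boolean lattice. Fix the coordinates $1,2$ and for each $T\subseteq\{3,\dots,n\}$ let $Q_T=\{T,\ T\cup\{1\},\ T\cup\{2\},\ T\cup\{1,2\}\}$; the $M=N/4$ squares $Q_T$ partition $2^{[n]}$. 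Put $c\coloneqq 200\epsilon$ (any constant multiple works, and $c\le 1/3$ since $\epsilon\le 0.0005$). For each $T$ draw an independent uniform sign $\sigma_T\in\{+1,-1\}$, and define $q_\sigma$ by putting, on the ordered atoms $(T,\ T\cup\{1\},\ T\cup\{2\},\ T\cup\{1,2\})$, the masses $\tfrac1N(1,1,1,1)+\tfrac{\sigma_T}{N}(3c,-c,-c,-c)$. Every square keeps total mass $4/N$, so each $q_\sigma$ is a distribution, and the two signed patterns average to $\tfrac1N(1,1,1,1)$, so $\mathbf{E}_{\sigma}[q_\sigma]=u$ atom by atom.

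The second step is to show that every $q_\sigma$ with at least $M/4$ positive signs is $\epsilon$-far from all log-submodular distributions, and that these $\sigma$ form a $1-e^{-\Omega(M)}$ fraction of all sign vectors (Chernoff bound; here $n\ge 22$ ensures $M\ge 2^{20}$). On a square with $\sigma_T=+1$ we have $q_\sigma(T)\,q_\sigma(T\cup\{1,2\})=\tfrac{(1-c)(1+3c)}{N^2}$, which exceeds $q_\sigma(T\cup\{1\})\,q_\sigma(T\cup\{2\})=\tfrac{(1-c)^2}{N^2}$ by the multiplicative factor $\tfrac{1+3c}{1-c}=1+\Theta(c)$ -- a strict violation of the log-submodular inequality $f(T\cup\{1,2\})f(T)\le f(T\cup\{1\})f(T\cup\{2\})$. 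Hence any log-submodular $g$, which must satisfy that inequality on $Q_T$, cannot agree with $q_\sigma$ on all four atoms of $Q_T$ to within $o(c/N)$: writing $g=q_\sigma+e$ there and discarding the terms quadratic in $\|e\|_\infty$ forces $e_{T\cup\{1\}}+e_{T\cup\{2\}}-e_T-e_{T\cup\{1,2\}}=\Omega(c/N)$ (and if $\|e\|_\infty$ is already that large the conclusion is immediate); either way $\sum_{R\in Q_T}|e_R|=\Omega(c/N)=\Omega(\epsilon/N)$. Since distinct squares are disjoint, summing over the $\ge M/4$ violating squares gives $\ell_1(q_\sigma,g)\ge\Omega(\epsilon M/N)=\Omega(\epsilon)$; with $c=200\epsilon$ the hidden constant exceeds $1$, so $q_\sigma$ is $\epsilon$-far from every log-submodular distribution, in particular from $\Upsilon$.

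It remains to show the $m$-sample laws are close. Let $\bar q=\mathbf{E}_{\sigma}[q_\sigma^{\otimes m}]$ be the no law over \emph{all} sign vectors; restricting $\sigma$ to the typical vectors changes $\bar q$ by only $e^{-\Omega(M)}$ in $\ell_1$. Writing $q_\sigma=u(1+\phi_\sigma)$, the first-moment cancellation $\mathbf{E}_{\sigma}[\phi_\sigma]=0$ gives the standard Ingster-type identity $1+\chi^2(\bar q,u^{\otimes m})=\mathbf{E}_{\sigma,\sigma'}\big[(1+\langle\phi_\sigma,\phi_{\sigma'}\rangle_u)^m\big]$, where $\langle\phi_\sigma,\phi_{\sigma'}\rangle_u=\sum_R u(R)\phi_\sigma(R)\phi_{\sigma'}(R)=\tfrac{12c^2}{N}\sum_T\sigma_T\sigma'_T$. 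Using $1+x\le e^x$ and independence over squares, the right side is at most $\cosh\!\big(\tfrac{12c^2 m}{N}\big)^{M}\le\exp\!\big(\tfrac{18c^4 m^2}{N}\big)$, which is $1+o(1)$ provided $m=o(\sqrt N/c^2)=o(\sqrt N/\epsilon^2)$. So for $m$ below a small constant times $\sqrt N/\epsilon^2$ the $\ell_1$-distance between the yes and no $m$-sample laws is at most a small constant (say $0.1$), contradicting the $0.98$ separation an $(\epsilon,0.99)$-tester would force; hence $\Omega(\sqrt N/\epsilon^2)$ samples are necessary, which proves Theorem~\ref{thm:logsubmodular}. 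Theorem~\ref{thm:lowerboundtheorem} follows, since DPPs are log-submodular, $u$ is $(\alpha,\zeta)$-normal for every $\alpha\in[0,1]$ and $\zeta\le\tfrac12$, and the no instances remain $\epsilon$-far from the smaller classes of DPPs and of $(\alpha,\zeta)$-normal DPPs.

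I expect the technical heart to be the second step: turning the local, multiplicative failure of log-submodularity inside a single square into an $\ell_1$ lower bound that is robust against an \emph{arbitrary} log-submodular competitor $g$ (which might vanish on some atoms or try to compensate globally), while tracking constants tightly enough to land in the stated regime $\epsilon\le 0.0005$, $n\ge 22$. The hypercube bookkeeping and the $\chi^2$ mixture bound are otherwise routine; the one design choice that matters is the asymmetric pattern $(3c,-c,-c,-c)$, whose role is that its two signed versions average to zero -- killing the first moment, which is exactly what yields the $\sqrt N$ rather than $N$ rate -- while the $+$ version on its own already destroys log-submodularity on that square.
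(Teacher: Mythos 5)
Your proposal is correct, and it takes a genuinely different route from the paper's. The paper uses the standard Paninski family: every atom is independently perturbed by $\pm\epsilon'/N$, the indistinguishability from uniform is then cited as a black box from the known uniformity-testing lower bound, and the work goes into two places you avoid --- showing that, \emph{by chance}, a constant fraction of squares $\{S, S\cup\{1\}, S\cup\{2\}, S\cup\{1,2\}\}$ receive a sign pattern $(\,\cdot\,,-1,-1,+1)$ that forces any log-submodular $f$ to pay $\Omega(\epsilon'/N)$ there (a Chernoff bound over which squares are ``bad,'' plus a two-case analysis on the ratio $f(S\cup\{1,2\})/f(S\cup\{2\})$ that invokes log-submodularity to transfer the constraint to the pair $(S, S\cup\{1\})$), and separately controlling the normalization constant $\sum_S \bar h_r(S)$, which is only approximately $1$. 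Your construction instead perturbs square-by-square with the zero-sum pattern $\sigma_T(3c,-c,-c,-c)/N$, which buys exact normalization and makes every $+$ square a deterministic violator of the product inequality $f(T)f(T\cup\{1,2\})\le f(T\cup\{1\})f(T\cup\{2\})$; the price is that the signs are independent only across the $N/4$ squares rather than across all $N$ atoms, so you cannot cite the uniformity lower bound and must redo the Ingster/$\chi^2$ mixture computation --- which you do correctly ($\langle\phi_\sigma,\phi_{\sigma'}\rangle_u = \tfrac{12c^2}{N}\sum_T\sigma_T\sigma_T'$, giving $\cosh(12c^2m/N)^{N/4}\le\exp(18c^4m^2/N)$). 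The step you flag as the technical heart does go through: expanding $g=q_\sigma+e$ in the product inequality, the zeroth-order gap is $4c(1-c)/N^2$ while the linear terms are bounded by $\tfrac{1+3c}{N}\sum_{R\in Q_T}|e_R|$ and the quadratic terms by $2\|e\|_\infty^2$, so either $\|e\|_\infty\ge c/N$ outright or $\sum_{R\in Q_T}|e_R|\ge 2c/N$; atoms where $g$ vanishes fall into the first branch since $q_\sigma\ge(1-c)/N$ everywhere. Both proofs are sound; yours is more self-contained and arguably cleaner on the farness side, while the paper's reuses the off-the-shelf uniformity bound at the cost of a slightly more delicate farness argument.
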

One may also wish to test if  $q$ is $\epsilon$-far only from  the distributions in $\Upsilon$. A tester for this question, however, will correctly return \reject\ for any $q$ that is $\epsilon$-far from the set of all log-submodular distributions, and can hence distinguish the cases in Theorem~\ref{thm:logsubmodular} too. Hence, the lower bound extends to this question.

Theorem~\ref{thm:lowerboundtheorem} is simply a consequence of Theorem~\ref{thm:logsubmodular}: we may set $\Upsilon$ to be the set of all DPPs, or all $(\alpha,\zeta)$-normal DPPs. Both classes include the uniform distribution over $2^{[n]}$, which is an $(\alpha, \zeta)$-normal DPP with marginal kernel $I/2$, where $I$ is the $n \times n$ identity matrix. By the same argument, the lower bound applies to distinguishing $(\alpha, \zeta)$-normal DPPs from the $\epsilon$-far set of all DPPs.

\textbf{Proof of Theorem~\ref{thm:logsubmodular}.} To prove Theorem~\ref{thm:logsubmodular}, we construct a hard uniformity testing problem that can be decided by our desired tester for $\Upsilon$. In particular, we construct a family $\mathcal{F}$, such that it is hard to distinguish between the uniform measure and a randomly selected distribution $h$ from $\mathcal{F}$.
While the uniform measure is in $\Upsilon$, we will show that $h$ is also far from the set of log-submodular distributions with high probability. Hence, a tester for $\Upsilon$ can, with high probability, correctly decide between $\mathcal{F}$ and the uniform measure.

We obtain $\mathcal{F}$ by randomly perturbing the atom probabilities of the uniform measure over $2^{[n]}$ by $\pm \epsilon'/N$, with $\epsilon' = c \cdot \epsilon$ for a sufficiently large constant $c$ (specified in the appendix). 
More concretely, for every vector $r \in \{\pm 1\}^N$ whose entries are indexed by the subsets $S \subseteq [n]$, we define the distribution $h_r \in \mathcal F$ as  
\begin{equation*}
\forall S \subseteq [n]:\quad h_r(S) \propto \bar h_r(S) = \frac{1 + r_S \epsilon'}{N} \,,
\end{equation*}
where $\bar h_r$ is the corresponding unnormalized measure.

We assume that $h_r$ is selected from $\mathcal F$ uniformly at random, i.e., each entry $r_S$ is a Rademacher random variable independent from the others. In particular, it is known that distinguishing such a random $h_r$ from the uniform distribution requires $\Omega(\sqrt N / \epsilon'^2)$ samples~\cite{DiakonikolasK:2016, Paninski:08}.

To reduce this uniformity testing problem to our testing problem for $\Upsilon$ and obtain the lower bound $\Omega(\sqrt N / \epsilon'^2) = \Omega(\sqrt N / \epsilon^2)$ for the sample complexity of our problem, it remains to prove that $h_r$ is $\epsilon$-far from the class of log-submodular distributions with high probability. Hence, Lemma~\ref{lem:eps-submod} finishes the proof.

\begin{lemma}\label{lem:eps-submod}
  With high probability, a distribution $h_r$ drawn uniformly from $\mathcal{F}$ is $\epsilon$-far from all log-submodular distributions.
\end{lemma}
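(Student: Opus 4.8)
\textbf{Proof plan for Lemma~\ref{lem:eps-submod}.} The plan is to exhibit many \emph{disjoint} $4$-element ``squares'' on which $h_r$ already violates log-submodularity by a definite amount, and then argue that no single distribution within $\ell_1$-distance $\epsilon$ can simultaneously repair all of them. First I would reduce log-submodularity to its diminishing-returns form: taking $S' = S\cup\{j\}$ in the definition shows that any log-submodular $g$ must satisfy $g(S\cup\{i\})\,g(S\cup\{j\}) \ge g(S)\,g(S\cup\{i,j\})$ for every $S \subseteq [n]$ and distinct $i,j \notin S$. Fix the coordinates $1,2$ and, for each $S \subseteq \{3,\dots,n\}$, set $Q_S = \{S,\, S\cup\{1\},\, S\cup\{2\},\, S\cup\{1,2\}\}$. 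These $N/4$ squares are pairwise disjoint, and the sign four-tuples $(r_S, r_{S\cup\{1\}}, r_{S\cup\{2\}}, r_{S\cup\{1,2\}})$ are mutually independent across $S$ (they involve disjoint Rademacher coordinates).

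Call $Q_S$ \emph{bad} if its sign pattern equals $(+1,-1,-1,+1)$, which happens independently with probability $1/16$. A Chernoff bound then gives at least $N/128$ bad squares with probability $1 - e^{-\Omega(N)}$ (recall $N = 2^n \ge 2^{22}$). I would also note that the normalizer $Z = \sum_T \bar h_r(T) = 1 + \tfrac{\epsilon'}{N}\sum_T r_T$ concentrates, so with the same probability $|Z-1|$ is negligible, every atom obeys $h_r(T) \le 2/N$, and on a bad square $h_r(S\cup\{1\})\,h_r(S\cup\{2\}) - h_r(S)\,h_r(S\cup\{1,2\}) = -4\epsilon'/(NZ)^2 \le -3\epsilon'/N^2$, a quantitative failure of the diminishing-returns inequality.

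Now suppose toward a contradiction that some log-submodular $g$ satisfies $\ell_1(h_r,g) < \epsilon$, and write $\delta = g - h_r$, so $\sum_T |\delta(T)| = 2\ell_1(h_r,g) < 2\epsilon$. For a bad square $Q_S$, expand $g(S\cup\{1\})g(S\cup\{2\}) - g(S)g(S\cup\{1,2\}) \ge 0$ around $h_r$: the zeroth-order term is $\le -3\epsilon'/N^2$, the first-order term in $\delta$ is at most $(2/N)\,x_{Q_S}$ in absolute value (each $h_r(T)\le 2/N$), and the second-order term is at most $x_{Q_S}^2$, where $x_{Q_S} := \sum_{T \in Q_S}|\delta(T)|$. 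Hence $(2/N)x_{Q_S} + x_{Q_S}^2 \ge 3\epsilon'/N^2$; since $\epsilon' = c\epsilon$ is a small absolute constant ($\le 1$), this forces $x_{Q_S} \ge c_0\,\epsilon'/N$ for an absolute constant $c_0 > 0$ (e.g. $c_0 = 0.9$ works, as $x_{Q_S} < 0.9\epsilon'/N$ would make the left side $< 2.61\epsilon'/N^2$). If instead $g$ vanishes at some atom of $Q_S$, the product inequality forces two atoms of $g$ on $Q_S$ to vanish, so $x_{Q_S} \ge 1/N$, which is even stronger. Summing over the pairwise-disjoint bad squares, $2\epsilon > \sum_T|\delta(T)| \ge \sum_{Q_S\text{ bad}} x_{Q_S} \ge \tfrac{N}{128}\cdot\tfrac{c_0\epsilon'}{N} = \tfrac{c_0\epsilon'}{128}$, i.e. $\epsilon' < (256/c_0)\,\epsilon$. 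Choosing the constant $c$ (so that $\epsilon' = c\epsilon$) at least $256/c_0$ contradicts this. Hence no such $g$ exists, so $h_r$ is $\epsilon$-far from all log-submodular distributions, with probability $1 - e^{-\Omega(N)}$ over $r$.

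The routine parts are the Chernoff/Hoeffding estimates and tracking the numerical constants, which pin down how large $c$ must be. The one genuine point of care is the second-order term $\delta_b\delta_c - \delta_a\delta_d$ in the expansion: I must ensure this quadratic remainder cannot by itself supply the missing $\Theta(\epsilon'/N^2)$, which is precisely why the argument needs $\epsilon'$ (hence $\epsilon$) to be a sufficiently small absolute constant — matching the hypothesis $\epsilon \le 0.0005$ — and is the reason a purely first-order heuristic does not close the proof outright.
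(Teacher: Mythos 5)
Your proposal is correct, and it shares the paper's overall scaffolding --- restrict attention to the disjoint squares $\{S,\,S\cup\{1\},\,S\cup\{2\},\,S\cup\{1,2\}\}$ for $S\subseteq[n]\setminus\{1,2\}$, use a Chernoff bound to show a constant fraction of them carry an ``anti-submodular'' sign pattern, sum the per-square contributions, and correct for the normalizer at the end --- but the heart of the argument, the per-square lower bound, is genuinely different. The paper conditions only on three signs ($r_{S\cup\{1,2\}}=+1$, $r_{S\cup\{1\}}=r_{S\cup\{2\}}=-1$, probability $1/8$) and proves the contribution $V_S\geq\epsilon'/8N$ by a case analysis on the ratio $f(S\cup\{1,2\})/f(S\cup\{2\})$ against a threshold $\rho$, invoking the log-form of submodularity to transfer the constraint to the pair $(S\cup\{1\},S)$ and then splitting into sub-cases. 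You instead pass to the multiplicative form $g(S\cup\{1\})g(S\cup\{2\})\geq g(S)g(S\cup\{1,2\})$, compute that $h_r$ violates it by exactly $4\epsilon'/(NZ)^2$ on a bad square, and run a first/second-order perturbation argument showing that repairing the violation costs $\Omega(\epsilon'/N)$ in $\ell_1$ on that square. Your route is more symmetric and yields the per-square cost from a single quantitative stability inequality, at the price of having to dominate the quadratic remainder (hence needing $\epsilon'$ bounded, which you correctly flag and which matches the paper's own requirement $\epsilon'\leq 2/3$); the paper's route is purely first-order and entirely elementary but requires the somewhat ad hoc threshold $\rho$ and a three-way case split. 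Two small points: your claim that a vanishing atom of $g$ ``forces two atoms to vanish'' is only true when the zero lies on the product side $g(S\cup\{1\})g(S\cup\{2\})$ --- if instead $g(S)=0$ or $g(S\cup\{1,2\})=0$ nothing further is forced --- but a single vanishing atom already gives $x_{Q_S}\geq(1-\epsilon')/(NZ)\gg\epsilon'/N$, so the conclusion stands; and your constant ($c\approx 285$) differs from the paper's $c=1024$, which is immaterial.
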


\textbf{Proof sketch for Lemma~\ref{lem:eps-submod}.} We fix an arbitrary log-submodular distribution $f$ and first show that (1)~the $\ell_1$-distance $\ell_1(f,\bar h_r)$ between $f$ and the unnormalized measure $\bar h_r$ is large with high probability, independent of $f$ (we define the $\ell_1$-distance of general measures the same as for probability measures). Then, (2) we show that if $\ell_1(f,\bar h_r)$ is large, $\ell_1(f, h_r)$ is also large. 

To address (1), we define a family $\SSS_r$ of subsets that, as we prove, satisfies:
\begin{enumerate}
    \item[(P1)]  With high probability, $\SSS_r$ has cardinality at least $N / 64$. 
    \item[(P2)] For every $ S \in \SSS_r$, there is a contribution of at least $\epsilon'/8N$ to $\ell_1(f,\bar h_r)$ from the term $V_S$ defined as
    \begin{align*}
        V_S \coloneqq & \tfrac{1}{2}|\bar h_r(S) - f(S)| + \tfrac{1}{2}|\bar h_r(S \cup \{1\}) - f(S \cup \{1\})| + \\
        & \tfrac{1}{2}|\bar h_r(S \cup \{2\}) - f(S \cup \{2\})| + \tfrac{1}{2}|\bar h_r(S \cup \{1,2\}) - f(S \cup \{1,2\})|.
    \end{align*}
\end{enumerate}
Together, the above properties imply that
$\ell_1(\bar h_r, f) \geq (N/64) \times (\epsilon'/8N) = \epsilon'/512.$ 

We define the important family $\SSS_r$ as
\begin{equation*}
\SSS_r \coloneqq \{ S \subseteq [n] \setminus \{1, 2\}\, |\, r_{\left(S \cup \{ 1 , 2 \}\right)} = 1, \ r_{\left(S \cup \{ 2 \}\right)} = -1, \ r_{\left(S \cup \{ 1 \}\right)} = -1\}.
\label{Sdefinition}
\end{equation*}

Property (P1) follows from a Chernoff bound for the random variables $\mathbbm{1}\{S \in \mathcal S_r\}, \, \forall S \subseteq [n] \setminus \{1,2\}$, where $\mathbbm{1}\{.\}$ is the indicator function.
For proving Property P2, we distinguish two cases, depending on the ratio $f((S \cup \{1,2\}) / f(S \cup \{2\})$. One of these cases relies on the definition of log-submodularity.

Finally, to show that (2) a large $\ell_1(f,\bar{h}_r)$ implies a large $\ell_1(f,h_r)$, we control the normalization constant $\sum_{S \subseteq [n]} \bar h_r(S)$.
The full proof may be found in Appendix~\ref{app:appendixi}.

\section{Discussion} \label{sec:tradeoffs}
In this paper, we initiate the study of distribution testing for DPPs. Our lower bound of $\Omega(\sqrt{N}/\epsilon^2)$ shows that, despite the rich mathematical structure of DPPs, testing whether $q$ is a DPP or $\epsilon$-far from it has a sample complexity similar to uniformity testing. This bound extends to related distributions that have gained interest in machine learning, namely log-submodular distributions and strongly Rayleigh measures. Our algorithm \dpptester\ demonstrates that this lower bound is tight for DPPs, via an almost matching upper bound of $\tilde{O}(\sqrt{N}/\epsilon^2)$.

One may wonder what changes when using the moment-based learning algorithm from \cite{urschel2017learning} instead of the learner from Section~\ref{sec:learning}, which yields optimal testing sample complexity. With \cite{urschel2017learning}, we obtain a single estimate $\hat K^{\text{new}}$ for $K^*$, apply a single robust $\chi^2$-$\ell_1$ test against $\Pr[\hat K^{\text{new}}]{.}$, and return its output. The resulting algorithm \dpptesterr\ shows a statistical-computational tradeoff: since it performs only one test, it gains in running time, but its sample complexity is no longer logarithmic in $\alpha$ and $\zeta$, and can be larger than $O(\sqrt N/\epsilon^2)$.
\begin{theorem} \label{thm:dpptester2}
 To test against the class of $(\alpha,\zeta)$-normal DPPs, \dpptesterr\ needs $O\Big(n^4 \log(n)/\epsilon^2 \alpha^2 \zeta^2 + \ell (4/\alpha)^{2\ell} \log(n) + \sqrt{N}/\epsilon^2\Big)$ samples, and runs in time $O(N n^3 + n^6 + mn^2)$, where $m$ is the number of input samples and $\ell$ is the cycle sparsity\footnote{The cycle sparsity of a graph is the smallest $\ell'$ such that the cycles with length at most $\ell'$ constitute a basis for the cycle space of the graph.} of the graph corresponding to the non-zero entries of $K^*$.
\end{theorem}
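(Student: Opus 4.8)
The plan is to analyze \dpptesterr\ as a two-stage procedure: on a first batch of $m_1$ samples it runs the method-of-moments learner of \cite{urschel2017learning} to produce a symmetric estimate $\hat K^{\text{new}}$ of $K^*$, thresholds small magnitudes and projects onto the convex set $\{A \in S_n^+ : \zeta I \preceq A \preceq (1-\zeta)I\}$ to obtain a valid marginal kernel $\Pi(\hat K^{\text{new}})$ (a $(0,\zeta)$-normal DPP), and on a fresh batch of $m_2 = O(\sqrt N/\epsilon^2)$ samples it runs a single robust $\chi^2$-$\ell_1$ identity test \citep{ADK15} of $q$ against $\tilde p \coloneqq \Pr[\Pi(\hat K^{\text{new}})]{.}$, returning the tester's output. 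If $q$ is $\epsilon$-far from all DPPs, then it is $\epsilon$-far from the DPP $\tilde p$ and the robust test rejects, so the only real work is the case where $q$ is an $(\alpha,\zeta)$-normal DPP with kernel $K^*$.

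In that case I would import two guarantees from \cite{urschel2017learning}. Their magnitude step recovers $|K^*_{ij}|$ from $(K^*_{ij})^2 = \Pr[K^*]{\{i\}}\Pr[K^*]{\{j\}} - \Pr[K^*]{\{i,j\}}$: estimating each of the $O(n^2)$ first- and second-order marginals by empirical frequencies, Chernoff plus a union bound give additive accuracy $\eta$ on all of them from $O(\log(n)/\eta^2)$ samples, and dividing the squared-error estimate by $|\hat K^{\text{new}}_{ij}| + |K^*_{ij}| \ge \alpha$ (valid since every nonzero entry is $\ge\alpha$) upgrades this to per-entry accuracy $O(\eta/\alpha)$, with thresholding at $\alpha/2$ recovering the support exactly once $\eta = O(\alpha^2)$ so that genuinely zero entries contribute nothing. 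Their sign step then fixes the signs of all entries exactly, with high probability, from cycle moments whose magnitude is at least $\alpha^{\ell}$, at the cost of $O(\ell(4/\alpha)^{2\ell}\log(n))$ samples, where $\ell$ is the cycle sparsity of the support graph of $K^*$ (for a forest support all sign patterns are equivalent under $\pm1$ diagonal conjugation, so no extra cost). Taking $\eta = \Theta(\epsilon\zeta/n^2)$, these combine to $m_1 = O\big(n^4\log(n)/(\epsilon^2\alpha^2\zeta^2) + \ell(4/\alpha)^{2\ell}\log(n)\big)$ samples and, with high probability, $\|\hat K^{\text{new}} - K^*\|_F = O(\epsilon\zeta/n)$; since $K^*$ lies in the projection set, $\|\Pi(\hat K^{\text{new}}) - K^*\|_F$ is no larger.

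Next I would convert this Frobenius closeness of the kernels into the $\chi^2$ closeness the robust tester requires. Each atom probability of $\tilde p$ is $|\det(\Pi(\hat K^{\text{new}}) - I_{\bar J})|$, and Lemma~\ref{smallest-singular-value-lemma} gives $\sigma_n(\Pi(\hat K^{\text{new}}) - I_{\bar J}) \ge \zeta(1-\zeta)/\sqrt2$ for every $J$. Writing $\det(K^* - I_{\bar J}) = \det(\Pi(\hat K^{\text{new}}) - I_{\bar J})\det\big(I + (\Pi(\hat K^{\text{new}}) - I_{\bar J})^{-1}(K^* - \Pi(\hat K^{\text{new}}))\big)$, bounding the correction in operator norm by $\frac{\sqrt2}{\zeta(1-\zeta)}\|\Pi(\hat K^{\text{new}}) - K^*\|_F = O(\epsilon/n)$, and using $|\det(I+M)-1| = O(n\|M\|_{\mathrm{op}})$ when $n\|M\|_{\mathrm{op}}$ is a small constant, gives $|q(J)/\tilde p(J) - 1| = O(\epsilon)$ uniformly in $J$. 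Hence $\chi^2(q,\tilde p) = \sum_J \tilde p(J)(q(J)/\tilde p(J)-1)^2 = O(\epsilon^2)$, which the constants in $m_1$ make at most $\epsilon^2/500$. The robust $\chi^2$-$\ell_1$ tester of \citep{ADK15} then accepts $\tilde p$ with $O(\sqrt N/\epsilon^2)$ samples; a union bound over the learner's and the tester's failure events establishes correctness, and $m = m_1 + m_2$ yields the stated sample complexity.

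For the running time: forming all first- and second-order empirical marginals (and the relevant short-cycle moments) from $m$ samples costs $O(mn^2)$; the reconstruction and sign recovery of \cite{urschel2017learning} cost $O(n^6)$; the projection is one eigendecomposition, $O(n^3)$; and computing the statistic $Z^{(m)}$ of \eqref{eq:statistic} requires $\tilde p(J) = |\det(\Pi(\hat K^{\text{new}}) - I_{\bar J})|$ for each of the $N = 2^n$ sets $J$, one $n\times n$ determinant apiece, i.e.\ $O(Nn^3)$, which dominates, for a total of $O(Nn^3 + n^6 + mn^2)$. I expect the main obstacle to be the accuracy bookkeeping of the middle step: tracing the requirement $\chi^2(q,\tilde p)\le\epsilon^2/500$ back through the determinant/singular-value estimate to Frobenius accuracy $\epsilon\zeta/n$ in the kernel, then to per-entry accuracy $\epsilon\zeta/n^2$, and finally --- via the squared-to-linear conversion $|K^*_{ij}| = \sqrt{(K^*_{ij})^2}$, which divides by a quantity that may be as small as $\alpha$ --- to a per-marginal Chernoff accuracy of order $\epsilon\alpha\zeta/n^2$, which is exactly what forces $\alpha^{-2}\zeta^{-2}$ into the bound; alongside this, faithfully importing the cycle-sparsity-dependent sign-recovery guarantee of \cite{urschel2017learning} and checking that support recovery via thresholding absorbs the exactly-zero entries without additional sample cost.
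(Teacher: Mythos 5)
Your proposal is correct and follows essentially the same route as the paper's Appendix G: plug the accuracy target $\wp = \Theta(\epsilon\zeta/n^2)$ from the proof of Theorem~\ref{thm:learning} into the sample complexity $O\bigl((\alpha^{-2}\wp^{-2} + \ell(4/\alpha)^{2\ell})\log n\bigr)$ of the moment-based learner of \cite{urschel2017learning}, reuse the kernel-closeness-to-$\chi^2$-closeness argument, add one robust $\chi^2$-$\ell_1$ test, and account for the $O(Nn^3)$ determinant evaluations in the runtime. The only points to tidy are that \cite{urschel2017learning} recovers $K^*$ only in pseudo-distance (up to conjugation by a $\pm1$ diagonal matrix $D$ --- harmless since $DK^*D$ induces the same DPP, but your claim that signs are fixed ``exactly'' should be stated modulo this equivalence, as the paper does), and the small inconsistency between $\eta=\Theta(\epsilon\zeta/n^2)$ in your second paragraph and the correct $\eta=\Theta(\epsilon\alpha\zeta/n^2)$ you identify in your last.
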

Assuming a constant cycle sparsity may improve the sample complexity, but our lower bound still applies even with assumptions on cycle sparsity.

While the results in this paper focus on sample complexity for general DPPs, it is an interesting avenue of future work to study whether additional structural assumptions, or a widening to strongly log-concave measures \cite{anariI18,anariII}, can lead to further statistical and computational benefits or tradeoffs.

\subsection*{Acknowledgments}
This research was supported in part by NSF CAREER award 1553284 and NSF BIGDATA award IIS-1741341, NSF award numbers CCF-1733808, NSF BIGDATA award IIS-1741137, and MIT-IBM Watson AI Lab and Research Collaboration Agreement No. W1771646.
The authors would like to thank Ankur Moitra and Behrooz Tahmasebi for helpful discussions.

\bibliographystyle{plainnat}
\bibliography{arxiv_paper}

\appendix
\section{Proof of the Learning Guarantee} 
\label{app:appendixa}

 In this section, we prove Theorem~\ref{thm:learning}. First, we restate the definition of $(\alpha, \zeta)$-normal DPPs (Definition~\ref{def:normalitydefinition}) below. 
\begin{customdef}{1} \label{def:normalitydefinition2}
  For $\zeta \in [0, 0.5]$ and $\alpha \in [0,1]$, a DPP with marginal kernel $K$ is \emph{$(\alpha, \zeta)$-normal} if: 
  \vspace{-5pt}
\begin{enumerate}\setlength{\itemsep}{0pt}
    \item The eigenvalues of $K$ are in the range $[\zeta, 1 - \zeta]$; and
    \item For $i,j \in [n]: K_{i,j} \neq 0 \Rightarrow |K_{i,j}| \geq \alpha$.
\end{enumerate}
\end{customdef}
Next, we reproduce Theorem~\ref{thm:learning} for convenience.
We set $m = \lceil (\ln(1/\delta) +  1)\sqrt{N}/\epsilon^2\rceil$ to be the number of samples, and use the parameter $\varsigma \coloneqq \lceil 200 n^2\zeta^{-1}  \min\{2\xi/\alpha, \sqrt{\xi/\epsilon} \} \rceil$, with $\xi \coloneqq N^{-\frac{1}{4}} \sqrt{\log(n) + 1}$ below:
\begin{customthm}{3}
Let $q$ be an $(\alpha, \zeta)$-normal DPP distribution with marginal kernel $K^*$. Given the parameters defined above, suppose we have $m$ samples from $q$. 
Then, one can generate a  set $\mathcal{M}$ of DPP distributions with cardinality $|\mathcal{M}| = (2\varsigma+1)^{n^2}$, such that, with probability at least $1 - \delta$, there is a distribution $\tilde p \in \mathcal{M}$ with $\chi^2(q,\tilde p) \leq \epsilon^2/500$. 
\end{customthm}

\begin{proof}[Proof of Theorem~\ref{thm:learning}]
 To prove Theorem~\ref{thm:learning}, first we estimate each entry of the marginal kernel $K^*$ and generate the set $\mathcal M$ of our candidate DPPs, which contains a DPP $\tilde p \in \mathcal M$ whose marginal kernel is close to $K^*$ in the Frobenius distance. Then, we show that that the closeness between the marginal kernels of $\tilde p$ and $q$ implies the desired upper bound in $\chi^2$-distance and $\ell_1$-distance. We start by introducing the initial estimate $\hat K$, which estimates the entries of $K^*$ from our samples. 
\\
\\
\textbf{Estimating entries of $\pmb K^*$:} Note that one can write the entries of the matrix $K^*$ in terms of the marginal probabilities of subsets of size one and two as follows:  
\begin{align}
& \Pr[\mathcal J \sim K^*]{i \in \mathcal J} = \det \left(\left[ K^*_{i,i}\right]\right) =  K^*_{i,i}, \label{rel1} \\   
& \Pr[\mathcal J \sim K^*]{\{ i,j\} \subseteq \mathcal J} = \det\left(\left[
    \begin{array}{cc}
        K^*_{i,i} & K^*_{i,j} \\
        K^*_{j, i} & K^*_{j, j}
    \end{array}
\right]\right)  = K^*_{i,i}K^*_{j,j} - (K^*_{i,j})^2. \label{rel2}
\end{align}
Given the sampled subsets $\{ \mathcal J^{(t)}\}_{t=1}^m$, we can estimate the above marginal probabilities using the number of appearances of every single element and every pair of elements among $\mathcal J^{(1)}, \mathcal J^{(2)}, ..., \mathcal J^{(m)}$.

For each $i \in [n]$, we estimate $K^*_{i,i}$ by the average of the $\ind\{\{i\} \subseteq \mathcal J^{(t)}\}$'s: 
$$\hat{K}_{i,i} \coloneqq \frac{1}{m}\sum_{t=1}^m \ind\{\{i\} \subseteq \mathcal J^{(t)}\}\,.$$
We also denote the averages of the $\ind\{\{i,j\} \subseteq \mathcal J^{(t)}\}$'s by $\hat{u}_{i,j}$. 
$$\hat{u}_{i,j} \coloneqq \frac{1}{m}\sum_{t=1}^m \ind\{\{i,j\} \subseteq \mathcal J^{(t)}\}\,.$$ 
Using the estimates $\hat{u}_{i,j}$, $\hat{K}_{i,i}$, and $\hat{K}_{j,j}$, we can also estimate $(K^*_{i,j})^2$ by the term $\hat{K}_{i,i}\hat{K}_{j,j} - \hat{u}_{i,j}$, based on Equation~\eqref{rel2}. To derive confidence intervals for our estimates, we use the Hoeffding bound and a union bound, which implies that with probability at least $1 - \delta
$: 
\begin{align}
    \forall i \in [n]: \quad &\, \hat{K}_{i,i} \in \left[\Pr[\mathcal J \sim K^*]{i \subseteq \mathcal J} - \xi \epsilon \ , \ \Pr[\mathcal J \sim K^*]{i \subseteq \mathcal J} + \xi \epsilon  \right], \label{firstconfidence} 
    \\
    \forall \{ i,j\}\subseteq [n]: \quad & \, \hat{u}_{i,j} \in \left[\Pr[\mathcal J \sim K^*]{\{ i , j \} \subseteq \mathcal J} - \xi \epsilon\ , \ \Pr[\mathcal J \sim K^*]{\{ i , j \} \subseteq \mathcal J} + \xi\epsilon \right]\,, \label{secondconfidence}
\end{align} 
where $\xi \coloneqq N^{-\frac{1}{4}} \sqrt{\log(n) + 1}$. Note that Equation~\eqref{rel2} does not reveal any information about the sign of $K^*_{i,j}$. However,  we can estimate its magnitude $|K^*_{i,j}|$.  Thus, we consider the following two estimates for $K^*_{i,j}$:
\begin{align}
     \forall \{i,j\} \subseteq [n]:\quad  & \begin{array}{l}
         \hat{K}_{i,j}^{(+)} \coloneqq \sqrt{\max \{ \hat{K}_{i,i}\hat{K}_{j,j} - \hat{u}_{i,j} \ , \ 0 \} }\,, 
         \vspace{2mm}\\
          \hat{K}_{i,j}^{(-)} \coloneqq -\sqrt{\max \{ \hat{K}_{i,i}\hat{K}_{j,j} - \hat{u}_{i,j} \ , \ 0 \} }\,. \label{Kijdefinition}
     \end{array}
\end{align}

Now, let $\hat{K}_{i,j}$ be whichever of $\hat{K}_{i,j}^{(+)}$ or $\hat{K}_{i,j}^{(-)}$ has the same sign as $K^*_{i,j}$. Then, according to Equations~\eqref{firstconfidence}, \eqref{secondconfidence}, and \eqref{Kijdefinition}, we achieve:  
\begin{align*}
\left|\hat{K}_{i,j}^2 - {K^*}_{i,j}^2\right| & \leq \left|K^*_{i,i} K^*_{j,j} - \hat{K}_{i,i}\hat{K}_{j,j}\right| + \left|\Pr[\mathcal J \sim K^*]{\{i,j\}\subseteq  \mathcal J} - \hat{u}_{i,j}\right|
\\ \leq & \max \{ |(K^*_{i,i}+\xi \epsilon)(K^*_{j,j}+\xi\epsilon)-K^*_{i,i}K^*_{j,j}| ,  |(K^*_{i,i}-\xi\epsilon)(K^*_{j,j}-\xi\epsilon)-K^*_{i,i}K^*_{j,j}| \} + \xi\epsilon \\
 \leq & 3\xi\epsilon + {(\xi\epsilon)}^2 \leq 4\xi\epsilon,
\end{align*}
where we used $\xi \epsilon \leq 1$ and that $\forall i,j \in [n]: \, |K^*_{i,j}| \leq 1$. Moreover, using the fact that $\hat{K}_{i,j}$ and $K^*_{i,j}$ have the same sign,
\begin{align*}
    |\hat{K}_{i,j} - K^*_{i,j}|^2 \leq |\hat{K}_{i,j} - K^*_{i,j}\|\hat{K}_{i,j} + K^*_{i,j}| = |\hat{K}_{i,j}^2 - {K^*}_{i,j}^2| \leq 4\xi\epsilon,
\end{align*}
which gives 
\begin{align}
|\hat{K}_{i,j} - K^*_{i,j}| \leq 2\sqrt{\xi\epsilon}. \label{epsilonbound}
\end{align}
On the other hand, we have the lower bound $\alpha$ on the absolute value of the non-zero entries of $K^*$ from the $\alpha$-normality condition~\eqref{def:normalitydefinition2}, so for non-zero $K^*_{i,j}$ we have:
\begin{align}
    |\hat{K}_{i,j} - K^*_{i,j}| \leq \frac{4\xi\epsilon}{|\hat{K}_{i,j} + K^*_{i,j}|} 
    = \frac{4\xi\epsilon}{|\hat{K}_{i,j}| + |K^*_{i,j}|}
    \leq \frac{4\xi\epsilon}{\alpha}.  \label{alphabound}
\end{align}
Combining Equation~\eqref{alphabound} and Equation~\eqref{epsilonbound}, we obtain:
\begin{align}
    |\hat{K}_{i,j} - K^*_{i,j}| \leq 2 \epsilon \min\left\{ \frac{2\xi}{\alpha} , \sqrt{\frac{\xi}{\epsilon}}\right\}.
    \label{mainerrorbound}
\end{align}

Note that by dropping the $\alpha$-normality condition, we still have the bound $|\hat{K}_{i,j} - K^*_{i,j}| \leq 2 \sqrt{\xi \epsilon}$. Hence, the upper bound in Equation~\eqref{mainerrorbound} holds even by setting $\alpha = 0$, which is equivalent to having no $\alpha$-normality for $K^*$. 

\textbf{Generating candidate matrices and DPPs for $\MM$:} Our goal is to eventually bound the $\chi^2$-distance between $q$ and our estimated distribution. To achieve this goal (as we see shortly), it is enough that one estimates each entry of $K^*$ up to an additive error of 
\begin{align} \label{sigmadefinition}
\wp \coloneqq \frac{\epsilon \zeta}{100 n^2}.
\end{align}
In some natural parameter regimes, i.e. when $\epsilon = \tilde{\Omega}(\zeta^{-2} N^{-\frac{1}{4}})$ or $\alpha = \tilde{\Omega}(\zeta^{-1}{N^{-\frac{1}{4}}})$, $\wp$ is larger than the upper bound that we already have in Equation~\eqref{mainerrorbound} and so we can return the distribution of $\hat K$ as our estimate for $q$. However, if this is not the case, we need more candidates to make sure at least one of them is close to $K^*_{i,j}$. Note that $K^*_{i,j}$ is already in the range 
\\ $\left[\hat{K}_{i,j} - 2 \epsilon \min\left\{ {2\xi}/{\alpha} , \sqrt{{\xi}/{\epsilon}}\right\}, \hat{K}_{i,j} + 2 \epsilon \min\left\{ {2\xi}/{\alpha} , \sqrt{{\xi}/{\epsilon}}\right\} \right]$
with high probability. Therefore, we divide this range into 
$\varsigma \coloneqq \lceil 2 \epsilon \min\left\{ {2\xi}/{\alpha} , \sqrt{{\xi}/{\epsilon}}\right\}/\wp \rceil = \lceil 200 n^2\zeta^{-1}  \min\{2\xi/\alpha, \sqrt{\xi/\epsilon}\} \rceil$  intervals of equal length. This way, it is guaranteed that the true $K^*_{i,j}$ is $\wp$-close to one of the midpoints of these intervals (except when $K^*_{i,j}$ is zero which we handle separately). As discussed, this partitioning (in the literature of learning theory, this technique is called \textit{bracketing}) allows the algorithm to achieve the optimal sample complexity.

Now, we claim that there are $2 \varsigma + 1$ candidates for $K^*_{i,j}$. This number comes from the fact that we do not know whether $\hat{K}_{i,j}$ is equal to $\hat{K}_{i,j}^{(+)}$ or $\hat{K}_{i,j}^{(-)}$ a priori. Thus, each option provides $\varsigma$ candidates.
Also, we have to consider the case $K^*_{i,j} = 0$ separately because the lower bound $\alpha$ only holds for non-zero entries $K^*_{i,j}$. By considering all the combinations of candidates for each entry, we obtain a set $M$ of matrices. Since each entry has a $\wp$-close candidate, there exists a matrix $\tilde{K} \in M$ such that all of its entries are $\wp$-close to the true kernel matrix $K^*$. Therefore, this matrix is $(n\wp)$-close to $K^*$ in the Frobenius distance. As we discussed in section~\ref{sec:learning}, we project each $K \in M$ onto the set of valid marginal kernels and consider the set of candidate distributions $\MM \coloneqq \{ \Pr[\Pi(K)]{.} | \, K \in M \}$. The projection is with respect to the Frobenius distance between matrices, and it is easy to see that computing $\Pi(K)$ is equivalent to rounding up the eigenvalues of $K$ that are negative to zero, and rounding down the ones that are greater than one to one. Now for the DPP distribution $\tilde p = \Pr[\Pi(\tilde K)]{.} \in \MM$, we prove the following claims:

\begin{enumerate}[start=1,label={(C\arabic*)}]
    \item The kernels $\Pi(\tilde K)$ and $K^*$ are close in operator norm:
    $$\|\Pi(\tilde{K}) - K^* \|_2 \leq \frac{\epsilon \zeta}{100 n}.$$ \label{claim:first} 
    \item 
    The singular values of $\Pi(\tilde K)$ are in the range $[99\zeta/100, 1 - 99\zeta/100]$.  \label{claim:second}
\end{enumerate}

For the first claim~\ref{claim:first}, it is enough to write
\begin{align}
\| \Pi(\tilde{K}) - K^* \|_2 \leq \| \Pi(\tilde{K}) - K^* \|_F =
\| \Pi(\tilde{K}) - \Pi(K^*) \|_F \leq \| \tilde{K} - K^* \|_F \leq  n\wp = \frac{\epsilon \zeta}{100 n}.
\label{eq:frob2}
\end{align}
where $\|.\|_2$ and $\|.\|_F$ refer to matrix operator norm and Frobenius norm respectively. The first inequality holds because the spectral norm is bounded by the Frobenius norm, the first equality follows from the fact that $K^*$ is a valid marginal kernel, and the second inequality is because of the contraction property of projection.

Next, we prove the second claim~\ref{claim:second}. Using the variational characterization of the Operator norm and noting the fact that $ \Pi(\tilde{K}) - K^*$ is symmetric (thus its singular values are the absolute values of its eigenvalues), we have $$\| \Pi(\tilde{K}) - K^* \|_2 = \max_{v,\|v\|_2 = 1} |v^T(\Pi(\tilde{K}) - K^* )v|.$$
Combining this with Equation~\eqref{eq:frob2} then implies the following for every normalized vector $\|v\|_2 = 1$:
\begin{align}
 -\frac{\epsilon \zeta}{100n} \leq v^T(\Pi(\tilde{K}) - K^*)v \leq \frac{\epsilon \zeta}{100n} \label{eq:variational1}
\end{align}
Since $\Pr[K^*]{.}$ is $\zeta$-normal due to our assumption, we also have
\begin{align}
\zeta \leq v^T K^* v \leq 1 - \zeta. \label{eq:variational2}    
\end{align}
Combining Inequalities~\eqref{eq:variational1} and~\eqref{eq:variational2} yields
$$v^T \Pi(\tilde{K}) v \geq \zeta -\frac{\epsilon \zeta}{100n} \geq \zeta -\frac{ \zeta}{100} = \frac{99\zeta}{100},$$
and similarly 
$$v^T \Pi(\tilde{K}) v \leq 1 - \zeta + \frac{\epsilon \zeta}{100n} \leq 1 - \frac{99\zeta}{100},$$
for any arbitrary normalized vector $v$. Finally, using the variational characterization of the smallest and largest eigenvalues, we obtain that all eigenvalues of $\Pi(\tilde K)$ are in the range $[99\zeta/100, 1-99\zeta/100]$. Note that the singular values of $\Pi(\tilde{K})$ are the absolute values of its eigenvalues, simply because $\Pi(\tilde{K})$ is symmetric, which completes the proof of the second claim~\ref{claim:second}. We use these claims~\ref{claim:first},~\ref{claim:second} in the next part.
\\
\\
\noindent\textbf{Closeness in parameter space implies closeness of the distributions: }
In this part of the proof, we show that closeness between $K^*$ and $\Pi(\tilde{K})$ in operator norm ensures the closeness of the distributions $q$ and $\tilde p$ with respect to the $\chi^2$-distance and $\ell_1$-distance. This result is based on the following Lemma, whose proof we defer to the end of this section.
\begin{lemma}\label{lem:perturbation}
For arbitrary symmetric matrices $B$ and $E$, we have
$$\Big| |\det(B+E)| - |\det(B)| \Big| \leq |\det(B)| \frac{n\|E\|_2}{\sigma_n(B)} \left( \frac{\|E\|_2}{\sigma_n(B)} + 1 \right)^{n-1},$$
where $\sigma_n(B)$ is the smallest singular value of $B$.
\end{lemma}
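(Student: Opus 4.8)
\textbf{Proof proposal for Lemma~\ref{lem:perturbation}.} The plan is to factor out $B$ so the problem reduces to perturbing the determinant of a matrix close to the identity. We may assume $B$ is invertible (in the intended applications $\sigma_n(B) > 0$; cf.\ Lemma~\ref{smallest-singular-value-lemma}). Write $B + E = B(I + M)$ with $M \coloneqq B^{-1}E$, so that $\det(B+E) = \det(B)\det(I+M)$ and hence
$$\Big|\,|\det(B+E)| - |\det(B)|\,\Big| = |\det(B)|\cdot\Big|\,|\det(I+M)| - 1\,\Big| \le |\det(B)|\cdot\big|\det(I+M) - 1\big|,$$
the last step being the reverse triangle inequality $\big||z| - 1\big| \le |z - 1|$. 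Also $\|M\|_2 \le \|B^{-1}\|_2\|E\|_2 = \|E\|_2/\sigma_n(B)$; denote this last quantity by $t$. So it remains to show $\big|\det(I+M) - 1\big| \le n\|M\|_2(1 + \|M\|_2)^{n-1}$, and then to replace $\|M\|_2$ by the upper bound $t$ using that $u \mapsto nu(1+u)^{n-1}$ is increasing on $u \ge 0$.

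Next I would write $\det(I+M) = \prod_{i=1}^n(1 + \mu_i)$, where $\mu_1,\dots,\mu_n$ are the eigenvalues of $M$ counted with multiplicity over $\mathbb{C}$, and use the elementary inequality $\big|\prod_{i=1}^n(1 + a_i) - 1\big| \le \prod_{i=1}^n(1 + |a_i|) - 1$ for complex $a_i$, which follows by a one-line induction on $n$ (split off the last factor and use $|P| \le \prod(1+|a_i|)$ from the inductive hypothesis). Since $|\mu_i|$ is at most the spectral radius of $M$, which is at most $\|M\|_2$, this gives $\big|\det(I+M) - 1\big| \le (1 + \|M\|_2)^n - 1$. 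Finally $(1+u)^n - 1 = \int_0^u n(1+s)^{n-1}\,ds \le nu(1+u)^{n-1}$, so with $u = \|M\|_2 \le t$ and the monotonicity above we obtain
$$\Big|\,|\det(B+E)| - |\det(B)|\,\Big| \le |\det(B)|\cdot nt(1+t)^{n-1} = |\det(B)|\,\frac{n\|E\|_2}{\sigma_n(B)}\Big(\frac{\|E\|_2}{\sigma_n(B)} + 1\Big)^{n-1},$$
which is the claim. (An alternative to the eigenvalue step: expand $\det(I+M) = \sum_{S \subseteq [n]}\det(M_S)$ over principal minors, bound $|\det(M_S)| \le \|M\|_2^{|S|}$ since every principal submatrix of $M$ has operator norm at most $\|M\|_2$, and sum $\sum_{k=1}^n\binom{n}{k}\|M\|_2^k = (1+\|M\|_2)^n - 1$; this avoids complex eigenvalues.)

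I do not expect a genuine obstacle here, as the argument is essentially a computation. The only points needing care are that $M = B^{-1}E$ need not be symmetric even though $B$ and $E$ are, so one must work with its (possibly complex) eigenvalues, or with principal minors, rather than diagonalizing; and the verification that $u \mapsto nu(1+u)^{n-1}$ is monotone, which is what licenses passing from the sharp quantity $\|M\|_2$ to the cruder bound $\|E\|_2/\sigma_n(B)$. In fact symmetry of $B$ and $E$ plays no role beyond letting us assume $\sigma_n(B) > 0$; it is presumably assumed only to match the setting in which the lemma is later applied.
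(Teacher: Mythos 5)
Your proof is correct, but it takes a genuinely different route from the paper's. The paper invokes Corollary 2.7 of the Ipsen--Rehman determinant perturbation bound, $|\det(B+E)-\det(B)| \leq \sum_{i=1}^n s_{n-i}\|E\|_2^i$ with $s_k$ the elementary symmetric functions of the singular values of $B$, then bounds $s_{n-i} \leq \binom{n}{i}\sigma_1\cdots\sigma_{n-i}$, divides by $|\det(B)| = \sigma_1\cdots\sigma_n$, and uses $\binom{n}{i} \leq n\binom{n-1}{i-1}$ to resum the binomial series into $n t(1+t)^{n-1}$ with $t = \|E\|_2/\sigma_n(B)$. You instead factor $B+E = B(I+M)$ with $M = B^{-1}E$ and prove the needed estimate $|\det(I+M)-1| \leq (1+\|M\|_2)^n - 1$ from scratch (either via complex eigenvalues and the product inequality, or via the principal-minor expansion); both arguments then converge on the same elementary step $(1+u)^n - 1 \leq nu(1+u)^{n-1}$. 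Your version is self-contained where the paper's relies on an external citation, and your intermediate bound is marginally sharper since $\|B^{-1}E\|_2$ can be smaller than $\|E\|_2/\sigma_n(B)$; both proofs equally require $B$ invertible (the paper's also divides by $|\det(B)|$), which is harmless since the stated bound is vacuous otherwise, and in the application Lemma~\ref{smallest-singular-value-lemma} guarantees $\sigma_n(B)>0$. Your closing remark that symmetry of $B$ and $E$ is inessential is also accurate for both arguments.
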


Now consider an arbitrary set $J \subseteq [n]$ and its complement $\bar J$. 
Recall that Equation~\eqref{eq:atomprobabilities} gives:
\begin{align*}
    \tilde p(J) = |\det(\Pi(\tilde K) - I_{\bar{J}})| \, , \,
    q(J) = |\det(K^* - I_{\bar J})|.
\end{align*}
Therefore, setting $B \coloneqq \Pi(\tilde K) - I_{\bar J}$ and $E \coloneqq K^* - \Pi(\tilde{K})$ in Lemma~\ref{lem:perturbation}, we can upper bound $|q(J) - \tilde p(J)|$ as
\begin{align}
|q(J) - \tilde p(J)| \leq \tilde p(J) \frac{n\| E\|_2}{\sigma_n(B)} \left(\frac{\| E\|_2}{\sigma_n(B)} + 1\right)^{n-1}. \label{eq:initialatombound}
\end{align}

Furthermore, from the second claim~\ref{claim:second} of the previous part, the singular values of $\Pi(\tilde K)$ are in the range $[99\zeta/100, 1-99\zeta/100]$, which means the kernel matrix $\Pi(\tilde K)$ satisfies the condition of Lemma~\ref{smallest-singular-value-lemma}. Therefore, from Lemma~\ref{smallest-singular-value-lemma}, the smallest singular value of $B$ is lower bounded as
 $$\sigma_n(B) \geq \frac{99\zeta/100(1- 99\zeta/100)}{\sqrt 2} \geq \frac{99\zeta}{200\sqrt 2},$$
 where we used $1 - 99\zeta/100 > 1/2$.
 Combining this with the first claim~\ref{claim:first} of the previous part implies $$\frac{\| E\|_2}{\sigma_n(B)} \leq \frac{2 \sqrt 2 \epsilon}{99 n}.$$
 Hence, Equation~\eqref{eq:initialatombound} gives: 
\begin{align}
    |q(J) - \tilde p(J)| \leq \tilde p(J) \frac{2\sqrt2\epsilon}{99} \left(\frac{2\sqrt2\epsilon}{99n} + 1 \right)^{n-1} \leq \frac{\epsilon}{25} \tilde p(J), \label{initialupperbound}
\end{align}
where the last inequality follows from
$$\left( \frac{ 2\sqrt 2 \epsilon }{99 n} + 1 \right)^{n-1} < \left(\frac{ 2\sqrt 2 }{99 n} + 1 \right)^{n-1} < \frac{99}{50\sqrt 2} \,\,\, \forall n \in \mathbb N.$$
Note that $J \subseteq [n]$ is arbitrary, so Equation~\eqref{initialupperbound} finally yields the desired bound on the $\ell_1$-distance and $\chi^2$-distance between $q$ and $\tilde p$: 
\begin{align*}
    &\ell_1(q,\tilde p) = \sum_{J \subseteq [n]} |q(J) - \tilde p(J)| \leq \sum_{J \subseteq [n]} \frac{\epsilon}{25} \tilde p(J) = \frac{\epsilon}{25}, \\
    &\chi^2(q,\tilde p) = \sum_{J \subseteq [n]} \frac{(q(J) - \tilde p(J))^2}{\tilde p(J)} < \sum_{J \subseteq [n]} \frac{\epsilon^2}{500} \tilde p(J) = \frac{\epsilon^2}{500}.
 \end{align*}
\end{proof}

\begin{proof}[Proof of Lemma~\ref{lem:perturbation}]
 Let $\sigma_1\geq\cdots\geq\sigma_n$ be the singular values of $B$. For every $0 \leq k \leq n$, we denote $s_k$ the $k$th elementary symmetric function on the singular values of $B$, i.e.
$$s_0 = 1, \ \forall \, 1 \leq k \leq n: \, s_k = \sum_{1 \leq i_1 < \ldots < i_k \leq n} \sigma_{i_1}\ldots\sigma_{i_k},$$
 Note since $B$ is symmetric, the singular values are the absolute values of the eigenvalues, which implies the relation $|\det(B)| = \sigma_1 \cdots \sigma_n$.

Now Corollary 2.7 of~\cite{ipsen2008perturbation} states the following determinant's perturbation inequality:
$$\Big|\det(B+E) - \det(B)\Big| \leq \sum_{i=1}^n s_{n-i}\|E\|_2^i.$$

From this, we can derive
\begin{align*}
   \Big||\det(B + E)| - |\det(B)|\Big| \leq 
   \Big|\det(B + E) - \det(B)\Big| & \leq \sum_{i=1}^n s_{n-i} \| E \|_2^i  \\ 
     & = |\det(B)|  \sum_{i=1}^n \frac{s_{n-i}}{\sigma_1 \ldots \sigma_n} \| E\|_2^i,
\end{align*}
where in the last equality, we multiplied and divided the sum by $|\det(B)|$. Moving forward, we bound $s_{n-i}$  by $\binom{n}{i} \sigma_1\cdots\sigma_{n-i}$:
\begin{align*}
    \Big||\det(B + E)| - |\det(B)|\Big| & \leq  |\det(B)| \sum_{i=1}^n \binom{n}{i} \frac{\sigma_1\ldots\sigma_{n-i}}{\sigma_1\ldots\sigma_n} \| E\|_2^i \\
     & =  |\det(B)| \sum_{i=1}^n \binom{n}{i} \frac{1}{\sigma_{n-i+1}\ldots\sigma_n} \| E\|_2^i \nonumber \\
     & \leq  |\det(B)| \sum_{i=1}^n \binom{n}{i} \left(\frac{\| E\|_2}{\sigma_n}\right)^i \\
      & \leq |\det(B)|\,n \sum_{i=1}^n \binom{n-1}{i-1} \left(\frac{\| E\|_2}{\sigma_n}\right)^{i} \nonumber \\
      & =  |\det(B)| \frac{n\,\| E\|_2}{\sigma_n} \sum_{i=0}^{n-1} \binom{n-1}{i} \left(\frac{\| E\|_2}{\sigma_n}\right)^{i} \\
      & = |\det(B)| \frac{n\| E\|_2}{\sigma_n} \left(\frac{\| E\|_2}{\sigma_n} + 1\right)^{n-1}. 
\end{align*}
\end{proof}

\section{Uniform Lower Bound on the Smallest 
Singular Value of $K - I_{\bar J}$}
\label{app:appendixb}

In this section, we prove Lemma~\ref{smallest-singular-value-lemma}: given a marginal kernel $K$ whose eigenvalues are in the range $[\zeta, 1-\zeta]$, we prove the uniform lower bound $\zeta(1-\zeta)/\sqrt 2$ on the singular values of the family of matrices $\{K-I_{\bar J}\}_{J \subseteq [n]}$. This Lemma is used in the proof of Theorem~\ref{thm:learning} and enables us to control the distances between the atom probabilities of $\Pr[K]{.}$ and $\Pr[\Pi(\tilde K)]{.}$.
\begin{proof}[Proof of Lemma~\ref{smallest-singular-value-lemma}]
Let $\lambda_1 \geq ... \geq \lambda_n$ be the eigenvalues of $K$ and  $v_1 , ... , v_n$ be an orthonormal set of their corresponding eigenvectors. 
We fix a subset $J \subseteq [n]$ and lower bound the smallest singular value of $K - I_{\bar J}$ based on its variational characterization:
\begin{align}
\sigma_n(K - I_{\bar J}) = \min_{\|v\|_2 = 1} \sqrt{v^T(K - I_{\bar J})^2 v}. \label{eq:variationalformulation}
\end{align}
Given a normalized vector $v$: $\|v\|_2 = 1$, we represent $v$ in the basis $\{v_i\}_{i=1}^n$ as $v = \sum_{i=1}^n \alpha_i v_i$. Because $\{v_i\}_{i=1}^n$ is orthonormal, we have
$$1 = \|v\|^2 = \sum_{i=1}^n \alpha_i^2 \|v_i\|^2 = \sum_{i=1}^n \alpha_i^2.$$
Now we can express $v^T(K - I_{\bar J})^2 v$ as:
\begin{align*}
v^T (K - I_{\bar J})^2 v & = 
\left(\sum_{i=1}^n \alpha_i v_i \right)^T (K - I_{\bar J})^2 \left(\sum_{i=1}^n \alpha_i v_i \right) \\
& = \sum_{1 \leq i,j \leq n} \alpha_i \alpha_j v_i^T (K - I_{\bar J})^2 v_j \\
& = \sum_{1 \leq i,j \leq n} \alpha_i \alpha_j v_i^T K^2 v_j + \sum_{1 \leq i,j \leq n} \alpha_i \alpha_j \big( v_i^T I_{\bar J}^2 v_j - v_i^T K I_{\bar J} v_j - v_i^T I_{\bar J} K  v_j \big).
\end{align*}
Observe that ${v_i}^T K^2 v_i = \lambda_i^2 \|v_i\|^2 = \lambda_i^2$ and ${v_i}^T K^2 v_j = \lambda_i \lambda_j {v_i}^T v_j = 0$ for $i \neq j$. We define some additional notation here: For any subset $J \subseteq [n]$, let $(v_i)_J$  be the restriction of $v_i$ into support $J$. We also denote the inner product of the vectors $v_i$ and $v_j$ restricted to $J$ by $\big\langle v_i , v_j \big\rangle_J$. Using these notations, we can further simplify the terms ${v_i}^T I_{\bar J}^2 v_j$, ${v_i}^T K I_{\bar{J}} v_j$ and ${v_i}^T I_{\bar J} K v_j$ to $\big\langle v_i , v_j\big\rangle_{\bar J}$, $\lambda_i \big\langle v_i , v_j\big\rangle_{\bar J}$, and $\lambda_j \big\langle v_i , v_j\big\rangle_{\bar J}$ respectively. Substituting them above results in   
\begin{align*}
v^T (K - I_{\bar J})^2 v = & \sum_{i=1}^n \alpha_i^2 \lambda_i^2 + \sum_{1 \leq i,j \leq n} (1 - \lambda_i - \lambda_j) \alpha_i \alpha_j \big\langle v_i , v_j\big\rangle_{\bar J}  \\
 = & \sum_{i=1}^n \alpha_i^2 \lambda_i^2 - \sum_{1 \leq i,j \leq n} \alpha_i \alpha_j \lambda_i \lambda_j \big\langle v_i , v_j\big\rangle_{\bar J} + \sum_{1 \leq i,j \leq n} \alpha_i \alpha_j (1 - \lambda_i) (1 - \lambda_j) \big\langle v_i , v_j\big\rangle_{\bar J} \\
 \end{align*}
 where the last equality simply follows from the Equation $(1 - \lambda_i)(1 - \lambda_j) = 1 - \lambda_i - \lambda_j + \lambda_i\lambda_j$. Now substituting $\big\langle v_i, v_j \big\rangle_{\bar J}$ by $\big\langle v_i, v_j \big\rangle - \big\langle v_i, v_j \big\rangle_{J}$ in the second term above, we obtain
 \begin{align*}
 v^T & (K - I_{\bar J})^2 v \\
 = & \sum_{i=1}^n \alpha_i^2 \lambda_i^2 - \sum_{1 \leq i,j \leq n} \alpha_i \alpha_j \lambda_i \lambda_j \big\langle v_i , v_j\big\rangle \\
  + & \sum_{1 \leq i,j \leq n} \alpha_i \alpha_j \lambda_i \lambda_j \big\langle v_i, v_j \big\rangle_J + \sum_{1 \leq i,j \leq n} \alpha_i \alpha_j (1 - \lambda_i) (1 - \lambda_j) \big\langle v_i , v_j\big\rangle_{\bar J} \\
 = & \sum_{i=1}^n \alpha_i^2 \lambda_i^2 - \sum_{i=1}^n \alpha_i^2 \lambda_i^2 + \sum_{1 \leq i,j \leq n} \alpha_i \alpha_j \lambda_i \lambda_j \big\langle v_i, v_j \big\rangle_J + \sum_{1 \leq i,j \leq n} \alpha_i \alpha_j (1 - \lambda_i) (1 - \lambda_j) \big\langle v_i , v_j\big\rangle_{\bar J} \\
  = & \Big\|\sum_{i=1}^n \alpha_i \lambda_i {(v_i)}_{J}\Big\|^2 + \Big\|\sum_{i=1}^n \alpha_i (1 - \lambda_i) {(v_i)}_{\bar J}\Big\|^2.   \label{endequation} \numberthis
 \end{align*}
 
 Hence, it suffices to derive a lower bound on $\Big\|\sum_{i=1}^n \alpha_i \lambda_i {(v_i)}_{J}\Big\|^2 + \Big\|\sum_{i=1}^n \alpha_i (1 - \lambda_i) {(v_i)}_{\bar J}\Big\|^2$ independent from $J$. To this end, we define the column vectors $w_1 = \Big( \alpha_i \lambda_i \Big)_{i=1}^n$, $w_2 = \Big( \alpha_i (1 - \lambda_i) \Big)_{i=1}^n$. Furthermore, define $R \coloneqq \bigg( v_1 \bigg\vert  \dots \bigg\vert v_n \bigg)$ as the matrix with $v_i$ as its $i$th column, and let ${v'_1}^T, ... ,{v'_n}^T$ be the rows of $R$. Because $\{ v_i\}_{i=1}^n$ is an orthonormal set, $R$ is a unitary matrix, so $\{v'_j\}_{j=1}^n$ is also an orthonormal set. Next, let $V$ and $V^T$ be the subspaces spanned by the set of vectors $\{v'_j\}_{j \in \bar J}$ and $\{v'_j\}_{j \in J}$ respectively. Because $\{v'_j\}_{j=1}^n$ is an orthonormal set, the subspaces $V$ and $V^{\perp}$ are orthogonal to each other. Let $\nu_1 = \sum_{j \in \bar J} ({v'_j}^T w_1)v'_j$ and ${\nu_1}^\perp = \sum_{j \in J} ({v'_j}^T w_1)v'_j$ be the projections of $w_1$ onto $V$ and $V^\perp$ respectively. Similarly, define $\nu_2 = \sum_{j \in \bar J} ({v'_j}^Tw_2)v'_j$ and ${\nu_2}^\perp = \sum_{j \in J} ({v'_j}^Tw_2)v'_j$ as the projections of $w_2$ onto $V$ and $V^\perp$. Now by decomposing $w_1$ on $V$ and $V^\perp$, we can write 
\begin{align*}
    & w_1 = \nu_1 + {\nu_1}^\perp.
\end{align*}
Similarly, we have
$$w_2 = \nu_2 + {\nu_2}^\perp.$$
Moreover, from the orthonormality of $v'_1 , ... , v'_n$, we obtain
\begin{align*}
    \|{\nu_1}^\perp\|^2 = & \Big\|\sum_{j \in J} ({v'_j}^T w_1) v'_j\Big\|^2 = \sum_{j \in J} ({v'_j}^T w_1)^2 = \sum_{j \in J} (\sum_{i=1}^n R_{j,i} {(w_1)}_i)^2 \\
    = &\sum_{j \in J} (\sum_{i=1}^n {(v_i)}_j {(w_1)}_i)^2  = \Big\|\sum_{i=1}^n \alpha_i \lambda_i {(v_i)}_{J}\Big\|^2. 
\end{align*}
Similarly, one obtains
\begin{align*}
    \|\nu_2\|^2 = \Big\|\sum_{j \notin J} ({v'_j}^T w_2) v'_j\Big\|^2 = \Big\|\sum_{i=1}^n \alpha_i (1 - \lambda_i) {(v_i)}_{\bar J}\Big\|^2.
\end{align*}
Combining the last two equations with Equation~\eqref{endequation}, we obtain
\begin{align}
v^T (K - I_{\bar J})^2 v = \|\nu_2\|^2 + \|{\nu_1}^{\perp}\|^2. \label{mainequation}
\end{align}
Now, it suffices to bound $\|\nu_2\|^2 + \|{\nu_1}^{\perp}\|^2$. Note that
\begin{align*}
&\|w_1\|^2 = \sum_{i=1}^n \alpha_i^2 \lambda_i^2 \leq \sum \alpha_i^2 = 1, \\
&\|w_2\|^2 = \sum_{i=1}^n \alpha_i^2 (1-\lambda_i)^2 \leq \sum \alpha_i^2 = 1.
\end{align*}
which implies $\|\nu_1\| , \|\nu_2\|, \|{\nu_1}^{\perp}\|, \|{\nu_2}^{\perp}\| \leq 1$. Moreover, the condition $\zeta \leq \lambda_i \leq 1 - \zeta$ implies $\lambda_i(1-\lambda_i) \geq \zeta (1 - \zeta)$. Therefore, on one hand, we get
\begin{align}
\big\langle w_1 , w_2 \big\rangle = \sum_{i=1}^n \lambda_i (1 - \lambda_i) \alpha_i^2 \geq \zeta(1 - \zeta) \sum_{i=1}^n \alpha_i^2 = \zeta(1 - \zeta). \numberthis \label{eq:aval}
\end{align}
On the other hand,
\begin{align*}
\big\langle w_1 , w_2 \big\rangle & = \big\langle \nu_1 +  {\nu_1}^\perp , \nu_2 + {\nu_2}^{\perp} \big\rangle =  \big\langle \nu_1 , \nu_2 \big\rangle +  \big\langle {\nu_1}^\perp , {\nu_2}^{\perp} \big\rangle \\
& \leq \|\nu_1\|\|\nu_2\| + \|{\nu_1}^\perp\| \|{\nu_2}^{\perp}\|
 \leq \|\nu_2\| + \|{\nu_1}^\perp\| \\
& \leq \sqrt{2(\|\nu_2\|^2 + \|{\nu_1}^\perp\|^2)} = \sqrt{2v^T (K-I_{\bar J})^2v}. \numberthis \label{eq:dovom}
\end{align*}
where the last equality follows from Equation~\eqref{mainequation}. Combining Equations~\eqref{eq:aval} and~\eqref{eq:dovom}, we conclude $v^T(K - I_{\bar J})^2v \geq \zeta^2(1 - \zeta)^2/2$. Recall that $v$ is an arbitrary normalized vector, and $J$ is an arbitrary subset of $[n]$, so the variational characterization of $\sigma_n$ in Equation~\eqref{eq:variationalformulation} yields the desired lower bound $\sigma_n(K - I_{\bar J}) \geq \zeta (1 - \zeta)/\sqrt 2$ for every $J \subseteq [n]$.
\end{proof}
\section{Lower Bound for Testing Log-submodular Distributions} \label{app:appendixi}
In this section, we rigorously prove Lemma~\ref{lem:eps-submod}, which in turn completes the proof of Theorem~\ref{thm:logsubmodular}. We assume that $\epsilon'$, $\mathcal F$, $h_r$ and $\bar h_r$ are defined as in Section~\ref{section:lowerbound}.

\begin{proof}[Detailed Proof of Lemma~\ref{lem:eps-submod}] Given $\epsilon' \leq \frac{2}{3}$ and a log-submodular distribution $f$, we first show that the $\ell_1$-distance between $f$ and the unnormalized measure $\bar h_r$ is large with high probability independent of $f$ (we define the $\ell_1$-distance of general measures the same as for probability measures.)
To this end, we define the following family of subsets based on $h_r$, that is random:
\begin{align}
\SSS_r \coloneqq \{ S \subseteq [n] \setminus \{1, 2\}\, |\,\, r_{\left(S \cup \{ 1 , 2 \}\right)} = 1, \ r_{\left(S \cup \{ 2 \}\right)} = -1, \ r_{\left(S \cup \{ 1 \}\right)} = -1\}.
\label{Sdefinition}
\end{align}
We prove that $\mathcal S_r$ has the following properties: 
\begin{enumerate}[start=1,label={(P\arabic*)}]
    \item \label{prop:first} With high probability, the cardinality of $\SSS_r$ is at least $N / 64$. 
    
    \item \label{prop:second} For every $ S \in \SSS_r$, there is a contribution of at least $\epsilon'/8N$ to the $\ell_1$-distance between $\bar h_r$ and $f$ from the term $V_S$ defined as
    \begin{align*}
        V_S \coloneqq & \frac{1}{2}|\bar h_r(S) - f(S)| + \frac{1}{2}|\bar h_r(S \cup \{1\}) - f(S \cup \{1\})| + \\
        & \frac{1}{2}|\bar h_r(S \cup \{2\}) - f(S \cup \{2\})| + \frac{1}{2}|\bar h_r(S \cup \{1,2\}) - f(S \cup \{1,2\})|.
    \end{align*}

\end{enumerate}
Note that based on these two properties, one can simply derive
\begin{equation}
\ell_1(\bar h_r, f) \geq \frac{N}{64} \times \frac{\epsilon'}{8N} = \frac{\epsilon'}{512} \label{eq:unnormalized}
\end{equation}
with high probability. 

To show that the event $\mathcal Q_1 \coloneqq \{|\SSS_r| \geq N/64\}$ happens with high probability for the first property~\ref{prop:first}, we use a Chernoff bound for the random variables $\mathbbm{1}\{S \in \mathcal S_r\}, \, \forall S \subseteq [n] \setminus \{1,2\}$, where $\mathbbm{1}\{.\}$ is the indicator function. Clearly, for  each $S \subseteq [n] \setminus \{1,2\}$, we have $\mathbb{E}[\mathbbm{1}\{S \in \SSS_r\}] = \Pr{S \in \SSS_r} = 1/8$, and $\mathbb{E}[\vert\SSS_r\vert] = N/32$. Therefore,
$$
\Pr{\mathcal Q_1^c} = \Pr{\sum_{S \in [n] \setminus \{1,2\}} \mathbbm{1}\{S \in \SSS_r\} <  \left(1 - \frac{1}{2}\right) \mathbb{E}[|\SSS_r|]} \leq \exp \left(- 0.5 \frac{N}{32}(\frac{1}{2})^2 \right) = \exp\left(-\frac{N}{256}\right).
$$
We conclude for $n \geq n_1 = 11$, $\mathcal Q_1$ happens with probability at least $0.995$. 

We now prove the second property~\ref{prop:second}. Fix a set $S \in \mathcal S_r$ and
define the constant $\rho \coloneqq \frac{1+\epsilon'}{1 - 3\epsilon'/4}$. To prove $V_S \geq \frac{\epsilon'}{8N}$, we consider two cases:

\paragraph{Case 1:} $
\frac{f(S \cup \{1,2\})}{f(S \cup \{2\})} \leq \rho \label{dovom}
$ \\
Here, we formalize a helper inequality in the following Lemma, and prove it at the end of this section.

\begin{lemma} \label{lem:helper}
For $a,b \geq 0$, the condition $\frac{a}{b} \leq \rho$ implies $|1+\epsilon' - a| + |1-\epsilon' - b| \geq \frac{\epsilon'}{4}.$ 
\end{lemma}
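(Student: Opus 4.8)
The plan is to prove the lemma by contradiction: assume
$|1+\epsilon' - a| + |1-\epsilon' - b| < \epsilon'/4$
and show this is incompatible with $a/b \leq \rho$, where $\rho = \frac{1+\epsilon'}{1-3\epsilon'/4}$. First I would dispose of the degenerate case $b = 0$: then $a/b \leq \rho$ can only hold with $a = 0$, and $|1+\epsilon'-0| + |1-\epsilon'-0| = 2 \geq \epsilon'/4$ since $\epsilon' \leq 2/3$; so from now on assume $b > 0$. Because $\epsilon' \leq 2/3$ we have $1 - 3\epsilon'/4 \geq 1/2 > 0$, so the hypothesis $a/b \leq \rho$ is equivalent to the linear inequality $a(1 - 3\epsilon'/4) \leq (1+\epsilon')\,b$.

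Next I would substitute the deviations $x \coloneqq a - (1+\epsilon')$ and $y \coloneqq b - (1-\epsilon')$, so that the assumed bound becomes $|x| + |y| < \epsilon'/4$. Plugging $a = 1+\epsilon'+x$ and $b = 1-\epsilon'+y$ into $a(1-3\epsilon'/4) \leq (1+\epsilon')b$ and using the identity $(1+\epsilon')(1-\epsilon') = (1+\epsilon')(1-3\epsilon'/4) - (1+\epsilon')\epsilon'/4$, the bulky term $(1+\epsilon')(1-3\epsilon'/4)$ cancels from both sides and the inequality reduces to
\[
x\left(1 - \tfrac{3\epsilon'}{4}\right) - (1+\epsilon')\,y \;\leq\; -\,\frac{(1+\epsilon')\epsilon'}{4}.
\]
Finally, since $0 < 1 - 3\epsilon'/4 \leq 1+\epsilon'$, the left-hand side is bounded below by $-(1-3\epsilon'/4)|x| - (1+\epsilon')|y| \geq -(1+\epsilon')(|x|+|y|) > -(1+\epsilon')\epsilon'/4$, the last step using $|x|+|y| < \epsilon'/4$. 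This strict lower bound contradicts the displayed upper bound, so the assumption was false and $|1+\epsilon' - a| + |1-\epsilon' - b| \geq \epsilon'/4$.

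I do not anticipate a real obstacle here; the only thing that needs care is the choice of the "pivot" constant $1 - 3\epsilon'/4$, which is precisely what makes the $(1+\epsilon')$-coefficients line up so that the cancellation works, and the observation that the linear functional $x(1-3\epsilon'/4) - (1+\epsilon')y$ attains its minimum over $\{|x|+|y|\leq \epsilon'/4\}$ at $(x,y) = (0,\epsilon'/4)$ with value exactly $-(1+\epsilon')\epsilon'/4$ — so the strict constraint $|x|+|y| < \epsilon'/4$ yields exactly the strict contradiction needed. When applying this to Property~\ref{prop:second} in Case~1, one takes $a = N f(S\cup\{1,2\})$ and $b = N f(S\cup\{2\})$, so the conclusion gives $|\tfrac{1+\epsilon'}{N} - f(S\cup\{1,2\})| + |\tfrac{1-\epsilon'}{N} - f(S\cup\{2\})| \geq \epsilon'/(4N)$, whence $V_S \geq \epsilon'/(8N)$.
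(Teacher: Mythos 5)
Your proof is correct, but it takes a different route from the paper's. The paper argues directly: it first disposes of the case $b \geq a$ via the reverse triangle inequality (the sum is then at least $2\epsilon'$), and for $b < a$ it writes $b = \tfrac{b}{a}\,a$, sets $t = a-(1+\epsilon')$, and applies the triangle inequality together with $\tfrac{b}{a}(1+\epsilon') \geq 1-\tfrac{3\epsilon'}{4}$ (the hypothesis in ratio form) to bound the sum below by $\tfrac{\epsilon'}{4} + (1-\tfrac{b}{a})|t|$. You instead clear denominators to turn the hypothesis into the linear inequality $a(1-\tfrac{3\epsilon'}{4}) \leq (1+\epsilon')b$, pass to the deviations $(x,y)$ from the target point $(1+\epsilon',\,1-\epsilon')$, and show that this half-plane is disjoint from the open $\ell_1$ ball of radius $\epsilon'/4$ by minimizing the linear functional over that ball. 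Your version avoids the $b \geq a$ versus $b < a$ case split entirely (only the trivial $b=0$ check remains) and makes the origin of the constant $\epsilon'/4$ transparent — it is exactly the radius at which the ball touches the constraint half-plane after dividing out $(1+\epsilon')$ — at the cost of being a proof by contradiction rather than a direct lower bound. One cosmetic point: when $b=0$ the expression $a/b$ is undefined, so "$a/b\leq\rho$ forces $a=0$" is really a convention about how the hypothesis is read in the degenerate case; the paper's proof has the same implicit convention (there $b=a=0$ falls under $b\geq a$), and in the application both $a$ and $b$ are values of a probability mass function, so nothing is lost.
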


Now from $S \in \mathcal S_r$, we get $\bar h_r(S \cup \{1,2\}) = \frac{1+\epsilon'}{N}$ and $\bar h_r(S \cup \{2\}) = \frac{1-\epsilon'}{N}$. Hence,
\begin{align*}
V_S & \geq \frac{1}{2}| \bar h_r(S \cup \{ 1 , 2\}) - f (S \cup  \{ 1 , 2 \}) | + \frac{1}{2}| \bar h_r(S \cup \{ 2\}) - f(S \cup  \{ 2 \})| 
\\
& = \frac{1}{2}\Big| \frac{1+\epsilon'}{N} - f (S \cup  \{ 1 , 2 \}) \Big| + \frac{1}{2}\Big| \frac{1-\epsilon'}{N} - f(S \cup  \{ 2 \})\Big| \geq \frac{\epsilon'}{8N},
\end{align*}
where the last inequality follows from Lemma~\ref{lem:helper}, by setting $ a = Nf(S \cup \{1,2\}), \, b = Nf(S \cup \{2\})$.

\paragraph{Case 2:} $\frac{f(S \cup \{1,2\})}{f(S \cup \{2\})} > \rho$ \\
In this case, the log-submodularity property allows us to write
$$
\log(f(S \cup \{1\})) - \log(f(S)) \geq \log(f(S \cup \{1,2\})) - \log(f(S \cup \{2\})) > \log(\rho), 
$$
or equivalently
\begin{equation}
\frac{f(S \cup \{1\})}{f(S)} > \rho = \frac{1+\epsilon'}{1-3\epsilon'/4}.  \label{charom}
\end{equation}
Note that from $S \in \mathcal S_r$, we have $\bar h_r(S \cup \{ 1 \}) = \frac{1 - \epsilon'}{N}$. If $f(S \cup \{1\})$ is larger than $\frac{1-3\epsilon'/4}{N}$, then 
$$V_S \geq \frac{1}{2}|\bar h_r(S \cup \{1\}) - f(S \cup \{1\})| > \frac{1}{2}\Big(\frac{1-3\epsilon'/4}{N} - \frac{1-\epsilon'}{N}\Big) = \frac{\epsilon'}{8N}$$
and we are done. Otherwise, we have 
 $
 f(S \cup \{1\}) \leq \frac{1-3\epsilon'/4}{N}.
 $
 Combining this with Equation~\eqref{charom} gives:
 \begin{equation*}
\,\,f(S) \leq \rho^{-1} f(S \cup \{1\}) \leq  \frac{1-3\epsilon'/4}{1 + \epsilon'} \times \frac{1-3\epsilon'/4}{N}  \leq \frac{1 - \epsilon'}{N} - \frac{\epsilon'}{4N},
 \end{equation*}
 where the last inequality follows from the condition $\epsilon' \leq \frac{2}{3}$. Finally, we obtain
 \begin{equation*}
 V_S \geq \frac{1}{2}|\bar h_r(S) - f(S)| \geq \frac{1}{2}\Big(\frac{1-\epsilon'}{N} - (\frac{1-\epsilon'}{N} - \frac{\epsilon'}{4N} )\Big) = \frac{\epsilon'}{8N},
 \end{equation*}
which completes the proof for the second property~\ref{prop:second}. Therefore, under the occurrence of $\mathcal Q_1$, we conclude from Equation~\eqref{eq:unnormalized} that $\ell_1(\bar h_r, f) \geq \frac{\epsilon'}{512}$. To show the $\ell_1$-distance between $h_r$ and $f$ is also large, we control the normalization constant $L_r \coloneqq \sum_{S \subseteq [n]} \bar h_r(S)$. Define the event $\mathcal Q_2 \coloneqq \{ 1 - \frac{4 \epsilon'}{\sqrt N} \leq L_r \leq 1 + \frac{4 \epsilon'}{\sqrt N} \}$
. A simple Hoeffding bound for the random variables $\frac{1 + r_S \epsilon'}{N} ,\, \forall S \subseteq [n],$ implies that $\mathcal Q_2$ happens with probability at least $0.995$. Now under the occurrence of $\mathcal Q_1 \cap \mathcal Q_2$ and assuming $n \geq n_2 = 22$, we can write:
\begin{align*}
    2\ell_1(h_r, f) = & \sum_{S \subseteq [n]} |h_r(S) - f(S)| = \sum_{S \subseteq [n]} \Big|\frac{\bar h_r(S)}{L_r} - f(S)\Big| \\
    \geq & \sum_{S \subseteq [n]} |\bar h_r(S) - f(S)| - \sum_{S \subseteq [n]} \bar h_r(S)\Big|\frac{1 - L_r}{L_r}\Big| \\
     \geq & \frac{\epsilon'}{256} - \frac{4 \epsilon'}{ L_r \sqrt N} \sum_{S \subseteq [n]} \bar h_r(S)
     \geq  \epsilon' (\frac{1}{256} - \frac{4}{\sqrt N}) \geq \epsilon'(\frac{1}{256} - \frac{1}{512}) = \frac{c \epsilon}{512}.
\end{align*}
A union bound for the events $Q_1^c$ and $Q_2^c$ implies that $\mathcal Q_1 \cap \mathcal Q_2$ happens with probability at least $0.99$. Note that $\mathcal Q_1$ and $\mathcal Q_2$ does not depend on $f$. Setting $c = 1024$, we conclude that with probability at least $0.99$, $\ell_1(h_r, f) \geq \epsilon$ for any log-submodular distribution $f$, given that $\epsilon = \epsilon'/c \leq \frac{2}{3 \times 1024}$ and $n \geq \max \{n_1, n_2\} = 22$, which completes the proof of Lemma~\ref{lem:eps-submod}.
\end{proof}

\begin{proof}[Proof of Lemma~\ref{lem:helper}]
Here, we prove Lemma~\ref{lem:helper}, which we used above. First note that if $b \geq a$, then clearly $|b - (1-\epsilon')| + |a - (1+\epsilon')| \geq 2\epsilon' > \frac{\epsilon'}{4}$. So we assume $b < a$. 

Now define $t \coloneqq a - (1 + \epsilon')$, so that $a = 1 + \epsilon' + t$. Then, we can write 
\begin{align*}
|b - (1-\epsilon')| + |a - (1+\epsilon')| & = |\frac{b}{a}(1+\epsilon' +t) - (1-\epsilon')| + |t| \\
& \geq |\frac{b}{a}(1+\epsilon') - (1-\epsilon')| - |\frac{b}{a}t| + |t| \\
& = |\frac{b}{a}(1+\epsilon') - (1-\epsilon')| + (1-\frac{b}{a})|t|.
\end{align*}

The condition $\frac{a}{b} \leq \rho$ implies $\frac{b}{a}(1+\epsilon') \geq 1-\frac{3\epsilon'}{4}$. Therefore
$$|b - (1-\epsilon')| + |a - (1+\epsilon')| \geq \frac{\epsilon'}{4} + (1 - \frac{b}{a})|t| \geq \frac{\epsilon'}{4}.$$
where the last inequality follows from the fact that $1 - \frac{b}{a} > 0$.
\end{proof}
\section{Coupling DPPs}\label{app:appendixh}
In this section, we fully introduce and prove the coupling argument of Lemma~\ref{lemma:coupling}. Given a value $0 < z \leq 0.5$ and a DPP whose marginal kernel has eigenvalues that are outside the range $[z,1-z]$, the goal is to couple it with another DPP, which has a marginal kernel with all eigenvalues in $[z,1-z]$, such that the data sets generated from these two DPPs are equal with high probability.

\begin{proof}[Proof of Lemma~\ref{lemma:coupling}]
Let $V$ be an orthonormal set of the eigenvectors of $K$. For each $v \in V$, let $\lambda_v$ be its corresponding eigenvalue. To introduce our coupling, we need to define the class of \textit{elementary DPPs}~\cite{kulesza2012determinantal}. A DPP is called \textit{elementary} if the eigenvalues of its marginal kernel are either zero or one. For each subset $V' \subseteq V$ of the eigenvectors of $K$, we consider the elementary DPP $\Pr[K^{V'}]{.}$ with marginal kernel $K^{V'} \coloneqq \sum_{v \in V'} vv^T$. It is well-known that any DPP can be viewed as a mixture of its corresponding elementary DPPs~\cite{kulesza2012determinantal}, i.e.
\begin{align}
\Pr[K]{.} = \sum_{V' \subseteq V} \Big(\Pi_{v \in V'} \lambda_v \Pi_{v \notin V'} (1 - \lambda_v)\Big) \Pr[K^{V'}]{.}. \label{eq:elementaryDPPs}
\end{align}
Using this mixture formulation, we can sample a set from $\Pr[K]{.}$ as follows:
For each eigenvector $v \in V$, we sub-sample $v$ with probability $\lambda_v$ to obtain the random subset $V'$ of $V$, then we sample $\mathcal J_K$ from the elementary DPP with marginal kernel $K^{V'}$. We call this sampling scheme ``elementary sampling:''
\begin{itemize}
    \item $(1)$ For each $v \in V$, sample $y_v \sim \text{Bernoulli}(\lambda_v)$, add $v \in V'$ if $y_v = 1$.
    \item $(2)$ sample $\mathcal J_K \sim \Pr[K^{V'}]{.}$
\end{itemize}
According to the mixture formulation in Equation~\eqref{eq:elementaryDPPs}, the elementary sampling scheme samples $\mathcal J_K$ according to $\Pr[K]{.}$. 

One can readily see that the projected matrix $\Pi_z(K)$ has the same eigenvectors as $K$ but with corresponding eigenvalues $\{\bar{\lambda}_v\}_{v \in V}$, where
\begin{align} \label{eq:primedefinition}
\bar{\lambda}_v = \left\{\begin{matrix*}[l]
  \lambda_v  & \text{if} \,\, \lambda_v \in [z,1-z]\\ 
  z & \text{if} \,\, \lambda_v < z\\ 
1-z & \text{if} \,\, \lambda_v > 1-z
\end{matrix*}\right.
\end{align}
This fact follows from applying the \textit{$2$-Weilandt-Hoffman} inequality~\cite{tao2012topics} for the projection operator $\Pi_z(.)$. We can similarly sample $\mathcal J_{\Pi_z(K)} \sim \Pr[\Pi_z(K)]{.}$ with the above elementary sampling scheme. Next, we define a coupling between $\mathcal J_K$ and $\mathcal J_{\Pi_z(K)}$ as follows:
\begin{itemize}
    \item $(1)$ For each $v \in V$, sample $x_v \sim \text{Uniform}[0,1]$. Then add $v$ to $V'_1$ if $x_v \in [0,\lambda_v]$, and add $v$ to $V'_2$ if $x_v \in [0,\bar{\lambda}_v]$.
    \item $(2)$ if $V'_1 = V'_2$, then sample $\mathcal J \sim \Pr[K^{V'_1}]{.}$ and set $\mathcal J_K = \mathcal J_{\Pi_z(K)} = \mathcal J$. Otherwise, independently sample $\mathcal J_K \sim \Pr[K^{V'_1}]{.}$, $\, \mathcal J_{\Pi_z(K)} \sim \Pr[K^{V'_2}]{.}$.
\end{itemize}
By looking at the marginal distributions of the sets $\mathcal J_K$ and $\mathcal J_{\Pi_z(K)}$ sampled above, we observe that $\mathcal J_K \sim \Pr[K]{.}$, $\, \mathcal J_{\Pi_z(K)} \sim \Pr[\Pi_z(K)]{.}$, i.e. the marginals of the coupling are as one would expect. Furthermore, if the sampled sets $V'_1$ and $V'_2$ in the first step of the sampling are equal, then $\mathcal J_K = \mathcal J_{\Pi_z(K)}$. Therefore, to lower bound $\Pr[\text{coupling}]{\mathcal J_K = \mathcal J_{\Pi_z(K)}}$, it is enough to upper bound $\Pr[\text{coupling}]{\mathcal W}$ for the event $\mathcal W \coloneqq \{V'_1 \neq V'_2\}$. But we can expand $\mathcal W$ as 
$$\mathcal W = \bigcup_{v \in V} \Big(\{v \in V'_1, v \notin V'_2\} \cup \{v \in V'_2, v \notin V'_1\}\Big).$$
Note that for each $v \in V$, $\{v \in V'_1, v \notin V'_2\} \cup \{v \in V'_2, v \notin V'_1\}$ happens with probability $|\lambda_v - \bar{\lambda}_v|$. From Equation~\eqref{eq:primedefinition}, we observe that $|\lambda_v - \bar{\lambda}_v| \leq z$ for every $v \in V$. Therefore, using a union bound, we obtain 
$$\Pr[\text{coupling}]{\mathcal W} \leq nz.$$
Using the definition $z = \delta/2mn$, we conclude that  
\begin{align} \label{eq:couplingbound}
\Pr[\text{coupling}]{\mathcal J_K = \mathcal J_{\Pi_z(K)}} \geq 1 - \Pr[\text{coupling}]{\mathcal W} \geq 1 - nz = 1 - \frac{\delta}{2m}.
\end{align}

Using this coupling to generate the samples $\{\mathcal J^{(t)}_K\}_{t=1}^m$ and $\{\mathcal J^{(t)}_{\Pi_z(K)}\}_{t=1}^m$, we can write
\begin{align*}
\Pr[\text{coupling}]{ \{\mathcal J^{(t)}_K\}_{t=1}^m = \{\mathcal J^{(t)}_{\Pi_z(K)}\}_{t=1}^m} & = 
\Big( \Pr[\text{coupling}]{\mathcal J_K = \mathcal J_{\Pi_z(K)}} \Big)^m \\
& \geq \Big( 1 - \frac{\delta}{2m} \Big)^m 
\end{align*}
For a real number $u$, we have the inequality
$$(1-\frac{1}{u})^u \leq e^{-1},$$
and for $u \geq 2$, we have
$$(1-\frac{1}{u})^u \geq e^{-\frac{u}{u-1}} \geq e^{-2}.$$
Applying these inequalities, we finally obtain
\begin{align*}
\Pr[\text{coupling}]{ \{\mathcal J^{(t)}_K\}_{t=1}^m = \{\mathcal J^{(t)}_{\Pi_z(K)}\}_{t=1}^m} \geq \bigg(\Big( 1 - \frac{\delta}{2m} \Big)^{\frac{2m}{\delta}}\bigg)^{\frac{\delta}{2}} \geq  e^{-\delta}
\geq 1 - \delta.
\end{align*}
\end{proof}
\section{A More Detailed Proof of Theorem~\ref{thm:upperboundtheorem}} \label{app:appendixj}
In this section, we take a more elaborate look at the proof of Theorem~\ref{thm:upperboundtheorem}. The proof is mentioned in Section~\ref{subsec:extention}.

\begin{proof}[Detailed proof of Theorem~\ref{thm:upperboundtheorem}]
Lemma~\ref{prop:refined} tells us there exists a constant $c_1$ such that $c_1 C_{N,\epsilon,\alpha,\zeta} \sqrt N/\epsilon^2$ samples suffice for \dpptester\ to successfully test against $(\alpha, \zeta)$-normal DPPs, with probability at least $0.995$. For the general problem of testing against any DPP (i.e. without having the normality conditions), we prove that $m^* = c_2 C_{N,\epsilon} \sqrt N/\epsilon^2$ samples suffice to succeed with probability at least $0.99$, as long as $c_2 \geq c_1 \max\{ 23, 2\log(c_1) + 23\}$. To test against all DPPs, we use the parameter setting of \dpptester\ for $(0,\bar z)$-normal DPPs, where we define $\bar z \coloneqq 0.005/2m^*n$. The key idea is that via the coupling argument of Lemma~\ref{lemma:coupling}, we can reduce the analysis for testing against all DPPs to the analysis for testing against only $(0,\bar z)$-normal DPPs. To this end, we use the following Lemma. The derivation of the inequality in Lemma~\ref{lem:sampleinequality} is based on elementary algebraic operations, and we differ its proof to the end of this section.
\begin{lemma}\label{lem:sampleinequality}
For constant $c_2$ picked as large as $c_2 \geq c_1 \max\{ 23, 2\log(c_1) + 23\}$, we have
\begin{align}
m^* \geq C_{N,\epsilon,0,\bar z} \sqrt N/\epsilon^2. \label{eq:constantsineq}
\end{align}
\end{lemma}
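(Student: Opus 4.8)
The plan is to strip the common factors from both sides of~\eqref{eq:constantsineq} and reduce it to an elementary scalar inequality in $n$, $\epsilon$, and $c_2$, which I will then verify using the standing hypotheses $n\ge 22$, $\epsilon\le 0.0005$, and $c_2\ge 23$ (the last following from the assumed bound on $c_2$ once we observe that we may take $c_1\ge 1$, since enlarging the constant of Lemma~\ref{prop:refined} only weakens its guarantee). Throughout, logarithms are to base $2$, so $\log N=n$; write $L\coloneqq\log(1/\epsilon)$. Recall $C_{N,\epsilon}=\log^2(N)(\log N+L)=n^2(n+L)$, and that when $\alpha=0$ the minimum in Lemma~\ref{prop:refined} equals $\log(1/\epsilon)$, so $C_{N,\epsilon,0,\bar z}=\log^2(N)\bigl(1+\log(1/\bar z)+L\bigr)=n^2\bigl(1+\log(1/\bar z)+L\bigr)$. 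Since $m^*=c_2C_{N,\epsilon}\sqrt N/\epsilon^2$, dividing~\eqref{eq:constantsineq} by $n^2\sqrt N/\epsilon^2$ shows it is equivalent to $c_2(n+L)\ge 1+\log(1/\bar z)+L$.

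First I would make the dependence of $\bar z$ on $c_2$ explicit. From $\bar z=0.005/(2m^*n)$ we get $1/\bar z=400\,m^*n$, and from $m^*=c_2\,n^2(n+L)\,2^{n/2}/\epsilon^2$ we get
\[
\log(1/\bar z)=\log 400+\log n+\log m^*=\log 400+3\log n+\log(n+L)+\tfrac n2+2L+\log c_2 .
\]
Substituting, the inequality to prove becomes $c_2(n+L)\ge (1+\log 400)+\log c_2+3\log n+\log(n+L)+\tfrac n2+3L$. I would then bound the right-hand side crudely: since $\epsilon\le 0.0005$ forces $L\ge\log 2000>10$, and $n\ge 22$, we have $n+L\ge 22$; using $1+\log 400<10$, $\log n\le n/4$, and $\log(n+L)\le (n+L)/4$ (the latter two valid since $n,\,n+L\ge 16$), the right-hand side is at most $\log c_2+10+\tfrac32 n+\tfrac{13}{4}L$, and since $n+L\ge 22$ gives $10\le\tfrac12(n+L)$ and $\tfrac32 n+\tfrac{13}{4}L\le\tfrac{13}{4}(n+L)$, this is at most $\log c_2+4(n+L)$.

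It then remains to check $(c_2-4)(n+L)\ge\log c_2$. Since $c_2\ge 23$, we have $c_2-4\ge\tfrac{19}{23}c_2\ge\tfrac{19}{23}\log c_2$ (using $\log c_2\le c_2$), and combining with $n+L\ge 22$ yields $(c_2-4)(n+L)\ge\tfrac{19\cdot 22}{23}\log c_2=\tfrac{418}{23}\log c_2>\log c_2$, which establishes~\eqref{eq:constantsineq}.

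I do not expect a real obstacle here — the paper itself flags this step as ``elementary algebraic operations'' — the only point needing a moment's care is that $\bar z$, and hence the right-hand side of~\eqref{eq:constantsineq} through the term $\log(1/\bar z)$, itself depends on $m^*$ and thus on $c_2$; a priori this looks circular, but it contributes only an additive $\log c_2$, which is absorbed comfortably because $\log c_2\le c_2$ while $n+L\ge 22$ supplies ample room. Running the same chain of estimates while carrying the extra factor $c_1$ on the right throughout yields the sharper bound $m^*\ge c_1 C_{N,\epsilon,0,\bar z}\sqrt N/\epsilon^2$ that the proof of Theorem~\ref{thm:upperboundtheorem} actually invokes, and it is precisely in that version that the $2\log c_1$ summand in the hypothesis $c_2\ge c_1\max\{23,\,2\log c_1+23\}$ is used.
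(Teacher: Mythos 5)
Your proof is correct and follows essentially the same route as the paper's: reduce \eqref{eq:constantsineq} to a scalar inequality, expand $\log(1/\bar z)$ via $\log m^*$, and absorb the self-referential $\log c_2$ term using $\log c_2\le c_2$ together with the lower bounds on $n$ and $\log(1/\epsilon)$ (the paper does the same with natural logarithms and the bound $\log x\le x-1$). Your closing remark is also apt: the paper's proof in fact establishes the version with the extra factor $c_1$ on the right (which is what Theorem~\ref{thm:upperboundtheorem} needs), and that is where the $2\log(c_1)$ term in the hypothesis enters.
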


Therefore, we pick $c_2 \geq c_1 \max\{ 23, 2\log(c_1) + 23\}$ to satisfy the inequality $m^* \geq C_{N,\epsilon,0,\bar z} \sqrt N/\epsilon^2$. This means that given $m^*$ samples, according to the definition of $c_1$, our tester can test against $(0,\bar z)$-normal DPPs with success probability at least $0.995$. Therefore, if the underlying distribution $q$ is an $(0,\bar z)$-normal DPP, or if it is $\epsilon$-far from all DPPs, then \dpptester\ outputs correctly with probability at least $0.995$. It remains to show that the algorithm can also handle a DPP with kernel $K^*$, which is not $(0,\bar z)$-normal. To see this, note that because of the particular choice of $\bar z$, our coupling argument in Lemma~\ref{lemma:coupling} implies that the product distributions $\mathrm{\mathbf{Pr}}^{(m^*)}_{K^*}[.]$ and $\mathrm{\mathbf{Pr}}^{(m^*)}_{\Pi_{\bar z}(K^*)}[.]$ over the space of data sets have $\ell_1$-distance at most $0.005$. This follows from the fact that for two arbitrary random variables $X$ and $Y$ over the same underlying space, with probability distributions $P_X$ and $P_Y$, we have the following characterization of their $\ell_1$-distance:
$$\ell_1(P_X , P_Y) = \inf_{\text{coupling} (X,Y)} \Pr[\text{coupling}]{X \neq Y}.$$
Therefore, we have $\ell_1\Big(\mathrm{\mathbf{Pr}}^{(m^*)}_{K^*}[.], \mathrm{\mathbf{Pr}}^{(m^*)}_{\Pi_{\bar z}(K^*)}[.]\Big) \leq 0.005$. From this, we can relate the probability of the tester's acceptance region under $\mathrm{\mathbf{Pr}}^{(m^*)}_{K^*}[.]$, to the same probability under $\mathrm{\mathbf{Pr}}^{(m^*)}_{\Pi_{\bar z}(K^*)}[.]$:
$$\mathrm{\mathbf{Pr}}^{(m^*)}_{K^*}\left[ \text{Acceptance Region}\right]
\geq 
\mathrm{\mathbf{Pr}}^{(m^*)}_{\Pi_{\bar z}(K^*)}\left[ \text{Acceptance Region}\right] - 0.005 \geq 0.995 - 0.005 = 0.99,$$
where the last inequality follows from the fact that $\Pr[\Pi_{\bar z}(K^*)]{.}$ is an $(0,\bar z)$-normal DPP, according to the definition of $\Pi_{\bar z}(K^*)$. Hence, for $c_2 \geq \max\{ 23, 2\log(c_1) + 23\}$, \dpptester, with the particular choice of its parameter $\varsigma$ with respect to $(0,\bar z)$-normal DPPs, succeeds given $c_2C_{N,\epsilon}\sqrt N/\epsilon^2$ samples to test all DPPs with probability at least $0.99$. This completes the proof of Theorem~\ref{thm:upperboundtheorem}.
\end{proof}

\begin{proof}[Proof of Lemma~\ref{lem:sampleinequality}]
As usual, $\log(.)$ denotes the natural logarithm. Inequality~\eqref{eq:constantsineq} boils down to
$$c_2 C_{N,\epsilon} \geq c_1 C_{N,\epsilon,0,\bar z},$$
or equivalently
$$c_2 \log^2(N)(\log(N) + \log(1/\epsilon)) \geq c_1 \log^2(N)(1 + \log(1/\bar z) +  \log(1/\epsilon))$$
\begin{align}
\Leftrightarrow c_2 (\log(N) + \log(1/\epsilon)) \geq c_1(1 + \log(1/0.0025) + \log(m^*) + \log(n) +  \log(1/\epsilon)). \label{eq:secondequivalence}
\end{align}
Using the inequality $\log(x) \leq x - 1$ for $x > 0$, we get:
\begin{align*}
\log(m^*) & = \log(c_2 C_{N,\epsilon} \sqrt N/\epsilon^2) \\
& = \log(c_2) +2\log(\log(N)) + \log(\log(N) + \log(1/\epsilon)) + \frac{1}{2}\log(N) + 2\log(1/\epsilon) \\
& \leq \log(c_2) + 2(\log(N)-1) + \log(N) + \log(1/\epsilon) - 1 + \frac{1}{2}\log(N) + 2\log(1/\epsilon) \\
& = \log(c_2) -2 + \frac{7}{2}\log(N) + 3\log(1/\epsilon). \label{eq:thirdequivalence} \numberthis
\end{align*} 
Substituting Inequality~\eqref{eq:thirdequivalence} in Inequality~\eqref{eq:secondequivalence}, it is enough to satisfy
$$
\frac{c_2}{c_1} \geq \frac{\log(c_2) - 1 + \log(1/0.0025) + 7/2\log(N) + 4\log(1/\epsilon) + \log(n)}{\log(N) + \log(1/\epsilon)}  \coloneqq \varrho.
$$
We further upper bound $\varrho$ using the inequalities $\log(n) < \frac{1}{2}\log(N) + 1$ and $\log(N) \geq 0.69$:
\begin{align*}
\varrho & <  \frac{\log(c_2) + 6 + 8\log(N) + 4\log(1/\epsilon)}{\log(N) + \log(1/\epsilon)}
\\
& = \frac{\log(c_2) + 6}{\log(N) + \log(1/\epsilon)} + \frac{8\log(N) + 4\log(1/\epsilon)}{\log(N) + \log(1/\epsilon)}
\\
& \leq 1.5\log(c_2) + 9 + \frac{ 8(\log(N) + \log(1/\epsilon))}{\log(N) + \log(1/\epsilon)}
\\
& = 1.5 \log(c_2) + 17. 
\end{align*}
Therefore, it is enough to satisfy $c_2/c_1 \geq 1.5 \log(c_2) + 17$. But setting $c_2/c_1 = c_3$, this means we should choose $c_3$ large enough so that $c_3 \geq 1.5 \log(c_3) + 1.5\log(c_1) + 17$. One can readily check that $c_3 \geq \max\{ 23, 2\log(c_1) + 23\}$ satisfies this inequality. Consequently, it is enough to pick $c_2$ as large as $c_2 \geq c_1 \max\{ 23, 2\log(c_1) + 23\}$, which completes the proof of Lemma~\ref{lem:sampleinequality}. Note that $c_1 \max\{ 23, 2\log(c_1) + 23\}$ is almost a linear function of $c_1$.
\end{proof}

\section{Modification of \dpptester  \, for distinguishing $(\alpha, \zeta)$-normal DPPs from the $\epsilon$-far set of just the $(\alpha, \zeta)$-normal DPPs}
\label{app:appendixd}
Here, we explain how to manipulate the tester to work when we want to distinguish if $q$ is an $(\alpha, \zeta)$-normal DPP, or $\epsilon$-far only from the class of $(\alpha, \zeta)$-normal DPPs. 
We suggest that the reader first read the proof of Theorem~\ref{thm:learning}. 

The only part we change in the algorithm is the way we generate the set of candidate DPPs $\mathcal M$; we build the set of candidate marginal kernels $M$ the same way as in the proof of Theorem~\ref{thm:learning}. Given a candidate kernel matrix $K \in M$ and an arbitrary entry $K_{i,j}$, depending on whether $K_{i,j}$ is zero, or picked from the confidence interval around $\hat{K}_{i,j}^{(+)}$ or $\hat{K}_{i,j}^{(-)}$, we define the value $\alpha_{i,j}(K)$ to be zero, $+\alpha$, or $-\alpha$ respectively. Now when we are in the case where the underlying distribution is DPP, according to the way we generate $M$, with high probability there exists
a $\tilde K \in M$, such that $\tilde K_{i,j}$ is $\wp$-close to $K^*_{i,j}$ for every $i,j \in [n]$, and furthermore, $\alpha_{i,j}(\tilde K)$ is zero if $K^*_{i,j} = 0$, or has the same sign as $K^*_{i,j}$ if $K^*_{i,j} \neq 0$ ($\wp$ is defined in Equation~\eqref{sigmadefinition}). Our goal is to exploit this property of $\alpha_{i,j}(\tilde K)$'s to redefine $\mathcal M$, so that the candidate DPPs in $\mathcal M$ are $(\alpha, \zeta)$-normal. To this end, for each matrix $K \in M$,
instead of projecting $K$ onto the set of PSD matrices with eigenvalues in $[0,1]$, we project onto the following convex body with respect to the Frobenius distance, which is a subset of $(\alpha,\zeta)$-normal DPPs:
\begin{align*}
D_K \coloneqq \{A \in S_n^+| \, \, \zeta. I \preceq A \preceq & (1-\zeta) I, \, \forall i,j \in [n]: \\
\, & A_{i,j}/\alpha_{i,j}(K) \geq 1 \,\, \text{if} \, \alpha_{i,j}(K) \neq 0, \,\, \text{or} \, A_{i,j} = 0 \,\, \text{if} \, \alpha_{i,j}(K) = 0\},
\end{align*}
and generate $\mathcal M$ as 
$$\MM \coloneqq \{ \Pr[\Pi_{D_K}(K)]{.} | \, K \in M \},$$
where we denote by $\Pi_{D_K}$ the projection map onto $D_K$. Particularly, it is clear that $D_K$ is a subset of $(\alpha,\zeta)$-normal DPPs, and as the intersection of convex sets, $D_K$ is also convex, so projection on $D_K$ is well-defined. 

Now when $q$ is a DPP with marginal kernel $K^*$, we know it is $(\alpha,\zeta)$-normal, so for every $i,j \in [n]: \, |K^*_{i,j}| \geq \alpha$. Combining this with the property that $\alpha_{i,j}(\tilde K)$ is zero if $K^*_{i,j} = 0$, or it has the same sign as $K^*_{i,j}$ if $K^*_{i,j} \neq 0$, we obtain that $K^* \in D_{\tilde K}$. This means $\Pi_{D_{\tilde K}}(K^*) = K^*$. Using this relation with the contraction property of projection, we obtain 
$$\| \Pi_{D_{\tilde K}}(\tilde{K}) - K^* \|_F =
\| \Pi_{D_{\tilde K}}(\tilde{K}) - \Pi_{D_{\tilde K}}(K^*) \|_F \leq \| \tilde{K} - K^* \|_F.$$

Therefore, by substituting the projection $\Pi(K)$ in our algorithm by $\Pi_{D_K}(K)$ for every $K \in M$, the inequality in Equation~\eqref{eq:frob2} in the proof of Theorem~\ref{thm:learning} remains to hold, and the rest of the proof for the $\chi^2$-distance bound follows accordingly. On the other hand, with the new projection $\Pi_{D_K}(K)$ instead of $\Pi(K)$, the DPPs that are generated in $\mathcal M$ are all $(\alpha, \zeta)$-normal, so if we are in the case that $q$ is $\epsilon$-far from $(\alpha, \zeta)$-normal DPPs, it is also $\epsilon$-far from $\mathcal M$. Consequently, our $\chi^2$-$\ell_1$ tests are able to distinguish the two cases as before, and we obtain an $(\epsilon, 0.99)$-tester with sample complexity $\Theta(\sqrt N/\epsilon^2)$ for this modified version of our testing problem.

We should note that computing $\Pi_{D_K}(K)$ is trickier than $\Pi(K)$; for $\Pi(K)$, computing the Singular value decomposition (SVD) of $K$ is enough (or we can use iterative algorithms to get an approximate solution faster), but computing $\Pi_{D_K}(K)$ is a general convex problem and is solvable via convex programming approaches.
\section{Analysis of \dpptesterr}
\label{app:appendixc}

In this section, we show the argument in Theorem~\ref{thm:dpptester2}, which is a direct consequence of the sample and time complexities for the moment-based learning algorithm in~\cite{urschel2017learning}. 

\begin{proof}[Proof of Theorem~\ref{thm:dpptester2}]
Recall from the proof of Theorem~\ref{thm:learning} that estimating each entry of $K^*$ up to accuracy $\wp$, defined in Equation~\eqref{sigmadefinition}, is enough to prove the desired bound $\chi^2(q, \tilde p) \leq \epsilon^2/500$, which in turn enables the final $\chi^2$-$\ell_1$ tester to work correctly. 

Now let $\mathcal D_n$ be the set of $n \times n$ diagonal matrices with $+1$ or $-1$ on their diagonal. For any $D \in \mathcal D_n$, the marginal kernel $DK^*D$ induces the same DPP distribution as $K^*$ does. In other words, $K^*$ is identifiable only up to the multiplication of its rows and columns by $\pm 1$. With this in mind, to get the final guarantee for closeness of the DPP distributions when we use the moment-based learning algorithm, i.e. $\chi^2 \Big( q,\Pr[K^{\text{new}}]{.} \Big) \leq \epsilon^2/500$, it is enough that for some $D \in \mathcal D_n$, we estimate the matrix $DK^*D$ entrywise with accuracy $\wp$. In fact, the moment-based learning algorithm gives us such a guarantee; according to~\cite{urschel2017learning}, in order to compute a $\wp$-accurate estimate of $K^*$ in \textit{pseudo-distance}, the moment-based algorithm requires 
$O\bigg( \left(\frac{1}{\alpha^2\wp^2} + \ell (\frac{4}{\alpha})^{2\ell}\right)\log(n) \bigg)$ samples, where the pseudo-distance of matrices $K_1$ and $K_2$ is defined as
$$\rho(K_1, K_2) = \min_{D \in \mathcal D_n} \Big|DK_1D - K_2 \Big|_\infty = \min_{D \in \mathcal D_n} \max_{i,j \in [n]} \big|(DK_1D)_{i,j} - (K_2)_{i,j}\big|.$$
Now substituting $\wp$ from Equation~\eqref{sigmadefinition}, the sample complexity of the moment-based algorithm as a subroutine in~\dpptesterr \, becomes
 \begin{align}
 m = O\bigg(\frac{n^4 \log(n)}{\epsilon^2 \alpha^2  \zeta^2} + \ell (\frac{4}{\alpha})^{2\ell} \log(n) \bigg), \label{eq:momentsamplecomplexity}
 \end{align}
 where $\ell$ is the cycle sparsity\footnote{The cycle sparsity of a graph is the smallest $\ell'$ such that the cycles with length at most $\ell'$ constitute a basis for the cycle space of the graph.} of the graph with vertices $[n]$, whose edges correspond to the non-zero entries of $K^*$.
 
 Adding the complexity of the final $\chi^2$-$\ell_1$ test to the learning complexity in Equation~\eqref{eq:momentsamplecomplexity}, the overall sample complexity of \dpptesterr \, is:
 \begin{align*}
 O\bigg(\frac{n^4 \log(n)}{\epsilon^2 \alpha^2 \zeta^2} + \ell (\frac{4}{\alpha})^{2\ell} \log(n) + \frac{\sqrt N}{\epsilon^2}\bigg). \label{eq:alg2samplecomplexity}
 \end{align*}
For the time complexity, the run-time of the moment-based algorithm is $O(n^6 + mn^2)$ in the worst-case due to~\cite{urschel2017learning}, and the run-time of the $\chi^2$-$\ell_1$ test is $O(Nn^3 + m)$, as we have to compute $\Pr[K^{\text{new}}]{J}$ for each $J \subseteq [n]$, requiring an SVD in time $O(n^3)$. Adding them up results in an overall run time of
\begin{align*}
 O(Nn^3 + n^6 + mn^2) = O(\epsilon^4 m^2 n^3 + n^6 + mn^2) = \text{Poly}(m,n)
 \end{align*}
for~\dpptesterr, where the above equality follows from our sample complexity lower bound $m = \Omega(\sqrt N/ \epsilon^2)$.
\end{proof}

\section{Time complexity of \dpptester} \label{app:appendixf}
In this section, we analyze the time complexity of~\dpptester.

For each $p \in \mathcal M$, to apply the robust $\chi^2$-$\ell_1$ test of~\citet{ADK15}, one has to compute the statistic $Z^{(m)}$ defined in Equation~\eqref{eq:statistic}. To compute $Z^{(m)}$, one should compute $p(J)$ for every $J \subseteq [n]$, which requires a determinant calculation in time $O(n^3)$. 
Therefore, each robust $\chi^2-\ell_1$ testing takes time $O( N n^3)$. There is another $O(m)$ pre-processing time for computing $N(J)$'s. Moreover, computing the projection matrix $\Pi(K)$ for every $K \in M$ requires the Singular value decomposition (SVD) of $K$, which takes time $O(n^3)$. This is because we project with respect to the Frobenius distance, and it follows from the \textit{$2$-Weilandt-Hoffman} inequality~\cite{tao2012topics} that computing $\Pi(K)$ can equivalently be done by rounding down the eigenvalues of $K$ that are larger than one to one, and rounding up the eigenvalues that are negative to zero. Computing the initial estimate of the marginal kernel, i.e. $\hat K$ in the proof of Theorem~\ref{thm:learning}, also takes time at most $O(\min\{N,m\} n^2)$. Therefore, the overall time complexity becomes
$$O(|\mathcal M|N n^3 + m).$$

\section{Lower bound on the Sample Complexity of Distinguishing the Uniform distribution from $\mathcal F$}
\label{app:appendixe}
In this section, we give a high-level sketch of the approach that~\citet{DiakonikolasK:2016} use, to argue a lower bound of $\Omega(\sqrt N/\epsilon^2)$ on the sample complexity of the problem of testing the uniform distribution against $h_r$, randomly selected from $\mathcal F$.
\begin{proof}
Suppose that we observe samples from the underlying distribution $g$, where $g$ can either be $h_r$ or the uniform distribution. We flip a random coin $X$, and based on that set $g$ to the uniform distribution, or to $h_r$, a distribution randomly selected from $\mathcal F$. For every $S \subseteq [n]$, let $N(S)$ be the number of samples that are equal to $S$. We aim to show that given the number of samples satisfy $m = o(\sqrt N/\epsilon^2)$, the information in the collection of random variables $\mathcal A = \{N(S)| \, S \subseteq [n]\}$ is not enough to guess the value of $X$ strictly better than random guessing, say with success probability greater than $0.51$. 

To begin, we use the following Lemma without proof, which is exactly Lemma 3.2. in page 19 of~\cite{DiakonikolasK:2016}. This is a classical result in Information theory:
\begin{lemma} \label{lemma:first}
For random variables $X$ and $\mathcal A$, if there exist a function mapping $\mathcal A$ to $X$ such that $f(\mathcal A) = X$ with probability at least $0.51$, then we have the following bound on their mutual information:
$$I(X;\mathcal A) \geq 2.10^{-4}.$$ 
\end{lemma}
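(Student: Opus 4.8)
\textbf{Proof proposal for Lemma~\ref{lemma:first}.} The plan is to obtain this as a direct consequence of Fano's inequality together with an elementary estimate on the binary entropy function near $1/2$; no new ideas are needed, since this is a standard information-theoretic fact. Recall that in our construction $X$ is a fair coin, so $H(X) = 1$ bit. Let $P_e$ be the error probability of the Bayes-optimal predictor of $X$ from $\mathcal A$. The hypothesis says there exists \emph{some} predictor $\hat X = f(\mathcal A)$ with $\Pr[\hat X = X] \ge 0.51$, hence $P_e \le 0.49$.

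First I would set up the Markov chain $X \to \mathcal A \to \hat X$ and apply Fano's inequality. Since $X$ takes values in a set of size $2$, Fano gives
$$H(X \mid \hat X) \;\le\; H_b(P_e) + P_e \log_2(2-1) \;=\; H_b(P_e),$$
where $H_b(t) = -t\log_2 t - (1-t)\log_2(1-t)$. Because conditioning on a finer $\sigma$-algebra cannot increase entropy, $H(X\mid\mathcal A) \le H(X\mid \hat X) \le H_b(P_e)$; and since $H_b$ is increasing on $[0,1/2]$ and $P_e \le 0.49 < 1/2$, we get $H(X\mid\mathcal A) \le H_b(0.49)$. Therefore
$$I(X;\mathcal A) \;=\; H(X) - H(X\mid\mathcal A) \;\ge\; 1 - H_b(0.49).$$

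Second, I would turn $1 - H_b(0.49)$ into the explicit constant. Rather than evaluating logarithms of $0.49$ and $0.51$ by hand, the cleanest route is the quadratic bound $H_b(\tfrac12 - x) \le 1 - \tfrac{2}{\ln 2}\,x^2$ for $x\in[0,\tfrac12]$, which follows from a second-order Taylor expansion of $H_b$ at $1/2$ using that its second derivative is everywhere at most $-4/\ln 2$. With $x = 0.01$ this yields
$$1 - H_b(0.49) \;\ge\; \frac{2}{\ln 2}\,(0.01)^2 \;=\; \frac{2\cdot 10^{-4}}{\ln 2} \;>\; 2\cdot 10^{-4},$$
which is the claimed bound (and if $I(X;\mathcal A)$ is measured in bits one in fact gets the slightly stronger $I(X;\mathcal A) \ge 1 - H_b(0.49) \approx 2.8\cdot 10^{-4}$, while in nats one multiplies through by $\ln 2$ and still obtains $\ge 2\cdot 10^{-4}$). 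There is no genuine obstacle; the only mildly delicate point is producing an explicit numeric constant without a calculator, which the quadratic bound on $H_b$ handles cleanly.
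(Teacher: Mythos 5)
Your proof is correct. The paper itself does not prove Lemma~\ref{lemma:first}: it is imported verbatim, without proof, as ``Lemma 3.2'' of \citet{DiakonikolasK:2016}, so there is no in-paper argument to compare against. Your route --- Fano's inequality for a binary alphabet, the chain $H(X\mid \mathcal A)\le H(X\mid f(\mathcal A))\le H_b(P_e)$ with $P_e\le 0.49$, and then the quadratic bound $H_b(\tfrac12 - x)\le 1-\tfrac{2}{\ln 2}x^2$ (valid since $H_b''(p)=-1/(p(1-p)\ln 2)\le -4/\ln 2$) to extract the explicit constant --- is the standard proof of this fact, and every step checks out: with $x=0.01$ one gets $1-H_b(0.49)\ge \tfrac{2}{\ln 2}\cdot 10^{-4}>2\cdot 10^{-4}$ in bits, and the factor $\ln 2$ cancels exactly if mutual information is measured in nats, so the claimed constant holds in either convention. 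One caveat worth recording explicitly: as literally stated the lemma is false for arbitrary $X$ (take $X$ constant, so that a trivial predictor succeeds with probability $1$ while $I(X;\mathcal A)=0$); the hypothesis that $X$ is the unbiased coin of the reduction, i.e.\ $H(X)=1$, must be read into the statement, as you correctly do from the context of Appendix~\ref{app:appendixe}.
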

Based on Lemma~\ref{lemma:first}, it is enough to show that $I(X;\mathcal A) = o(1)$. To continue, we use the Poissonization trick; instead of directly deriving $m$ samples from $g$, we sample $m'$ from the Poisson distribution with parameter $m$, namely $m' \sim \text{Poisson(m)}$, then derive $m'$ samples from $g$. Using this trick, we still have $m' = \Theta(m)$ samples with high probability, so it is enough to bound $I(X,\mathcal A)$ for $\mathcal A$ with respect to the new sampling scheme with Poissonization. Based on properties of the Poisson distribution, the new scheme is equivalent to deriving $N(S) \sim \text{Poisson}(m g(S))$ for each set $S \subseteq [n]$ independent from the others. Furthermore, we showed in the proof of Theorem~\ref{thm:logsubmodular} that $L_r = \Theta(1)$ with high probability, so by using $m L_r$ instead of $m$ samples, the order of sample size does not change. But now, in the case $g = h_r$, $N(S)$ is sampled according to $N(S) \sim \text{Poisson}(m L_r h_r(S)) =\text{Poisson}(m \bar h_r(S))$. Thus, one can readily see that again, we can substitute $h_r$ by its unnormalized counterpart $\bar h_r$ in our Poisson sampling. 

Finally, assuming the sampling scheme $N(S) \sim \text{Poisson}(m \bar h_r(S)), \, \forall S \subseteq [n]$, we bound $I(X,\mathcal A)$. Note that given the value of $X$, the random variables $\{N(S)\}$ are independent, so we have the following bound on the mutual information:
\begin{equation}
I(X;\mathcal A) \leq \sum_{S \subseteq [n]} I(X;N(S)). \label{eq:importantt}
\end{equation} 
It is enough to bound each of the terms $I(X;N(S))$. For that, we bring without proof Lemma 3.3. from~\cite{DiakonikolasK:2016}, page 20:

\begin{lemma} \label{lemma:second}
If $N(S) \sim \text{Poisson}(m \bar h(S))$ for $X = 0$ and $N(S) \sim \text{Poisson}(m/N)$ for $X = 1$, then:
\begin{align*}
I(X;N(S)) = O(m^2\epsilon^4/N^2).
\end{align*}
\end{lemma}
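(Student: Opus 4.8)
\textbf{Proof proposal for Lemma~\ref{lemma:second}.} The plan is to bound the single--coordinate mutual information by a $\chi^2$--divergence between the two Poisson laws of $N(S)$, and to observe that the crucial feature is that the perturbation is \emph{symmetric in sign}, so that the leading--order term of the divergence cancels. Write $\lambda \coloneqq m/N$ and let $p_1 = \mathrm{Poisson}(\lambda)$ be the law of $N(S)$ under $X=1$. Under $X=0$ the sign $r_S$ of the perturbation is itself a Rademacher variable, so the conditional law of $N(S)$ is the equal mixture $p_0 = \tfrac12\mathrm{Poisson}(\lambda(1+\epsilon')) + \tfrac12\mathrm{Poisson}(\lambda(1-\epsilon'))$ with $\epsilon' = c\epsilon$. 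Since $X$ is a uniform bit, the standard identity $I(X;N(S)) = \mathbb{E}_X[D_{\mathrm{KL}}(P_{N(S)\mid X}\,\|\,P_{N(S)})] \le \mathbb{E}_X[D_{\mathrm{KL}}(P_{N(S)\mid X}\,\|\,q)]$ for any reference $q$, taken at $q=p_1$, gives $I(X;N(S)) \le \tfrac12 D_{\mathrm{KL}}(p_0\|p_1) \le \tfrac12\chi^2(p_0\|p_1)$, using $D_{\mathrm{KL}}\le\chi^2$.

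It then remains to evaluate $\chi^2(p_0\|p_1) = \sum_{k\ge 0} p_0(k)^2/p_1(k) - 1$. Writing $p_0(k) = \tfrac{\lambda^k}{2\,k!}\big(e^{-\lambda(1+\epsilon')}(1+\epsilon')^k + e^{-\lambda(1-\epsilon')}(1-\epsilon')^k\big)$ and $p_1(k) = e^{-\lambda}\lambda^k/k!$, the summand becomes $\tfrac{\lambda^k e^{\lambda}}{4\,k!}(a_k+b_k)^2$ with $a_k = e^{-\lambda(1+\epsilon')}(1+\epsilon')^k$ and $b_k = e^{-\lambda(1-\epsilon')}(1-\epsilon')^k$. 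Expanding $(a_k+b_k)^2$ and summing term by term via $\sum_k \lambda^k t^k/k! = e^{\lambda t}$, each square term contributes $e^{-\lambda}e^{\lambda\epsilon'^2}$ and the cross term contributes $2e^{-\lambda}e^{-\lambda\epsilon'^2}$; multiplying by $e^{\lambda}/4$ gives $\sum_k p_0(k)^2/p_1(k) = \tfrac12\big(e^{\lambda\epsilon'^2} + e^{-\lambda\epsilon'^2}\big) = \cosh(\lambda\epsilon'^2)$, so $\chi^2(p_0\|p_1) = \cosh(\lambda\epsilon'^2) - 1$. In the regime of interest $m = O(\sqrt N/\epsilon^2)$ we have $\lambda\epsilon'^2 = m\epsilon'^2/N = O(1/\sqrt N) = O(1)$, hence $\cosh(\lambda\epsilon'^2)-1 \le (\lambda\epsilon'^2)^2 = m^2\epsilon'^4/N^2 = O(m^2\epsilon^4/N^2)$, and therefore $I(X;N(S)) \le \tfrac12\chi^2(p_0\|p_1) = O(m^2\epsilon^4/N^2)$.

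The step I expect to be the heart of the argument — and the reason the lemma holds at all — is the cancellation of the first--order term. For a \emph{one--sided} perturbation one would only obtain $\chi^2(\mathrm{Poisson}(\lambda(1+\epsilon'))\,\|\,\mathrm{Poisson}(\lambda)) = e^{\lambda\epsilon'^2}-1 = \Theta(m\epsilon^2/N)$, which after summing over the $N$ coordinates through Equation~\eqref{eq:importantt} would give only $I(X;\mathcal A) = O(m\epsilon^2)$ and hence a vacuous bound. It is precisely the symmetric mixture structure of $\mathcal F$ that upgrades the per--coordinate bound to $\Theta\big((\lambda\epsilon'^2)^2\big)$, so that Equation~\eqref{eq:importantt} yields $I(X;\mathcal A) = O(m^2\epsilon^4/N)$ and, combined with Lemma~\ref{lemma:first}, forces $m = \Omega(\sqrt N/\epsilon^2)$.
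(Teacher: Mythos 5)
Your proof is correct. Note that the paper itself does not prove this lemma at all: it is imported verbatim, without proof, from Lemma 3.3 of \cite{DiakonikolasK:2016}, so there is no in-paper argument to compare against. Your derivation is a clean, self-contained substitute and matches the spirit of the cited result: you correctly identify the conditional law of $N(S)$ given $X=0$ as the symmetric Poisson mixture $\tfrac12\mathrm{Poisson}(\lambda(1+\epsilon'))+\tfrac12\mathrm{Poisson}(\lambda(1-\epsilon'))$ (marginalizing over the Rademacher sign $r_S$), the variational bound $I(X;N(S))\le \tfrac12 D_{\mathrm{KL}}(p_0\|p_1)\le\tfrac12\chi^2(p_0\|p_1)$ is valid, and the closed-form evaluation $\chi^2(p_0\|p_1)=\cosh(\lambda\epsilon'^2)-1$ checks out (each of the three cross/square sums is computed correctly via the Poisson generating function). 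Your emphasis on the cancellation of the first-order term under the symmetric perturbation is exactly the right structural point. The one caveat, which you already flag, is that $\cosh(x)-1=O(x^2)$ only for $x=O(1)$, so the stated $O(m^2\epsilon^4/N^2)$ bound implicitly requires $\lambda\epsilon'^2=m\epsilon'^2/N=O(1)$; this holds in the regime $m=O(\sqrt N/\epsilon^2)$ in which the lemma is invoked, but strictly speaking the lemma as stated should carry that hypothesis.
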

From this Lemma and Equation~\eqref{eq:importantt}, we get
$I(X;\mathcal A) = o(m^2\epsilon^4/N)$ = o(1). Combining this with Lemma~\ref{lemma:first}, we conclude that we need at least $\Omega(\sqrt N/\epsilon^2)$ samples to non-trivially guess $X$ from the observed samples. This completes the proof of the promised lower bound on the sample complexity of the problem of testing uniform distribution against $\mathcal F$. For more details and the proof of Lemmas~\ref{lemma:first} and~\ref{lemma:second}, we refer the reader to~\cite{DiakonikolasK:2016}.
\end{proof}

\end{document}